\let\hat\widehat
\let\tilde\widetilde
\def\given{{\,|\,}}
\def\biggiven{\,\big{|}\,}
\newcommand{\BB}{\mathbb{B}}
\newcommand{\hB}{\hat{\mathbb{B}}}
\newcommand{\BlackBox}{\rule{1.5ex}{1.5ex}}  % end of proof
\def\QED{~\rule[-1pt]{5pt}{5pt}\par\medskip}
\newenvironment{proof}{\par\noindent{\bf Proof\ }}{\hfill\BlackBox\\[2mm]}
\newtheorem{theorem}{Theorem}
\newtheorem{lemma}[theorem]{Lemma}
\newtheorem{proposition}[theorem]{Proposition}
\newtheorem{remark}[theorem]{Remark}
\newtheorem{corollary}[theorem]{Corollary}
\newtheorem{definition}[theorem]{Definition}
\newtheorem{assumption}[theorem]{Assumption}
\newcommand{\la}{\langle}
\newcommand{\ra}{\rangle}
\newcommand{\hatpistar}{\hat{\pi}^*}
\newcommand{\actions}{\mathcal{A}}
\newcommand{\E}[2]{\mathbb{E}_{#1} \left[#2\right]}
\newcommand*{\rom}[1]{\expandafter\@slowromancap\romannumeral #1@}
\title{Provable Zero-Shot Generalization in Offline Reinforcement Learning}
\author{%
  Zhiyong Wang \thanks{The Chinese University of Hong Kong; e-mail: {\tt zhiyongwangwzy@gmail.com}}~~
  Chen Yang \thanks{Indiana University Bloomington; e-mail: {\tt cya2@iu.edu}}~~
  John C.S. Lui \thanks{The Chinese University of Hong Kong; e-mail: {\tt cslui@cse.cuhk.edu.hk}}~~
	Dongruo Zhou \thanks{Indiana University Bloomington; e-mail: {\tt dz13@iu.edu}} 
  % examples of more authors
  % \And
  % Coauthor \\
  % Affiliation \\
  % Address \\
  % \texttt{email} \\
  % \AND
  % Coauthor \\
  % Affiliation \\
  % Address \\
  % \texttt{email} \\
  % \And
  % Coauthor \\
  % Affiliation \\
  % Address \\
  % \texttt{email} \\
  % \And
  % Coauthor \\
  % Affiliation \\
  % Address \\
  % \texttt{email} \\
}
\begin{document}

\date{}
\maketitle

\begin{abstract}
In this work, we study offline reinforcement learning (RL) with zero-shot generalization property (ZSG), where the agent has access to an offline dataset including experiences from different environments, and the goal of the agent is to train a policy over the training environments which performs well on test environments without further interaction. Existing work showed that classical offline RL fails to generalize to new, unseen environments. We propose pessimistic empirical risk minimization (PERM) and pessimistic proximal policy optimization (PPPO), which leverage pessimistic policy evaluation to guide policy learning and enhance generalization. We show that both PERM and PPPO are capable of finding a near-optimal policy with ZSG. Our result serves as a first step in understanding the foundation of the generalization phenomenon in offline reinforcement learning.
\end{abstract}

\section{Introduction}

Offline reinforcement learning (RL) has become increasingly significant in modern RL because it eliminates the need for direct interaction between the agent and the environment; instead, it relies solely on learning from an offline training dataset. However, in practical applications, the offline training dataset often originates from a different environment than the one of interest. This discrepancy necessitates evaluating RL agents in a generalization setting, where the training involves a finite number of environments drawn from a specific distribution, and the testing is conducted on a distinct set of environments from the same or different distribution. This scenario is commonly referred to as the zero-shot generalization (ZSG) challenge which has been studied in online RL\citep{Rajeswaran2017TowardsGA, Machado2018RevisitingTA, Justesen2018IlluminatingGI, Packer2018AssessingGI, Zhang2018ADO, Zhang2018ASO}, as the agent receives no training data from the environments it is tested on.

A number of recent empirical studies \citep{mediratta2023generalization, yang2023essential,mazoure2022improving} have recognized this challenge and introduced various offline RL methodologies that are capable of ZSG. Notwithstanding the lack of theoretical backing, these methods are somewhat restrictive; for instance, some are only effective for environments that vary solely in observations\citep{mazoure2022improving}, while others are confined to the realm of imitation learning\citep{yang2023essential}, thus limiting their applicability to a comprehensive framework of offline RL with ZSG capabilities. Concurrently, theoretical advancements \citep{bose2024offline,ishfaq2024offline} in this domain have explored multi-task offline RL by focusing on representation learning. These approaches endeavor to derive a low-rank representation of states and actions, which inherently requires additional interactions with the downstream tasks to effectively formulate policies based on these representations. Therefore, we raise a natural question:
\begin{center}
    \emph{Can we design provable offline RL with zero-shot generalization ability?}
\end{center}

% \begin{center}
%     \emph{What is essential for offline RL to achieve zero-shot generalization ability?}
% \end{center}

We propose novel offline RL frameworks that achieve ZSG to address this question affirmatively. Our contributions are listed as follows.
\begin{itemize}[leftmargin = *]
   \item We first analyze when existing offline RL approaches fail to generalize without further algorithm modifications. Specifically, we prove that if the offline dataset does not contain context information, then it is impossible for vanilla RL that equips a Markovian policy to achieve a ZSG property. We show that the offline dataset from a contextual Markov Decision Process (MDP) is not distinguishable from a vanilla MDP which is the average of contextual Markov Decision Process over all contexts. Such an analysis verifies the necessity of new RL methods with ZSG property. 
    \item We propose two meta-algorithms called pessimistic empirical risk minimization (PERM) and pessimistic proximal policy optimization (PPPO) that enable ZSG for offline RL \citep{jin2021pessimism}. In detail, both of our algorithms take a pessimistic policy evaluation (PPE) oracle as its component and output policies based on offline datasets from multiple environments. Our result shows that the sub-optimalities of the output policies are bounded by both the supervised learning error, which is controlled by the number of different environments, and the reinforcement learning error, which is controlled by the coverage of the offline dataset to the optimal policy. Please refer to Table \ref{tab:example}
for a summary of our results. To the best of our knowledge, our proposed algorithms are the first offline RL methods that provably enjoy the ZSG property.

    %we further specify our analysis to a more concrete setting called linear MDPs \citep{yang2019sample, jin2019provably}. We show that under the proper coverage assumptions made on the offline dataset distribution, both of our algorithms enjoy the suboptimality gap $O(n^{-1/2} + K^{-1/2}\cdot C_n^*)$, where $n$ is the number of environments in offline dataset, $K$ is the number of trajectories for each environment and $C_n^*$ is a mixed coverage parameter over $n$ environments. \footnote{Here we only include $n,K, C_n^*$ in our bound for simplicity.}. Our result generalizes the convergence result for offline RL in single-environment \citep{jin2021pessimism}. 
\end{itemize}

\begin{table*}[t!]
\centering
\caption{Summary of our algorithms and their suboptimality gaps, where $\cA$ is the action space, $H$ is the length of episode, $n$ is the number of environments in the offline dataset. Note that in the multi-environment setting, $\pi^*$ is the near-optimal policy w.r.t. expectation (defined in Section \ref{sec:setting}). $\mathcal{N}$ is the covering number of the policy space $\Pi$ w.r.t. distance $\mathrm d(\pi^1,\pi^2) = \max_{s\in \mathcal{S}, h \in [H]} \|\pi^1_h(\cdot|s) - \pi^2_h (\cdot|s)\|_{1}$. The uncertainty quantifier $\Gamma_{i,h}$ are tailored with the oracle return in the corresponding algorithms (details are in Section \ref{sec:withcontext}). }
\label{tab:example}

\begin{tabular}{|c|c|}
\hline
\textbf{Algorithm} & \textbf{Suboptimality Gap}\\
\hline
\makecell{PERM (our Algo.\ref{alg:erm})} & 
\makecell{
  $\sqrt{\log(\mathcal{N})/n} + n^{-1}\sum_{i=1}^n \sum_{h=1}^H$
  $\EE_{i,\pi^*}\big[\Gamma_{i,h}(s_h,a_h)\,\big\vert\, s_1=x_1\big]$
}
\\
\hline
\makecell{PPPO (our Algo.\ref{alg:modelfree})} &
\makecell{
  $\sqrt{\log|\actions|\,H^2/n} + n^{-1}\sum_{i=1}^n \sum_{h=1}^H$
  $\EE_{i,\pi^*}\big[\Gamma_{i,h}(s_h,a_h)\,\big\vert\, s_1=x_1\big]$
}
\\
\hline
\end{tabular}
\end{table*}

% \vspace{-0.3cm}
\noindent\textbf{Notation} 
We use lower case letters to denote scalars, and use lower and upper case bold face letters to denote vectors and matrices respectively. We denote by $[n]$ the set $\{1,\dots, n\}$. For a vector $\xb\in \RR^d$ and a positive semi-definite matrix $\bSigma\in \RR^{d\times d}$, we denote by $\|\xb\|_2$ the vector's Euclidean norm and define $\|\xb\|_{\bSigma}=\sqrt{\xb^\top\bSigma\xb}$. For two positive sequences $\{a_n\}$ and $\{b_n\}$ with $n=1,2,\dots$, 
    we write $a_n=O(b_n)$ if there exists an absolute constant $C>0$ such that $a_n\leq Cb_n$ holds for all $n\ge 1$ and write $a_n=\Omega(b_n)$ if there exists an absolute constant $C>0$ such that $a_n\geq Cb_n$ holds for all $n\ge 1$. We use $\tilde O(\cdot)$ to further hide the polylogarithmic factors. 
We use $(x_i)_{i=1}^n$ to denote sequence $(x_1, ..., x_n)$, and we use $\{x_i\}_{i=1}^n$ to denote the set $\{x_1, ...,x_n\}$. We use $\text{KL}(p\|q)$ to denote the KL distance between distributions $p$ and $q$, defined as $\int p\log(p/q)$. We use $\EE[x],\mathbb{V}[x] $ to denote expectation and variance of a random variable $x$.

The remaining parts are organized as follows. In Section \ref{related}, we discuss related works. In Section \ref{sec:setting}, we introduce the setting of our work. In Section \ref{sec:nocontext}, we analyze when existing offline RL approaches \citep{jin2021pessimism} fail to generalize without further algorithm modifications. In Section \ref{sec:withcontext}, we introduce our proposed meta-algorithms and provide their theoretical guarantees. In Section \ref{sec:linear}, we specify our meta-algorithms and analysis to a more concrete linear MDP setting. Finally, in Section \ref{sec: conclusion}, we conclude our work and propose some future directions.

\section{Related works}\label{related}

\noindent\textbf{Offline RL}
Offline reinforcement learning (RL) \citep{ernst2005tree,riedmiller2005neural,lange2012batch,levine2020offline} addresses the challenge of learning a policy from a pre-collected dataset without direct online interactions with the environment. A central issue in offline RL is the inadequate dataset coverage, stemming from a lack of exploration \citep{levine2020offline,liu2020provably}. A common strategy to address this issue is the application of the pessimism principle, which penalizes the estimated value of under-covered state-action pairs. Numerous studies have integrated pessimism into various single-environment offline RL methodologies. This includes model-based approaches \citep{rashidinejad2021bridging,uehara2021pessimistic,jin2021pessimism,yu2020mopo,xie2021policy,uehara2021representation,yin2022near}, model-free techniques \citep{kumar2020conservative,wu2021uncertainty,bai2022pessimistic,ghasemipour2022so,yan2023efficacy}, and policy-based strategies \citep{rezaeifar2022offline,xie2021bellman,zanette2021provable,nguyen2024sample}. \citep{yarats2022don} has observed that with sufficient offline data diversity and coverage, the need for pessimism to mitigate extrapolation errors and distribution shift might be reduced. To the best of our knowledge, we are the first to theoretically study the generalization ability of offline RL in the contextual MDP setting. 

\noindent\textbf{Generalization in online RL} There are extensive empirical studies on training online RL agents that can generalize to new transition and reward functions~\citep{Rajeswaran2017TowardsGA, Machado2018RevisitingTA, Justesen2018IlluminatingGI, Packer2018AssessingGI, Zhang2018ADO, Zhang2018ASO, Nichol2018GottaLF, cobbe2018quantifying,  kuettler2020nethack,  Bengio2020InterferenceAG, Bertrn2020InstanceBG, ghosh2021generalization,  kirk2021generalisation, obstacletower, ajay2021understanding, samvelyan2021minihack, frans2022powderworld, albrecht2022avalon, ehrenberg2022study, Song2020ObservationalOI,lyle2022learning,ye2020rotation, lee2020network,jiang2022uncertainty}. They  use techniques including implicit regularization \citep{Song2020ObservationalOI}, data augmentation \cite{ye2020rotation, lee2020network}, uncertainty-driven exploration~\citep{jiang2022uncertainty}, successor feature \citep{touati2023does}, etc. These works focus mostly on the online RL setting and do not provide theoretical guarantees, thus differing a lot from ours. Moreover, \cite{touati2023does} has studied zero-shot generalization in offline RL, but to unseen reward functions rather than unseen environments. 

% Though there are also some recent works on the generalization of offline RL \cite{mazoure2022improving,mediratta2023generalization,yang2023essential}, they are all pure empirical works, and have no theoretical guarantees.

There are also some recent works aimed at understanding online RL generalization from a theoretical perspective. \citet{wang2019generalization} examined a specific class of reparameterizable RL problems and derived generalization bounds using Rademacher complexity and the PAC-Bayes bound. \citet{malik2021generalizable} established lower bounds and introduced efficient algorithms that ensure a near-optimal policy for deterministic MDPs. A recent work \cite{ye2023power} studied how much pre-training can improve online RL test performance under different generalization settings. To the best of our knowledge, no previous work exists on theoretical understanding of the zero-shot generalization of offline RL.

Our paper is also related to recent works studying multi-task learning in reinforcement learning (RL)~\citep{brunskill2013sample,tirinzoni2020sequential,hu2021near,zhang2021provably,lu2021power,bose2024offline,ishfaq2024offline,zhang2023provably,lu2025towards}, which focus on transferring the knowledge learned from upstream tasks to downstream ones.  Additionally, these works typically assume that all tasks share similar transition dynamics or common representations while we do not. Meanwhile, they typically require the agent to interact with the downstream tasks, which does not fall into the ZSG regime.

\section{Preliminaries}\label{sec:setting}

\noindent\textbf{Contextual MDP} We study \emph{contextual episodic MDPs}, where each MDP $\cM_c$ is associated with a context $c \in C$ belongs to the context space $C$. Furthermore, $\cM_c = \{M_{c,h}\}_{h=1}^H$ consists of $H$ different individual MDPs, where each individual MDP $M_{c,h}:=(\cS, \cA, P_{c,h}(s'|s,a), r_{c,h}(s,a))$. Here $\cS$ denotes the state space, $\cA$ denotes the action space, $P_{c,h}$ denotes the transition function and $r_{c,h}$ denotes the reward function at stage $h$. We assume the starting state for each $\cM_c$ is the same state $x_1$. In this work, we interchangeablely use ``environment" or MDP to denote the MDP $\cM_c$ with different contexts.

\noindent\textbf{Policy and value function}
We denote the policy $\pi_h$ at stage $h$ as a mapping $\cS \rightarrow \Delta(\cA)$, which maps the current state to a distribution over the action space. We use $\pi = \{\pi_h\}_{h=1}^H$ to denote their collection. Then for any episodic MDP $\cM$, we define the value function for some policy $\pi$ as
\begin{small}
       \begin{align}
    &V_{\cM,h}^{\pi}(x):=\EE[ r_h+...+r_H|s_h = x, a_{h'}\sim \pi_{h'}, r_{h'}\sim r_{h'}(s_{h'}, a_{h'}), s_{h'+1}\sim P_{h'}(\cdot|s_{h'}, a_{h'}),~h'\geq h]\,,\notag\\
     &Q_{M,h}^{\pi}(x,a):=\EE[r_h+...+r_H|s_h = x,a_h = a, r_h\sim r_h(s_h,a_h), s_{h'}\sim P_{h'-1}(\cdot|s_{h'-1}, a_{h'-1}),  a_{h'}\sim \pi_{h'}, \notag\\
     &\quad r_{h'}\sim r_{h'}(s_{h'}, a_{h'}),~h'\geq h+1].\notag
\end{align} 
\end{small}
For any individual MDP $M$ with reward $r$ and transition dynamic $P$, we denote its Bellman operator $[\BB_{M}f](x,a)$ as $[\BB_{M}f](s,a):=\EE[r_h(s,a) + f(s')|s'\sim P(\cdot|s,a)]$. Then we have the well-known Bellman equation
\begin{small}
    \begin{align}
    &V_{\cM,h}^{\pi}(x)\notag = \la Q_{\cM,h}^\pi(x, \cdot), \pi_h(\cdot|x)\ra_{\cA},\ Q_{\cM,h}^{\pi}(x,a) = [\BB_{M_h} V_{\cM,h+1}^{\pi}](x,a).\notag
\end{align}
\end{small}

For simplicity, we use $V_{c,h}^\pi, Q_{c,h}^\pi, \BB_{c,h}$ to denote $V_{\cM_c,h}^\pi, Q_{\cM_c,h}^\pi, \BB_{M_{c,h}}$. We also use $\PP_c$ to denote $\PP_{\cM_c}$, the joint distribution of any potential objects under the $\cM_c$ episodic MDP. We would like to find the near-optimal policy $\pi^{*}$ w.r.t. expectation, i.e., $\pi^*:=\argmax_{\pi \in \Pi}\EE_{c\sim C}V_{c,1}^\pi(x_c)$, where $\Pi$ is the set of collection of Markovian policies, and with a little abuse of notation, we use $\EE_{c\sim C}$ to denote the expectation taken w.r.t. the i.i.d. sampling of context $c$ from the context space. Then our goal is to develop the \emph{generalizable RL} with small \emph{zero-shot generalization gap (ZSG gap)}, defined as follows:
\begin{small}
    \begin{align}
    \text{SubOpt}(\pi):=\EE_{c\sim C}\big[V_{c,1}^{\pi^*}(x_1)\big] - \EE_{c\sim C}\big[V_{c,1}^\pi(x_1)\big].\notag
\end{align}
\end{small}

\begin{remark}
We briefly compare generalizable RL with several related settings. Robust RL \citep{pinto2017robust} aims to find the best policy for the worst-case environment, whereas generalizable RL seeks a policy that performs well in the average-case environment. Meta-RL \citep{beck2023survey} enables few-shot adaptation to new environments, either through policy updates \citep{finn2017model} or via history-dependent policies \citep{duan2016rl}. In contrast, generalizable RL primarily focuses on the zero-shot setting. In the general POMDP framework \citep{cassandra1994acting}, agents need to maintain history-dependent policies to implicitly infer environment information, while generalizable RL aims to discover a single state-dependent policy that generalizes well across all environments.
\end{remark}

\begin{remark}
\citet{ye2023power} showed that in online RL, for a certain family of contextual MDPs, it is inherently impossible to determine an optimal policy for each individual MDP. Given that offline RL poses greater challenges than its online counterpart, this impossibility extends to finding optimal policies for each MDP in a zero-shot offline RL setting as well, which justifies our optimization objective on the ZSG gap. Moreover, \citet{ye2023power} showed that the few-shot RL is able to find the optimal policy for individual MDPs. Clearly, such a setting is stronger than ours, and the additional interactions are often hard to be satisfied in real-world practice. We leave the study of such a setting for future work.   
\end{remark}

\noindent\textbf{Offline RL data collection process}
The data collection process is as follows. An experimenter i.i.d. samples number $n$ of contextual episodic MDP $M_i$ from the context set (\emph{e.g.}, $i\sim C)$. For each episodic MDP $M_i$, the experimenter collects dataset $\cD_i:=\{(x_{i,h}^\tau, a_{i,h}^\tau, r_{i,h}^\tau)_{h=1}^H\}_{\tau=1}^{K}$ which includes $K$ trajectories. Note that the action $a_{i,h}^\tau$ selected by the experimenter can be arbitrary, and it does not need to follow a specific behavior policy \citep{jin2021pessimism}. 
% We assume that we have an access to an offline dataset $\cD := \{\cD_{i}\}_{i=1}^{n}$, where each 
We assume that $\cD_{i}$ is compliant with the episodic MDP $\cM_{i}$, which is defined as follows. 

\begin{definition}[\citep{jin2021pessimism}]\label{def:com}
    % For an offline dataset $\hat \cD = \{(s,a,s',r)\}\subseteq \cS \times \cA \times \cS \times [0,1]$, we say it is compliant with an individual MDP $M$ if 
    % \begin{align}
    %     \PP_{\hat \cD}(r = r', s' = x'|s = x,a = a) = \PP(r'\sim r(x,a), s'\sim\cP(\cdot|x,a)).\notag
    % \end{align}
    For a dataset $\cD_i:=\{(x_{i,h}^\tau, a_{i,h}^\tau, r_{i,h}^\tau)_{h=1}^H\}_{\tau=1}^{K}$, let $\PP_{\cD_i}$ be the joint distribution of the data collecting process. We say $\cD_{i}$ is compliant with episodic MDP $\cM_i$ if for any $x'\in \cS, r', \tau\in[K],h\in[H]$, we have
    \begin{align}
        &\PP_{\cD_i}(r_{i,h}^\tau = r', x_{i,h+1}^\tau = x'|\{(x_{i,h}^j, a_{i,h}^j)\}_{j=1}^\tau,\{(r_{i,h}^j, x_{i,h+1}^j)\}_{j=1}^{\tau-1}) \notag \\
        &\quad = \PP_{i}(r_{i,h}(s_h,a_h)=r',s_{h+1} = x'|s_h = x_h^\tau, a_h = a_h^\tau).\notag
    \end{align}
\end{definition}
% \begin{remark}
%     Existing work about offline RL \citep{ bose2024offline} assumed that the offline dataset is generated by adapting trajectory-type data $(x_{i,1}^\tau, a_{i,1}^\tau, r_{i,1}^\tau, x_{i,2}^\tau, a_{i,2}^\tau, r_{i,2}^\tau,...)_{\tau}$ and setting dataset $\cD_{i,h}$ as the collection of stage $h$ data, i.e., $\cD_{i,h}:=(x_{i,h}^\tau, a_{i,h}^\tau, r_{i,h}^\tau, x_{i,h+1}^\tau)_\tau$. 
% \end{remark}

% Note that there are different types of assumptions made on $\cD_{i,h}$, e.g., $\cD_i$ follows a joint distribution of trajectories \citep{jin2021pessimism}. We adapt the independent offline dataset assumption for the simplicity of presentation, and we will consider the other settings in the future. 

In general, we claim $\cD_{i}$ is compliant with $\cM_{i}$ when the conditional distribution of any tuple of reward and next state in $\cD_i$ follows the conditional distribution determined by MDP $\cM_i$.

\section{Offline RL without context indicator information}\label{sec:nocontext}

%In Section \ref{sec:withcontext} and \ref{sec:linear}, we have proposed provable offline RL with small generalization gaps. Both PERM and PPPO require the agent to maintain policies/models for different MDPs separately. One may ask whether directly applying existing offline RL algorithms over datasets from multiple environments \emph{without} maintaining their identity information can yield the same ZSG property. 

In this section, we show that directly applying existing offline RL algorithms over datasets from multiple environments \emph{without} maintaining their identity information cannot yield a sufficient ZSG property, which is aligned with the existing observation of the poor generalization performance of offline RL \citep{mediratta2023generalization}. 

%\noindent\textbf{Numerical experiments} To show that the context information is important to ZSG property, we conduct numerical experiments on Combination Lock environment \citep{bose2024offline} to compare the performance of Pessimistic Value Iteration (PEVI) (\cite{jin2021pessimism}), a representative algorithm for offline RL, and PPPO. Due to the space limit, the detailed description of experiments and results are deferred to Appendix \ref{app:exp}. Our results show that PPPO performs better, which backs up our theoretical claim.

In detail, given contextual MDPs $\cM_1,...,\cM_n$ and their corresponding offline datasets $\cD_1, ..., \cD_n$, we assume the agent only has the access to the offline dataset $\bar\cD = \cup_{i=1}^n \cD_i$, where $\bar\cD = \{(x_{c_\tau, h}^\tau, a_{c_\tau, h}^\tau, r_{c_\tau, h}^\tau)_{h=1}^H\}_{\tau = 1}^{K}.$ Here $c_\tau \in C$ is the context information of trajectory $\tau$, which is \emph{unknown} to the agent. To explain why offline RL without knowing context information performs worse, we have the following proposition suggesting the offline dataset from multiple MDPs is not distinguishable from an ``average MDP" if the offline dataset does not contain context information.
\begin{proposition}\label{thm: nodistinguish}
    $\bar\cD$ is compliant with \emph{average MDP} $\bar\cM:=\{\bar M_h\}_{h=1}^H$, $\bar{M}_h:=\big(\cS,\cA,H,\bar P_h,\bar r_h\big)$,
\begin{align}    
&\bar P_h(x'|x,a)
        :=\EE_{c \sim C} \frac{P_{c,h}(x'|x,a) \mu_{c,h}(x, a)}{\EE_{c \sim C} \mu_{c,h}(x, a)},\ \PP(\bar r_h = r|x,a) := \EE_{c \sim C} \frac{\PP(\bar r_{c,h} = r|x,a) \mu_{c,h}(x, a)}{\EE_{c \sim C} \mu_{c,h}(x, a)},\notag
\end{align}
where $\mu_{c,h}(\cdot, \cdot)$ is the data collection distribution of $(s,a)$ at stage $h$ in dataset $\cD_c$. 
\end{proposition}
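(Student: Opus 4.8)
The plan is to verify directly that the merged dataset $\bar\cD$ satisfies Definition~\ref{def:com} with respect to the claimed average MDP $\bar\cM$. The key observation is that, because contexts $c_\tau$ are drawn i.i.d.\ and the agent does not observe them, conditioning on a given state-action pair $(x,a)$ at stage $h$ in $\bar\cD$ amounts to a \emph{posterior mixture} over which context generated that trajectory: the probability that a trajectory landing at $(x,a)$ at stage $h$ came from context $c$ is proportional to $\mu_{c,h}(x,a)$ (the visitation density of $(s,a)$ at stage $h$ under $\cD_c$), times the prior density of $c$, which has already been absorbed into the $\EE_{c\sim C}$ notation. Hence the conditional law of the reward and next state, given $(x,a)$ at stage $h$ in $\bar\cD$, is exactly the $\mu_{c,h}(x,a)$-weighted average of the per-context conditionals, which is precisely $\bar r_h$ and $\bar P_h$ as defined.

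Concretely, I would proceed as follows. First, fix $h\in[H]$, $\tau\in[K]$, $x'\in\cS$, $r'$, and a realization of the history $\cH := \big(\{(x_{c_\tau,h}^j,a_{c_\tau,h}^j)\}_{j=1}^\tau, \{(r_{c_\tau,h}^j, x_{c_\tau,h+1}^j)\}_{j=1}^{\tau-1}\big)$ with $x_{c_\tau,h}^\tau = x$, $a_{c_\tau,h}^\tau = a$. Write
\begin{small}
\begin{align}
\PP_{\bar\cD}\big(r_{\cdot,h}^\tau = r', x_{\cdot,h+1}^\tau = x' \,\big|\, \cH\big)
= \sum_{c\in C}\PP\big(c_\tau = c \,\big|\, \cH\big)\,\PP_{\cD_c}\big(r_{c,h}^\tau = r', x_{c,h+1}^\tau = x'\,\big|\,\cH, c_\tau=c\big).\notag
\end{align}
\end{small}
By the compliance of each $\cD_c$ with $\cM_c$ (Definition~\ref{def:com}), the second factor equals $\PP_c\big(r_{c,h}(s_h,a_h)=r', s_{h+1}=x' \,\big|\, s_h=x, a_h=a\big)$, i.e.\ it depends on the history only through $(x,a)$. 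For the posterior weight $\PP(c_\tau=c\mid\cH)$, I would apply Bayes' rule: the prior is the context-sampling distribution, and the likelihood of the history $\cH$ under context $c$ factors into a term common to all $c$ reaching $(x,a)$ at stage $h$ (the trajectory dynamics up to and including arriving at $(x,a)$) — whose stage-$h$ marginal over $(s,a)$ is the data-collection density $\mu_{c,h}(x,a)$ — times factors that, conditioned on reaching $(x,a)$, are again context-dependent but get normalized away. After cancellation, $\PP(c_\tau=c\mid\cH)\propto \mu_{c,h}(x,a)$ with normalizing constant $\EE_{c\sim C}\mu_{c,h}(x,a)$, which upon substitution yields exactly the formulas for $\bar P_h$ and $\PP(\bar r_h = r\mid x,a)$ in the statement. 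This establishes that $\bar\cD$ is compliant with $\bar\cM$.

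I expect the main obstacle to be making the posterior-weight computation fully rigorous, in particular handling the conditioning on the \emph{entire} within-trajectory history $\{(x^j,a^j)\}_{j=1}^\tau$ and cross-trajectory history rather than just on $(x,a)$ at stage $h$. The cleanest route is to note that, because the action selection and the cross-trajectory data-collection protocol are allowed to be arbitrary but are fixed \emph{independently of the realized context} (the experimenter does not adapt to $c_\tau$ beyond what $\mu_{c,h}$ already encodes), the likelihood ratio between contexts, conditioned on the partial trajectory through stage $h$, reduces to the ratio of occupancy densities $\mu_{c,h}(x,a)$; all earlier-stage and other-trajectory terms are either context-independent or absorbed into the definition of $\mu_{c,h}$. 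A secondary point worth a remark is measure-theoretic care when $\cS$ or the reward space is continuous and when $\EE_{c\sim C}\mu_{c,h}(x,a)=0$ (the conditional is then defined arbitrarily, which is harmless). I would keep the exposition at the level of the discrete case and note that the continuous case follows by the analogous density argument.
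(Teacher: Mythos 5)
Your proposal is correct and follows essentially the same route as the paper's proof: both decompose the conditional law of $(r',x')$ given the history into a posterior mixture over contexts, invoke per-context compliance to reduce each term to the MDP conditional given $(x,a)$, and use Bayes' rule together with independence across trajectories to show the posterior weight of context $c$ is proportional to $\mu_{c,h}(x,a)$ (normalized by $\EE_{c\sim C}\mu_{c,h}(x,a)$). The obstacle you flag---conditioning on the full history rather than just $(x,a)$---is resolved in the paper exactly as you suggest: the conditioning set consists only of other trajectories' data (independent of $c_\tau$) plus the stage-$h$ pair $(x,a)$ of trajectory $\tau$, so the posterior collapses to the occupancy-weighted prior.
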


\begin{proof}
    See Appendix \ref{app:nodistin}.
\end{proof}

% For any policy $\pi=\{\pi_h\}_{h=1}^H$, we define the (state-)value function (V-function) $\overline{V}_h^\pi:\cS\to \RR$ at each step $h\in[H]$ on this ``average MDP" $\mathcal{M}(\overline{c})$ as
% \begin{equation}
% \overline{V}_h^\pi(x) = \EE_{\pi,\overline{\cP}}\Big[ \sum_{i=h}^H \overline{r}_i(s_i, a_i)\biggiven s_h=x  \Big]
% \label{eq:def_value_fct}
% \end{equation}
% and the action-value function (Q-function) $Q_h^\pi:\cS\times \cA\to \RR$ at each step $h\in[H]$ as
% \begin{equation}
% \overline{Q}_h^\pi(x,a) = \EE_{\pi,\overline{\cP}}\Big[\sum_{i=h}^H \overline{r}_i(s_i, a_i)\biggiven s_h=x, a_h=a  \Big]\,,
% \label{eq:def_q_fct}
% \end{equation}
% where the expectation  $\EE_{\pi,\overline{\cP}}$ is taken with respect to the randomness of the trajectory induced by $\pi$ with transition $\overline{\cP}$. 

Proposition \ref{thm: nodistinguish} suggests that if no context information is revealed, then the merged offline dataset $\bar\cD$ is equivalent to a dataset collected from the average MDP $\bar \cM$. Therefore, for any offline RL which outputs a Markovian policy, it converges to the optimal policy $\bar\pi^*$ of the average MDP $\bar \cM$. 

% We leave the convergence analysis of PEVI to Appendix \ref{app:pevi}. 

In general, $\bar\pi^*$ can be very different from $\pi^*$  when the transition probability functions of each environment are different. For example, consider the 2-context cMDP problem shown in Figure \ref{fig:eg2}, each context consists of one state and three possible actions. The offline dataset distributions $\mu$ are marked on the arrows that both of the distributions are following near-optimal policy. By Proposition \ref{thm: nodistinguish}, in average MDP $\bar{\mathcal{M}}$ the reward of the middle action is deterministically 0, while both upper and lower actions are deterministically 1. As a result, the optimal policy $\bar{\pi}^\ast$ will only have positive probabilities toward upper and lower actions. This leads to $\mathbb{E}_{c\sim C}[V^{\overline{\pi}^\ast}_{c,1}(x_1)]=0$, though we can see that $\pi^\ast$ is deterministically choosing the middle action and $\mathbb{E}_{c\sim C}[V^{\pi^\ast}_{c,1}(x_1)]=0.5$. This theoretically illustrates that the generalization ability of offline RL algorithms without leveraging context information is weak. In sharp contrast, imitation learning such as behavior cloning (BC) converges to the teacher policy that is independent of the specific MDP. Therefore, offline RL methods such as CQL \citep{kumar2020conservative} might enjoy worse generalization performance compared with BC, which aligns with the observation made by \citet{mediratta2023generalization}.

\begin{figure}[H]
    \centering
    
    \begin{tikzpicture}[>=Stealth,
        every node/.style={font=\footnotesize},
        arrowstyle/.style={draw=none, midway, sloped, above},
        belowarrowstyle/.style={draw=none, midway, sloped, below},
        scale=0.9, transform shape]
        
        \node[draw,circle,inner sep=2pt] (x1_left) at (0,0) {$x_1$};
        \draw[->] (x1_left) -- ++(3,0.8) node[arrowstyle] {$\mu_v(a_1) = 1 - \epsilon$} node[anchor=west] {$r_v(a_1) = 1$};
        \draw[->] (x1_left) -- ++(3,0)   node[arrowstyle] {$~~~~~~~~~~~~~~\mu_v(a_2) = \epsilon$} node[anchor=west] {$r_v(a_2) = 0$};
        \draw[->] (x1_left) -- ++(3,-0.8) node[belowarrowstyle] {$\mu_v(a_3) = 0$} node[anchor=west] {$r_v(a_3) = -1$};
    \end{tikzpicture}
    \hspace{2em} % 添加水平间距
    % 右侧tikz图
    \begin{tikzpicture}[>=Stealth,
        every node/.style={font=\footnotesize},
        arrowstyle/.style={draw=none, midway, sloped, above},
        belowarrowstyle/.style={draw=none, midway, sloped, below},
        scale=0.9, transform shape]
        
        \node[draw,circle,inner sep=2pt] (x1_right) at (0,0) {$x_1$};
        \draw[->] (x1_right) -- ++(3,0.8) node[arrowstyle] {$\mu_w(a_1) = 0$} node[anchor=west] {$r_w(a_1) = -1$};
        \draw[->] (x1_right) -- ++(3,0)   node[arrowstyle] {$~~~~~~~~~~~~~~\mu_w(a_2) = 0$} node[anchor=west] {$r_w(a_2) = 1$};
        \draw[->] (x1_right) -- ++(3,-0.8) node[belowarrowstyle] {$\mu_w(a_3) = 1$} node[anchor=west] {$r_w(a_3) = 1$};
    \end{tikzpicture}
    \caption{Two Contextual MDPs with the same compliant average MDPs. The discrete contextual space is defined as $C=\{v,w\}$ and both MDPs satisfies $\cS=\{x_1\},\cA=\{a_1,a_2,a_3\},H=1$. The data collection distributions $\mu$ and rewards $r$ for each action of each context are specified in the graph.}
    \label{fig:eg2}
\end{figure}
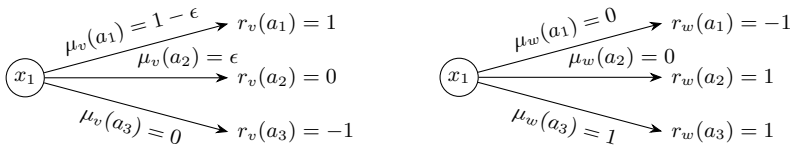
%\begin{figure}[H]
%    \centering
%    \includegraphics[width=0.75\linewidth]{Neurips 2024/eg3.jpg}
%    \caption{A cMDP problem with 2 equally distributed contexts.}
%    \label{fig:eg2}
%\end{figure}

\section{Provable offline RL with zero-shot generalization}\label{sec:withcontext}

In this section, we propose offline RL with small ZSG gaps. We show that two popular offline RL approaches, \emph{model-based RL} and \emph{policy optimization-based RL}, can output RL agent with ZSG ability, with a pessimism-style modification that encourages the agent to follow the offline dataset pattern.

% We assume that the agent is given a multi-task offline dataset $\cD$, which includes trajectories collected from different tasks. We assume that the agent is able to distinguish whether two trajectories are from the same task or not. Such an assumption naturally holds in the online setting since the agent needs to interact with different task to collect the data. However, such a condition does not always hold for the offline setting. We will show the case where the agent is not able to distinguish different tasks in Section \ref{sec:nocontext}. 

\subsection{Pessimistic policy evaluation}

We consider a meta-algorithm to evaluate any policy $\pi$ given an offline dataset, which serves as a key component in our proposed offline RL with ZSG. To begin with, we consider a general individual MDP and an oracle $\mathbb{O}$, which returns us an empirical Bellman operator and an uncertainty quantifier, defined as follows.

\begin{definition}[\citealt{jin2021pessimism}]\label{def:oracle}
    For any individual MDP $M $, a dataset $\cD\subseteq \cS \times \cA \times \cS \times [0,1]$ that is compliant with $M$, a test function $V_{\cD}\subseteq  [0,H]^{\cS}$ and a confidence level $\xi$, we have an oracle $\mathbb{O}(\cD, V_{\cD},\xi)$ that returns $(\hat \BB V_\cD(\cdot, \cdot), \Gamma(\cdot, \cdot))$, a tuple of Empirical Bellman operator and uncertainty quantifier, satisfying
    \begin{small}
            \begin{align}
        &\PP_{\cD}\Big(\big|(\hat\BB V_{\cD})(x,a) - (\BB_M V_\cD)(x,a)\big|\notag \leq \Gamma(x,a)~ \text{for all}~(x,a)\in \cS\times \cA   \Big) \geq 1-\xi.\notag
    \end{align}
    \end{small}
\end{definition}

\begin{remark}
    Here we adapt a test function $V_{\cD}$ that can depend on the dataset $\cD$ itself. Therefore, $\Gamma$ is a function that depends on both the dataset and the test function class. We do not specify the test function class in this definition, and we will discuss its specific realization in Section \ref{sec:linear}.
\end{remark}
\begin{remark}
    For general non-linear MDPs, one may employ the bootstrapping technique to estimate uncertainty, in line with the bootstrapped DQN approach developed by \citep{osband2016deep}. We note that when the bootstrapping method is straightforward to implement, the assumption of having access to an uncertainty quantifier is reasonable.
\end{remark}
\begin{algorithm}[t!]
\begin{small}
  \caption{\underline{P}essimistic \underline{P}olicy \underline{E}valuation (PPE)}\label{alg:model based general}
  \begin{algorithmic}[1]
    \REQUIRE Offline dataset $\{\cD_{i,h}\}_{h=1}^H$, 
    policy $\pi = (\pi_h)_{h=1}^H$, confidence probability $\delta\in(0,1)$.
    \STATE Initialize $\hat{V}^\pi_{i,H+1}(\cdot) \leftarrow 0, \ \forall i\in[n]$.
    \FOR{step $h=H, H-1, \hdots, 1$}      
        \STATE  Let $(\hB_{i,h}\hat{V}_{i,h+1}^\pi)(\cdot,\cdot),\Gamma_{i,h} (\cdot,\cdot)\leftarrow \mathbb{O}(\cD_{i,h},\hat{V}_{i,h+1}^\pi, \delta)$
        \STATE Set $\hat{Q}_{i,h}^\pi(\cdot,\cdot)\leftarrow \min\{H-h+1, (\hB_{i,h}\hat{V}_{i,h+1}^\pi)(\cdot,\cdot)-\Gamma_{i,h}(\cdot,\cdot)\}^+$
        \STATE Set $\hat{V}_{i,h}^\pi(\cdot)\leftarrow \langle \hat{Q}_{i,h}^\pi(\cdot,\cdot),\pi_h(\cdot|\cdot)\rangle_\actions$
    \ENDFOR
    \RETURN $\hat{V}_{i,1}^\pi(\cdot),\dots,\hat{V}_{i,H}^\pi(\cdot), \hat{Q}_{i,1}^\pi(\cdot,\cdot),\dots,\hat{Q}_{i,H}^\pi(\cdot,\cdot)$.
  \end{algorithmic}
\end{small}
\end{algorithm}
Based on the oracle $\mathbb{O}$, we propose our pessimistic policy evaluation (PPE) algorithm as Algorithm \ref{alg:model based general}. In general, PPE takes a given policy $\pi$ as its input, and its goal is to evaluate the V value and Q value $\{(V_{i,h}^{\pi}, Q_{i,h}^{\pi}) \}_{h=1}^H$ of $\pi$ on MDP $\cM_i$. Since the agent is not allowed to interact with $\cM_i$, PPE evaluates the value based on the offline dataset $\{\cD_{i,h}\}_{h=1}^H$. At each stage $h$, PPE utilizes the oracle $\mathbb{O}$ and obtains the empirical Bellman operator based on $\cD_{i,h}$ as well as its uncertainty quantifier, with high probability. Then PPE applies the \emph{pessimism principle} to build the estimation of the Q function based on the empirical Bellman operator and the uncertainty quantifier. Such a principle has been widely studied and used in offline policy optimization, such as pessimistic value iteration (PEVI) \citep{jin2021pessimism}. To compare with, we use the pessimism principle in the policy evaluation problem. 

\begin{remark}
    In our framework, pessimism can indeed facilitate generalization, rather than hinder it. Specifically, we employ pessimism to construct reliable Q functions for each environment individually. This approach supports broader generalization by maintaining multiple Q-networks separately. By doing so, we ensure that each Q function is robust within its specific environment, while the collective set of Q functions enables the system to generalize across different environments.
\end{remark}

\subsection{Model-based approach: pessimistic empirical risk minimization}
Given PPE, we propose algorithms that have the ZSG ability. We first propose a pessimistic empirical risk minimization (PERM) method which is model-based and conceptually simple. The algorithm details are in Algorithm \ref{alg:erm}. In detail, for each dataset $\cD_i$ drawn from $i$-th environments, PERM builds a model using PPE to evaluate the policy $\pi$ under the environment $\cM_i$. Then PERM outputs a policy $\pi^{\text{PERM}} \in \Pi$ that maximizes the average pessimistic value, i.e., $1/n\sum_{i=1}^n \hat{V}_{i,1}^\pi (x_1)$. Our approach is inspired by the classical empirical risk minimization approach adopted in supervised learning, and the Optimistic Model-based ERM proposed in \citet{ye2023power} for online RL. Our setting is more challenging than the previous ones due to the RL setting and the offline setting, where the interaction between the agent and the environment is completely disallowed. Therefore, unlike \citet{ye2023power}, which adopted an optimism-style estimation to the policy value, we adopt a pessimism-style estimation to fight the distribution shift issue in the offline setting.

Next we propose a theoretical analysis of PERM. Denote $\mathcal{N}_\epsilon^\Pi$ as the $\epsilon$-covering number of the policy space $\Pi$ w.r.t. distance $\mathrm d(\pi^1,\pi^2) = \max_{s\in \mathcal{S}, h \in [H]} \|\pi^1_h(\cdot|s) - \pi^2_h (\cdot|s)\|_{1}$. Then we have the following theorem to provide an upper bound of the suboptimality gap of the output policy $\pi^{\text{PERM}}$. 

\begin{algorithm}[t!]
\begin{small}
  \caption{\underline{P}essimistic \underline{E}mpirical \underline{R}isk \underline{M}inimization (PERM)}\label{alg:erm}
  \begin{algorithmic}[1]
    \REQUIRE Offline dataset $\cD = \{\cD_i\}_{i=1}^n, \cD_i:=\{(x_{i,h}^\tau, a_{i,h}^\tau, r_{i,h}^\tau)_{h=1}^H\}_{\tau=1}^{K}$, policy class $\Pi$, confidence probability $\delta\in(0,1)$, a pessimistic offline policy evaluation algorithm $\textbf{Evaluation}$ as a subroutine.
    \STATE Set $\cD_{i,h} =\{(x_{i,h}^\tau, a_{i,h}^\tau, r_{i,h}^\tau, x_{i,h+1}^\tau)\}_{\tau=1}^{K} $
    \STATE$\pi^{\text{PERM}}=\argmax_{\pi\in\Pi} \frac{1}{n}\sum_{i=1}^n \hat{V}_{i,1}^\pi (x_1)$, \\where $[\hat V^\pi_{i,1}(\cdot),\cdot,\dots, \cdot] = \textbf{Evaluation}\Big(\{\cD_{i,h}\}_{h=1}^H,\pi,\delta/(3nH\mathcal{N}_{(Hn)^{-1}}^\Pi))\Big)$
%     \STATE Initialize $\hat{V}^\pi_{i,H+1}(\cdot) \leftarrow 0, \ \forall i\in[n]$.
%     \FOR{$i=1, 2, \hdots, n$}
%     \FOR{step $h=H, H-1, \hdots, 1$}       
%         \STATE  Let $(\hB_{i,h}\hat{V}_{i,h+1}^\pi)(\cdot,\cdot),\Gamma_{i,h} (\cdot,\cdot)\leftarrow \mathbb{O}(\cD_{i,h},\hat{V}_{i,h+1}^\pi, \xi)$
%         \STATE Set $\hat{Q}_{i,h}^\pi(\cdot,\cdot)\leftarrow \min\{H-h+1, (\hB_{i,h}\hat{V}_{i,h+1}^\pi)(\cdot,\cdot)-\Gamma_{i,h}(\cdot,\cdot)\}^+$
%         \STATE Set $\hat{V}_{i,h}^\pi(\cdot)\leftarrow \langle \hat{Q}_{i,h}^\pi(\cdot,\cdot),\pi_h(\cdot|\cdot)\rangle_\actions$
%         \ENDFOR
%     \ENDFOR
% \hfill
%     \STATE 
%     \STATE$\hat{\pi}^*=\argmax_\pi \frac{1}{n}\sum_{i=1}^n \hat{V}_{i,1}^\pi (x_1)$
\RETURN $\pi^{\text{PERM}}$.

  \end{algorithmic}
\end{small}
\end{algorithm}

\begin{theorem}\label{thm:model based regret_upper_bound_general}
Set the Evaluation subroutine in Algorithm \ref{alg:erm} as PPE (Algo.\ref{alg:model based general}). Let $\Gamma_{i,h}$ be the uncertainty quantifier returned by $\mathbb{O}$ through the PERM. Then w.p. at least $1-\delta$, the output $\pi^{\text{PERM}}$ of  Algorithm \ref{alg:erm} satisfies 
\begin{small}
    \begin{align}&\normalfont
\text{SubOpt}(\pi^{\text{PERM}})\leq \underbrace{7\sqrt{\frac{2\log(6\mathcal{N}_{(Hn)^{-1}}^\Pi/\delta)}{n}}}_{I_1: \text{Supervised learning (SL) error}} +\underbrace{\frac{2}{n}\sum_{i=1}^n\sum_{h=1}^H\E{i,\pi^*}{\Gamma_{i,h}(s_h,a_h)|s_1=x_1}}_{I_2: \text{Reinforcement learning (RL) error}}\,,
\label{eq:model based gap general}
\end{align}
\end{small}
where $\EE_{i,\pi^*}$ is w.r.t. the trajectory induced by $\pi^*$ with the transition $\cP_i$ in the underlying MDP $\cM_i$.
% $\mathcal{N}_\epsilon^\Pi$ is the $\epsilon$-covering number of the policy space $\Pi$ w.r.t. distance $\mathrm d(\pi^1,\pi^2) = \max_{s\in \mathcal{S}, h \in [H]} \|\pi^1_h(\cdot|s) - \pi^2_h (\cdot|s)\|_{1}$. 
\end{theorem}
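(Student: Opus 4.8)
The plan is to combine the pessimism-based single-environment analysis of \citet{jin2021pessimism} with an empirical-risk-minimization argument that transfers the per-environment guarantee to the population objective $J(\pi):=\EE_{c\sim C}[V_{c,1}^\pi(x_1)]$. Write $\hat J(\pi):=\tfrac1n\sum_{i=1}^n\hat V_{i,1}^\pi(x_1)$ for the empirical pessimistic objective that PERM maximizes and $\tilde J(\pi):=\tfrac1n\sum_{i=1}^n V_{i,1}^\pi(x_1)$ for the empirical true objective. First I would decompose
\[
\text{SubOpt}(\pi^{\text{PERM}})=\underbrace{\big(J(\pi^*)-\hat J(\pi^*)\big)}_{\text{(i)}}+\underbrace{\big(\hat J(\pi^*)-\hat J(\pi^{\text{PERM}})\big)}_{\text{(ii)}}+\underbrace{\big(\hat J(\pi^{\text{PERM}})-J(\pi^{\text{PERM}})\big)}_{\text{(iii)}},
\]
where term (ii)$\,\le 0$ because $\pi^{\text{PERM}}=\argmax_{\pi\in\Pi}\hat J(\pi)$ and $\pi^*\in\Pi$. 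It then suffices to upper bound (i) and (iii).

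For (i), split $J(\pi^*)-\hat J(\pi^*)=\big(J(\pi^*)-\tilde J(\pi^*)\big)+\big(\tilde J(\pi^*)-\hat J(\pi^*)\big)$. The second summand is the pessimism gap: on the event $\mathcal{E}$ that every oracle call inside PPE is valid in the sense of Definition~\ref{def:oracle}, the per-step model-evaluation error $\iota_{i,h}(x,a):=(\BB_{i,h}\hat V_{i,h+1}^{\pi^*})(x,a)-\hat Q_{i,h}^{\pi^*}(x,a)$ satisfies $0\le\iota_{i,h}(x,a)\le 2\Gamma_{i,h}(x,a)$ pointwise — the lower bound uses the pessimistic clipping $\min\{H-h+1,\cdot\}^+$ in Line~4 of PPE together with $|\hat\BB_{i,h}\hat V-\BB_{i,h}\hat V|\le\Gamma_{i,h}$, and the upper bound uses the same oracle inequality. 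Substituting into the value-difference identity $V_{i,1}^{\pi^*}(x_1)-\hat V_{i,1}^{\pi^*}(x_1)=\sum_{h=1}^H\EE_{i,\pi^*}[\iota_{i,h}(s_h,a_h)\mid s_1=x_1]$ (proved by backward induction through the Bellman recursion) and averaging over $i\in[n]$ yields $\tilde J(\pi^*)-\hat J(\pi^*)\le I_2$; the lower bound $\iota_{i,h}\ge 0$ likewise gives pessimism $\hat V_{i,1}^\pi(x_1)\le V_{i,1}^\pi(x_1)$ for any $\pi$ whose oracle calls are valid. The first summand $J(\pi^*)-\tilde J(\pi^*)$ is an average of $n$ i.i.d.\ mean-zero, bounded terms (the $\cM_i$ are i.i.d.\ draws from $C$ and $\pi^*$ is fixed), so Hoeffding's inequality controls it by $O(\sqrt{\log(1/\delta)/n})$ with high probability.

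For (iii), use pessimism once more: $\hat J(\pi^{\text{PERM}})\le\tilde J(\pi^{\text{PERM}})$, so (iii)$\,\le\tilde J(\pi^{\text{PERM}})-J(\pi^{\text{PERM}})$, which is a uniform-convergence quantity since $\pi^{\text{PERM}}$ is data-dependent. I would pass to an $\epsilon$-net $\Pi_\epsilon$ of $\Pi$ in the metric $\mathrm d$ with $\epsilon=(Hn)^{-1}$, bound $|\tilde J(\pi')-J(\pi')|$ for each net element by Hoeffding and a union bound over the $\mathcal{N}_{(Hn)^{-1}}^\Pi$ net elements (the source of the $\log\mathcal{N}_{(Hn)^{-1}}^\Pi$ term), and absorb the discretization error via Lipschitz continuity of $\pi\mapsto V_{c,1}^\pi(x_1)$ in $\mathrm d$ — perturbing each $\pi_h(\cdot\mid s)$ by $\epsilon$ in $\ell_1$ moves every value by at most $O(\mathrm{poly}(H)\,\epsilon)$, which is negligible. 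The same net must also carry the validity of all oracle calls PPE performs for the relevant policies; a union bound over the $n$ environments, the $H$ stages, and the $\mathcal{N}_{(Hn)^{-1}}^\Pi$ policies — with the constant $3$ absorbing the several separate concentration events — is precisely why PERM invokes the evaluation routine at confidence $\delta/(3nH\mathcal{N}_{(Hn)^{-1}}^\Pi)$. Collecting (i)--(iii) gives $\text{SubOpt}(\pi^{\text{PERM}})\le I_1+I_2$ with $I_1=7\sqrt{2\log(6\mathcal{N}_{(Hn)^{-1}}^\Pi/\delta)/n}$.

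The main obstacle is term (iii): pessimism ($\hat V^\pi\le V^\pi$) and the empirical-process bound must hold \emph{uniformly} over $\Pi$, while $\pi^{\text{PERM}}$ is itself a function of the data. The delicate point is that PPE queries the oracle $\mathbb{O}$ with the test function $\hat V_{i,h+1}^\pi$, which varies with $\pi$, so the covering argument must control not only the true value $V_{i,1}^\pi(x_1)$ but also the estimated value $\hat V_{i,1}^\pi(x_1)$ as $\pi$ ranges over $\Pi$; establishing that $\hat V_{i,1}^\pi(x_1)$ is Lipschitz in $\pi$ — by backward induction through the PPE recursion, using that the clipping map $\min\{H-h+1,\cdot\}^+$ and the inner product $\langle\cdot,\pi_h(\cdot\mid\cdot)\rangle$ are non-expansive — is the technical core that legitimizes the discretization and the union bound.
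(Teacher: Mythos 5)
Your proposal is correct and follows essentially the same route as the paper's proof: the same five atomic pieces (Hoeffding concentration of $\tfrac1n\sum_i V_{i,1}^\pi$ around $\EE_{c\sim C}V_{c,1}^\pi$ via an $(Hn)^{-1}$-net and union bound, the pessimism inequality $\hat V_{i,1}^\pi\le V_{i,1}^\pi$, the bound $0\le\iota_{i,h}\le 2\Gamma_{i,h}$ fed into the value-difference identity at $\pi^*$, and the ERM optimality of $\pi^{\text{PERM}}$) appear in both, merely telescoped through $\hat J$ instead of through $\tilde J$ as the paper does. Your closing observation — that pessimism and the oracle guarantee only hold on the net, so one must also control the Lipschitz dependence of $\hat V_{i,1}^\pi$ on $\pi$ to transfer them to the data-dependent $\pi^{\text{PERM}}$ — is exactly the step the paper carries out in its Eqs.\ (\ref{cover 1}) and (\ref{last term}).
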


% \begin{theorem}\label{thm:model based regret_upper_bound_general}
% Set the Evaluation subroutine in Algorithm \ref{alg:erm} as PPE. Let $\Gamma_{i,h}$ be the uncertainty quantifier returned by $\mathbb{O}$ through the PERM. Then w.p. at least $1-\delta$, the output $\pi^{\text{PERM}}$ of  Algorithm \ref{alg:erm} satisfies 
% \begin{align}\normalfont
% \text{SubOpt}(\pi^{\text{PERM}})\leq 4H\epsilon+2\sqrt{\frac{2\log(2H\mathcal{N}_{(Hn)^{-1}}^\Pi)}{n}}+\frac{2}{n}\sum_{i=1}^n\sum_{h=1}^H\E{\pi^*,\cM_i}{\Gamma_{i,h}(s_h,a_h)|s_1=x_1}\,,
% \label{eq:model based gap general}
% \end{align}
% where $\EE_{\pi^*,\cM_i}$ is with respect to the trajectory induced by $\pi^*$ with the transition $\cP_i$ in the underlying MDP $\cM_i$.
% $\mathcal{N}_\epsilon^\Pi$ is the $\epsilon$-covering number of the policy space $\Pi$ w.r.t. distance $\mathrm d(\pi^1,\pi^2) = \max_{s\in \mathcal{S}, h \in [H]} \|\pi^1_h(\cdot|s) - \pi^2_h (\cdot|s)\|_{1}$. 
% \end{theorem}
\begin{proof}
    See Appendix \ref{appendix model based}.
\end{proof}
\begin{remark}
The covering number $\mathcal{N}_{(Hn)^{-1}}^\Pi$ depends on the policy class $\Pi$. Without any specific assumptions, the policy class $\Pi$ that consists of all the policies  $\pi = \{\pi_h\}_{h=1}^H, \pi_h:\mathcal S \mapsto \Delta (\cA)$ and the log $\epsilon$-covering number $\log \mathcal{N}^{\Pi}_{\epsilon}=O(|\cA| |\cS| H \log(1+ |\cA|/\epsilon))$. 
\end{remark}
\begin{remark}
    The SL error can be easily improved to a distribution-dependent bound $\log \cN\cdot \text{Var}/\sqrt{n}$, where $\cN$ is the covering number term denoted in $I_1$, $\text{Var} = \max_\pi\mathbb{V}_{c \sim C}V^\pi_{c,1}(x_1)$ is the variance of the context distribution, by using a Bernstein-type concentration inequality in our proof. Therefore, for the singleton environment case where $|C|=1$, our suboptimality gap reduces to the one of PEVI in \citet{jin2021pessimism}.
\end{remark}
\begin{remark}\label{rmk:merge}
    In real-world settings, as the number of sampled contexts $n$ may be very large, it is unrealistic to manage $n$ models simultaneously in the implementation of PERM algorithm, thus we provide the suboptimality bound in line with Theorem \ref{thm:model based regret_upper_bound_general} when the offline dataset is merged into $m$ contexts such that $m<n$. See Theorem \ref{thm:permv} in Appendix \ref{appendix:merge}.
\end{remark}
% \begin{remark}
%     For the policy class $\Pi$ that consists of all the policies  $\pi = \{\pi_h\}_{h=1}^H, \pi_h:\mathcal S \mapsto \Delta (\cA)$ and $\pi$, the $\epsilon$-covering number $\mathcal{N}^{\Pi}_{\epsilon}=(1+\frac{\actions}{\epsilon})^{(|\actions|-1)|\cS|H}$. Therefore, the second term in Eq.(\ref{eq:model based gap general}) is less than $2\sqrt{\frac{2|\actions||\cS|H\log(2H)}{n}}$.
% \end{remark}

Theorem \ref{thm:model based regret_upper_bound_general} shows that the ZSG gap of PERM is bounded by two terms $I_1$ and $I_2$. $I_1$, which we call \emph{supervised learning error}, depends on the number of environments $n$ in the offline dataset $\cD$ and the covering number of the function (policy) class, which is similar to the generalization error in supervised learning. $I_2$, which we call it \emph{reinforcement learning error}, is decided by the optimal policy $\pi^*$ that achieves the best zero-shot generalization performance and the uncertainty quantifier $\Gamma_{i,h}$. In general, $I_2$ is the ``intrinsic uncertainty" denoted by \citet{jin2021pessimism} over $n$ MDPs, which characterizes how well each dataset $\cD_i$ covers the optimal policy $\pi^*$.

\subsection{Model-free approach: pessimistic proximal policy optimization}

\begin{algorithm}[t!]
\begin{small}
  \caption{\underline{P}essimistic \underline{P}roximal \underline{P}olicy \underline{O}ptimzation (PPPO)}\label{alg:modelfree}
  \begin{algorithmic}[1]
    \REQUIRE Offline dataset $\cD = \{\cD_i\}_{i=1}^n, \cD_i:=\{(x_{i,h}^\tau, a_{i,h}^\tau, r_{i,h}^\tau)_{h=1}^H\}_{\tau=1}^{K}$, confidence probability $\delta\in(0,1)$, a pessimistic offline policy evaluation algorithm $\textbf{Evaluation}$ as a subroutine.
    \STATE Set $\cD_{i,h} =\{(x_{i,h}^{\tau\cdot H+h}, a_{i,h}^{\tau\cdot H+h}, r_{i,h}^{\tau\cdot H+h}, x_{i,h+1}^{\tau\cdot H+h})\}_{\tau=0}^{\lfloor K/H\rfloor-1 }$
    \STATE Set $\pi_{0,h}(\cdot|\cdot)$ as uniform distribution over $\actions$ and $\hat Q^{\pi_0}_{0,h}(\cdot,\cdot)$ as zero functions.
    \FOR{$i=1, 2, \cdots, n$}
    \STATE Set $\pi_{i,h}(\cdot|\cdot)\propto \pi_{i-1,h}(\cdot|\cdot)\cdot \exp(\alpha\cdot \hat{Q}^{\pi_{i-1}}_{i-1,h}(\cdot, \cdot))$
    % \STATE Set $\hat{V}_{i,H+1}(\cdot) = 0$
    %     \FOR{$h=H,\cdots,1$}
    %     \STATE Let $(\hB_{i,h}\hat{V}_{i,h+1})(\cdot,\cdot),\Gamma_{i,h} (\cdot,\cdot)\leftarrow \mathbb{O}(\cD_{i,h},\hat{V}_{i,h+1}, \xi)$
    %     \STATE Set $\hat{Q}_{i,h}(\cdot, \cdot):=\min\{H-h+1, \hB_{i,h} \hat{V}_{h+1}^i(\cdot, \cdot) - \Gamma_{i,h}(\cdot, \cdot)\}^+$.
    %     \item Set $\hat{V}_{i,h}(\cdot)\leftarrow \la \pi_{i,h}(\cdot|\cdot),  \hat{Q}_{i,h}(\cdot, \cdot)\ra_A$.
    %     \ENDFOR
    \STATE Set $[\cdot,\dots, \cdot, \hat Q^{\pi_i}_{i,1}(\cdot,\cdot),\dots, \hat Q^{\pi_i}_{i,H}(\cdot,\cdot)] = \textbf{Evaluation}(\{\cD_{i,h}\}_{h=1}^H,\pi_i,\delta/(nH))$
    \ENDFOR
    \RETURN $\pi^{\text{PPPO}}=\text{random}(\pi_1, ..., \pi_n)$
  \end{algorithmic}
\end{small}
\end{algorithm}
% We follow the following method to collect offline dataset. 
% \begin{itemize}
%     \item First we sample $n$ i.i.d. number of different tasks $M_1, ...M_n \sim \cC$. 
%     \item For each $M_i$, we obtain $\cD_i$. 
% \end{itemize}
% We consider the episodic MDP setting, where for each MDP $M_c$, it has $P^c(s'|s,a), r^c(s,a)$. For simplicity we assume $r_c$ is a deterministic function. We also assume $P^c:=\{P^c_1, ...,P^c_H\}$ and $r^c:=\{r^c_1,...,r^c_H\}$. 

% We follow Jin's paper to assume that the dataset is collected in compliance. We have the following assumption.
% \begin{assumption}
% The offline dataset $\cD:=\cD_1\cup...\cup \cD_n$, where $M_1, ..., M_n$ are i.i.d. drawn from the context space. For each $\cD_i = \{(s,a,s',r)\}$, we have $s'\sim P_i(\cdot|s,a)$ and $(s,a)$ are drawn i.i.d. from a joint distribution $\mu_i(s,a)$. 
% \end{assumption}

% \begin{definition}\label{def:offlineevent}
%     Then given one dataset $\cD$, we denote $\hat{\mathbb{B}}_h^c$ to be a operator, such that
% \begin{align}
%     \mathbb{P}\bigg(\forall s,a,h,c,|\hB^c_h V^c_{h+1}(\cdot, \cdot) - \mathbb{B}^c_h V^c_{h+1}(\cdot, \cdot)| \leq \Gamma^c_h(\cdot, \cdot)\bigg)\geq 1-\xi
% \end{align}
% \end{definition}

PERM in Algorithm \ref{alg:erm} works as a general model-based algorithm framework to enable ZSG for any pessimistic policy evaluation oracle. However, note that in order to implement PERM, one needs to maintain $n$ different models or critic functions simultaneously in order to evaluate $\sum_{i=1}^n \hat{V}_{i,1}^\pi (x_1)$ for any candidate policy $\pi$. Note that existing online RL \citep{ghosh2021generalization} achieves ZSG by a model-free approach, which only maintains $n$ policies rather than models/critic functions. Therefore, one natural question is whether we can design a \emph{model-free} offline RL algorithm also with access only to policies.

We propose the pessimistic proximal policy optimization (PPPO) in Algorithm \ref{alg:modelfree} to address this issue. Our algorithm is inspired by the optimistic PPO \citep{cai2020provably} originally proposed for online RL. PPPO also adapts PPE as its subroutine to evaluate any given policy pessimistically. Unlike PERM, PPPO only maintains $n$ policies $\pi_1, ...,\pi_n$, each of them is associated with an MDP $\cM_n$ from the offline dataset. In detail, PPPO assigns an order for MDPs in the offline dataset and names them $\cM_1, ..., \cM_n$. For $i$-th MDP $\cM_i$, PPPO selects the $i$-th policy $\pi_i$ as the solution of the proximal policy optimization starting from $\pi_{i-1}$, which is
\begin{align}
    \pi_{i}&\leftarrow \argmax_{\pi}V_{i-1,1}^\pi(x_1) - \alpha^{-1}\EE_{i-1, \pi_{i-1}}[\text{KL}(\pi\|\pi_{i-1})|s_1 = x_1],\label{ooo}
\end{align}
where $\alpha$ is the step size parameter. Since $V_{i-1,1}^\pi(x_1)$ is not achievable, we use a linear approximation $L_{i-1}(\pi)$ to replace $V_{i-1,1}^\pi(x_1)$, where
\begin{small}
    \begin{align}
   & L_{i-1}(\pi) = V_{i-1,1}^{\pi_{i-1}}(x_1) + \EE_{i-1, \pi_{i-1}}\bigg[\sum_{h=1}^H \la\hat Q_{i-1,h}^{\pi_{i-1}}(x_h,\cdot), \pi_h(\cdot|x_h) - \pi_{i-1,h}(\cdot|x_h) \ra\bigg|s_1 = x_1\bigg],\label{ggg}
\end{align}
\end{small}
where $\hat Q_{i-1,h}^{\pi_{i-1}}\approx Q_{i-1,h}^{\pi_{i-1}}$ are the Q values evaluated on the offline dataset for $\cM_{i-1}$. \eqref{ooo} and \eqref{ggg} give us a close-form solution of $\pi$ in Line 4 in Algorithm \ref{alg:modelfree}. Such a routine corresponds to one iteration of PPO \citep{schulman2017proximal}. Finally, PPPO outputs $\pi^{\text{PPPO}}$ as a random selection from $\pi_1,...,\pi_n$. 

\begin{remark}
    In Algorithm \ref{alg:modelfree}, we adopt a data-splitting trick \citep{jin2021pessimism} to build $\cD_{i,h}$, where we only utilize each trajectory once for one data tuple at some stage $h$. It is only used to avoid the statistical dependency of $\hat V_{i,h+1}^{\pi_i}(\cdot)$ and $x_{i,h+1}^\tau$ for the purpose of theoretical analysis. 

\end{remark}
% There are several algorithm design tricks we have also adapted in Algorithm \ref{alg:modelfree}. First, we adapt a data-splitting trick \citep{jin2021pessimism} to build $\cD_{i,h}$, where we only utilize each trajectory once for one data tuple at some stage $h$. It is only used to avoid the statistical dependency of $\hat V_{i,h+1}^{\pi_i}(\cdot)$ and $x_{i,h+1}^\tau$ for the purpose of theoretical proof. Second, our $\pi^{\text{PPPO}}$ is selected as a random policy from $\pi_1,...,\pi_n$ to guarantee its generalization property. An alternative way to construct $\pi^{\text{PPPO}}$ is to make $\pi^{\text{PPPO}}$ Markovian is to adapt the link function technique \citep{ghosh2021generalization}

The following theorem bounds the suboptimality of PPPO.
\begin{theorem}\label{thm:model-free}
    Set the Evaluation subroutine in Algorithm \ref{alg:modelfree} as Algorithm \ref{alg:model based general}. Let $\Gamma_{i,h}$ be the uncertainty quantifier returned by $\mathbb{O}$ through the PPPO. Selecting $\alpha = 1/\sqrt{H^2n}$. Then selecting $\delta = 1/8$, w.p. at least $2/3$, we have
    \begin{small}
        \begin{align}
    &\text{SubOpt}(\pi^{\text{PPPO}}) \leq 10\bigg(\underbrace{\sqrt{\frac{\log|\actions|H^2}{n}}}_{I_1: \text{SL error}} + \underbrace{\frac{1}{n}\sum_{i=1}^n \sum_{h=1}^H\E{i,\pi^*}{\Gamma_{i,h}(s_h,a_h)|s_1=x_1}}_{I_2: \text{RL error}}\bigg).\notag
\end{align}
    \end{small}
where $\EE_{i,\pi^*}$ is w.r.t. the trajectory induced by $\pi^*$ with the transition $\cP_i$ in the underlying MDP $\cM_i$.
\end{theorem}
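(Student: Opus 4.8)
The plan is to adapt the optimistic‑PPO regret analysis of \citet{cai2020provably} to the pessimistic, offline, contextual setting, inserting a generalization step that trades the $n$ sampled environments for the population. Write $\mathcal F_{i-1}$ for the $\sigma$‑algebra generated by the first $i-1$ environments and their datasets; the structural fact driving everything is that $\pi_i$ is $\mathcal F_{i-1}$‑measurable (Algorithm~\ref{alg:modelfree} builds $\pi_i$ only from $\cD_1,\dots,\cD_{i-1}$), whereas $\cM_i\sim C$ is drawn afresh and hence independent of $\mathcal F_{i-1}$ --- so the $i$‑th iteration of PPPO behaves like one round of online mirror descent against a new sample. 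Put $f_i:=V^{\pi^*}_{\cM_i,1}(x_1)-V^{\pi_i}_{\cM_i,1}(x_1)$. Averaging over the uniform choice of $\pi^{\text{PPPO}}$ among $\pi_1,\dots,\pi_n$ gives $\text{SubOpt}(\pi^{\text{PPPO}})=\tfrac1n\sum_i\EE_{c\sim C}\big[V^{\pi^*}_{c,1}(x_1)-V^{\pi_i}_{c,1}(x_1)\big]$, and by the independence above each summand equals $\EE[f_i\mid\mathcal F_{i-1}]$. Therefore, identically, $\text{SubOpt}(\pi^{\text{PPPO}})=\tfrac1n\sum_i f_i+\tfrac1n\sum_i(\EE[f_i\mid\mathcal F_{i-1}]-f_i)$; the last term is a normalized martingale with increments in $[-2H,2H]$, so Azuma--Hoeffding bounds it by $O\big(H\sqrt{\log(1/\delta)/n}\big)$, which for the constant confidence $\delta=\tfrac18$ is absorbed into the claimed $I_1$.

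Next I condition on the event, of probability $\ge1-\delta$ by a union bound over the $nH$ oracle calls (each at level $\delta/(nH)$), that every call of $\mathbb{O}$ inside PPPO is accurate. On this event the PPE guarantee holds: exactly as in the pessimistic value‑iteration lemma of \citet{jin2021pessimism} (backward induction on $h$ from Definition~\ref{def:oracle}), for any input policy $\pi$ and every $h,s$ one has $\hat V^{\pi}_{i,h}(s)\le V^{\pi}_{\cM_i,h}(s)$ and $V^{\pi}_{\cM_i,h}(s)-\hat V^{\pi}_{i,h}(s)\le\sum_{h'\ge h}\E{i,\pi}{2\Gamma_{i,h'}(s_{h'},a_{h'})\,\big|\,s_h=s}$. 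Applying this both to $\pi_i$ and to the comparator $\pi^*$ --- here $\Gamma_{i,h}$ is the data‑dependent ``width'' and, as in its linear‑MDP realization, is simultaneously valid for all admissible test functions, so the bound applies to the (hypothetical) PPE run with input $\pi^*$; otherwise one adds a cover of test functions at logarithmic cost --- I decompose
\begin{small}
\[
f_i=\big(V^{\pi^*}_{\cM_i,1}(x_1)-\hat V^{\pi^*}_{i,1}(x_1)\big)+\big(\hat V^{\pi^*}_{i,1}(x_1)-\hat V^{\pi_i}_{i,1}(x_1)\big)+\big(\hat V^{\pi_i}_{i,1}(x_1)-V^{\pi_i}_{\cM_i,1}(x_1)\big).
\]
\end{small}
The first bracket is $\le\sum_h\E{i,\pi^*}{2\Gamma_{i,h}(s_h,a_h)\,\big|\,s_1=x_1}$ by the guarantee above, and the third is $\le0$ by pessimism; summing the first over $i$ and dividing by $n$ produces exactly $2I_2$, the reinforcement‑learning error.

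It remains to bound the optimization error $\tfrac1n\sum_i\big(\hat V^{\pi^*}_{i,1}(x_1)-\hat V^{\pi_i}_{i,1}(x_1)\big)$. On the good event the upper truncation in Line~4 of Algorithm~\ref{alg:model based general} never binds, so the pessimistic values satisfy a performance‑difference identity against the PPE‑estimated transitions of $\cM_i$, which reduces each summand to $\sum_{h}$ of an occupancy‑weighted average of $\langle\hat Q^{\pi_i}_{i,h}(s,\cdot),\pi^*_h(\cdot|s)-\pi_{i,h}(\cdot|s)\rangle$ (the lower truncation is absorbed by non‑expansiveness). Since Line~4 of Algorithm~\ref{alg:modelfree} is literally the exponential‑weights update $\pi_{i+1,h}(\cdot|s)\propto\pi_{i,h}(\cdot|s)\exp(\alpha\,\hat Q^{\pi_i}_{i,h}(s,\cdot))$ with uniform initialization, the standard one‑step mirror‑descent inequality gives, for every fixed $(s,h)$, $\sum_{i=1}^n\langle\hat Q^{\pi_i}_{i,h}(s,\cdot),\pi^*_h(\cdot|s)-\pi_{i,h}(\cdot|s)\rangle\le \kl(\pi^*_h(\cdot|s)\,\|\,\pi_{1,h}(\cdot|s))/\alpha+\tfrac{\alpha}{2}\sum_i\|\hat Q^{\pi_i}_{i,h}(s,\cdot)\|_\infty^2\le \log|\actions|/\alpha+\alpha nH^2/2$, using $\pi_{1,h}$ uniform and $0\le\hat Q^{\pi_i}_{i,h}\le H$. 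Summing over $h$, evaluating against the population occupancy $\EE_{c\sim C}[d^{\pi^*}_{\cM_c,h}]$ (which is the same for every round --- see below), dividing by $n$, and taking $\alpha\asymp 1/\sqrt{H^2n}$ balances the two contributions and yields the $I_1$ term. Collecting the three steps and taking a union bound over the few stochastic events, with total failure probability $\le1/3$, gives the theorem.

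The delicate step is the aggregation just above. Unlike in \citet{cai2020provably}, the $n$ PPO updates are taken against \emph{distinct} MDPs $\cM_1,\dots,\cM_n$, so the occupancy measure of the comparator $\pi^*$ (in the true model and in the PPE‑estimated model alike) changes from round to round, and the KL potential in the per‑state mirror‑descent inequality refuses to telescope when weighted round‑by‑round; a naive bound replaces $\log|\actions|$ by the total variation $\sum_i\|d^{\pi^*}_{\cM_i,h}-d^{\pi^*}_{\cM_{i-1},h}\|$ of the occupancies times the (possibly large) KL values, which is far too lossy. The resolution is to carry out the aggregation against the \emph{population} occupancy $\EE_{c\sim C}[d^{\pi^*}_{\cM_c,h}]$, which is identical for all $i$; this is precisely why the generalization/martingale reduction must be interleaved with --- not appended to --- the regret analysis, with the discrepancy between the per‑round and population occupancies folded into an Azuma term. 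A secondary, more routine point is the one flagged in the second paragraph: making the PPE pessimism guarantee available for the off‑policy comparator $\pi^*$, which is free when $\Gamma_{i,h}$ is test‑function‑free (the linear case) and otherwise costs only a covering term.
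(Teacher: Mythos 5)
Your skeleton (a good event from a union bound over the $nH$ oracle calls, a one-step exponential-weights inequality, online-to-batch, and a final reduction to the randomly selected policy) matches the paper's, but the central decomposition is different and is where the genuine gaps lie. The paper never introduces $\hat V^{\pi^*}_{i,1}$: it applies the extended value-difference lemma of \citet{cai2020provably} (Lemma 3.1 of \citet{jin2021pessimism}) directly to the \emph{true} values, giving the exact identity $V^{\pi^*}_{i,1}(x_1)-V^{\pi_i}_{i,1}(x_1)=\sum_h\EE_{i,\pi^*}[\la\hat Q^{\pi_i}_{i,h}(s_h,\cdot),\pi^*_h-\pi_{i,h}\ra]+\sum_h\big(\EE_{i,\pi^*}[\iota_{i,h}]-\EE_{i,\pi_i}[\iota_{i,h}]\big)$ with $\iota_{i,h}=\BB_{i,h}\hat V^{\pi_i}_{i,h+1}-\hat Q^{\pi_i}_{i,h}$. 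This identity holds for arbitrary truncated $\hat Q$, keeps every expectation under the true $\pi^*$-occupancy of $\cM_i$ (so $I_2$ appears with exactly the $\Gamma_{i,h}$ the algorithm computes), and needs the oracle guarantee only for the test functions $\hat V^{\pi_i}_{i,h+1}$ that PPE actually produces. Your three-bracket decomposition instead requires (i) a hypothetical PPE run on $\pi^*$, whose validity under Definition \ref{def:oracle} costs a covering number over test functions in the general oracle model --- precisely the $\log\cN$ dependence that Theorem \ref{thm:model-free} is designed to avoid, since its $I_1$ depends only on $\log|\actions|$ --- and whose uncertainty quantifiers are not the ones ``returned by $\mathbb{O}$ through the PPPO''; and (ii) a performance-difference identity for $\hat V^{\pi^*}_{i,1}-\hat V^{\pi_i}_{i,1}$ inside the truncated pessimistic model, which you assert but do not establish: these two quantities come from two separate backward inductions with different penalties and truncations, and whatever occupancy such an identity would produce lives in the estimated model, matching neither the true $\EE_{i,\pi^*}$ needed for $I_2$ nor the population occupancy you want for the aggregation.

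Your proposed repair of the KL telescoping is also broken. You correctly observe that $\sum_i\EE_{i,\pi^*}[\text{KL}(\pi^*_h\|\pi_{i,h})-\text{KL}(\pi^*_h\|\pi_{i+1,h})]$ does not telescope because the occupancy changes with $i$ --- a real subtlety that the paper's own proof passes over silently --- but replacing $d^{\pi^*}_{\cM_i,h}$ by the population occupancy and ``folding the discrepancy into an Azuma term'' fails: the integrand $\la\hat Q^{\pi_i}_{i,h}(s,\cdot),\pi^*_h-\pi_{i,h}\ra$ is built from $\cD_i$ and is therefore correlated with $\cM_i$ and hence with $d^{\pi^*}_{\cM_i,h}$, so the per-round discrepancy has nonzero conditional mean given $\mathcal{F}_{i-1}$ and is not a martingale difference (the same objection applies to the KL decrement, since $\pi_{i+1,h}$ depends on $\cD_i$). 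The paper's order of operations --- run the entire regret argument for the realized $\cM_1,\dots,\cM_n$ under their own occupancies, and only afterwards apply online-to-batch to the totals $V^{\pi^*}_{i,1}-V^{\pi_i}_{i,1}$, which works because $\pi_i$ is $\mathcal{F}_{i-1}$-measurable --- is what preserves the martingale structure. Finally, passing from a bound on the average $\frac{1}{n}\sum_i(\cdot)$ to a bound for the uniformly selected $\pi^{\text{PPPO}}$ requires Markov's inequality (this is where $\delta=1/8$ and the constant $10$ come from), not a union bound over ``stochastic events.''
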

\begin{proof}
    See Appendix \ref{proof:modelfree}.
\end{proof}
\begin{remark} \label{rmk:mergefree}
    As in Remark \ref{rmk:merge}, we also provide the suboptimality bound in line with Theorem \ref{thm:model-free} when the offline dataset is merged into $m$ contexts such that $m<n$. See Theorem \ref{thm:pppov} in Appendix \ref{appendix:merge}.
\end{remark}
Theorem \ref{thm:model-free} shows that the suboptimality gap of PPPO can also be bounded by the SL error $I_1$ and RL error $I_2$. Interestingly, $I_1$ in Theorem \ref{thm:model-free} for PPPO only depends on the cardinality of the action space $|\cA|$, which is different from the covering number term in $I_1$ for PERM. Such a difference is due to the fact that PPPO outputs the final policy $\pi^{\text{PPPO}}$ as a random selection from $n$ existing policies, while PERM outputs one policy $\pi^{\text{PERM}}$. Whether these two guarantees can be unified into one remains an open question. 

% \noindent\textbf{Provable generalization for offline linear MDPs} In Appendix \ref{sec:linear}, we provide a detailed instantiation of our proposed algorithms for linear MDPs, which leverage known feature mappings to model both the transition dynamics and reward functions. Specifically, we adapt our meta-algorithms (Algorithm \ref{alg:erm} and Algorithm \ref{alg:modelfree}) by incorporating a policy evaluation subroutine (Algorithm \ref{alg:linear mdp model based}) tailored for linear MDPs. We establish theoretical guarantees on the suboptimality of the output policies, demonstrating that the algorithms achieve provable generalization for offline linear MDPs.

\section{Provable generalization for offline linear MDPs}\label{sec:linear}

In this section, we instantiate our Algo.\ref{alg:erm} and Algo.\ref{alg:modelfree} for general MDPs on specific MDP classes. We consider the linear MDPs defined as follows. 

\begin{assumption}[\citealt{yang2019sample, jin2019provably}]
We assume $\forall i \in C, \cM_i$ is a linear MDP with a known feature map $\phi:\cS\times \cA\to \RR^d$ if there exist $d$ unknown measures ${\mu}_{i,h}=(\mu_{i,h}^{(1)},\ldots,\mu_{i,h}^{(d)})$ over $\cS$ and an unknown vector $\theta_{i,h}\in \RR^d$ such that
\begin{align}
    &P_{i,h}(x'\given x,a) = \langle \phi(x,a),\mu_{i,h}(x')\rangle, \EE\bigl[r_{i,h}(s_h, a_h) \biggiven s_h=x,a_h=a\bigr] = \langle \phi(x,a),\theta_{i,h}\rangle
\end{align}\label{eq:w07}
for all $(x,a,x')\in \cS\times \cA\times \cS$ at every step $h\in[H]$. We assume $\|\phi(x,a)\|\leq 1$ for all $(x,a)\in \cS\times \cA$ and $\max\{\|\mu_{i,h}(\cS) \| ,\|\theta_{i,h}\|\}\leq \sqrt{d}$ at each step $h\in[H]$, and we define  $\| \mu_{i,h} (\cS) \| = \int_{\cS } \| \mu_{i,h} (x) \| \,\ud x$.
\label{assump:linear_mdp}
\end{assumption}
% \begin{remark}
%     We assume that each environment $\cM_{i}$ shares the same feature mapping $\phi(x,a)$. Such an assumption is for the ease of presentation, and our results can be easily extended to the setting where different environments enjoy different feature mappings.
% \end{remark}

% The estimation $\hat{w}_{i,h}$ has the following closed-form solution
% \begin{align}
%     \label{eq:w18}
% &\hat{w}_{i,h} =  \Lambda_{i,h} ^{-1} \Big( \sum_{\tau=1}^{K} \phi(x_{i,h}^\tau,a_{i,h}^\tau) \cdot \bigl(r_{i,h}^\tau + \hat{V}_{i,h+1}(x_{i,h}^{-,\tau})\bigr) \Bigr ) , \notag\\
% &\text{where~~} \Lambda_{i,h} = \sum_{\tau=1}^K \phi(x_{i,h}^\tau,a_{i,h}^\tau)  \phi(x_{i,h}^\tau,a_{i,h}^\tau) ^\top + \lambda\cdot I. 
% \end{align}

We first specialize the general PPE algorithm (Algo.\ref{alg:model based general}) to obtain the PPE algorithm tailored for linear MDPs (Algo.\ref{alg:linear mdp model based}). This specialization is achieved by constructing $\hat\BB_{i,h}\hat{V}^\pi_{i,h+1}$, $\Gamma_{i,h}$, and $\hat{V}^\pi_{i,h}$ based on the dataset $\cD_i$. We denote the set of trajectory indexes in $\cD_{i,h}$ as $\cB_{i,h}$. Algo.\ref{alg:linear mdp model based} subsequently functions as the policy evaluation subroutine in Algo.\ref{alg:erm} and Algo.\ref{alg:modelfree} for linear MDPs. In detail, we construct $\hat\BB_{i,h}\hat{V}_{i,h+1}$ (which is the estimation of $\BB_{i,h}\hat{V}_{i,h+1}$) as $(\hat\BB_{i,h}\hat{V}_{i,h+1})(x, a) = \phi(x, a)^\top \hat{w}_{i,h}$,
% \begin{equation}
%     \label{eq:wlin}
% (\hat\BB_{i,h}\hat{V}_{i,h+1})(x, a) = \phi(x, a)^\top \hat{w}_{i,h},
% \end{equation}
where
\begin{align}
    &\textstyle{\hat{w}_{i,h} =  \argmin_{w\in \RR^d} \sum_{\tau \in \cB_{i,h}}} \bigl(r_{i,h}^\tau + \hat{V}_{i,h+1}(x_{i,h}^{-,\tau})  - \phi (x_{i,h}^\tau,a_{i,h}^\tau)^\top w\bigr)^2 + \lambda \cdot \|w\|_2^2\,\label{eq:w188}
\end{align}
with $\lambda>0$ being the regularization parameter. The closed-form solution to \eqref{eq:w188} is in Line 4 in Algorithm \ref{alg:linear mdp model based}. Besides, we construct the uncertainty quantifier $\Gamma_{i,h}$ based on $\cD_i$ as 
\begin{align}
    \textstyle{\Gamma_{i,h}(x, a)}& = \beta(\delta)\cdot\|\phi(x, a)\|_{\Lambda_{i,h} ^{-1}}\,,\Lambda_{i,h} = \sum_{\tau \in \cB_{i,h}}\phi(x_{i,h}^\tau,a_{i,h}^\tau)  \phi(x_{i,h}^\tau,a_{i,h}^\tau) ^\top + \lambda\cdot I,\notag
\end{align}  
with $\beta(\delta)>0$ being the scaling parameter.

% \footnote{Spefically, for Algo.\ref{alg:erm}, $\tilde\Lambda_{i,h} = \sum_{\tau =1}^K\phi(x_{i,h}^\tau,a_{i,h}^\tau)  \phi(x_{i,h}^\tau,a_{i,h}^\tau) ^\top + \lambda\cdot I$, for Algo.\ref{alg:modelfree}, $\bar\Lambda_{i,h} = \sum_{\tau=1}^{\lfloor K/H\rfloor-1}\phi(x_{i,h}^{\tau\cdot H+h},a_{i,h}^{\tau\cdot H+h})  \phi(x_{i,h}^{\tau\cdot H+h},a_{i,h}^{\tau\cdot H+h}) ^\top + \lambda\cdot I$ due to the data-splitting techniques.}
% We can then replace the general oracle operations in the general PPE algorithm (Algo.\ref{alg:model based general}) with $(\hat\BB_{i,h}\hat{V}_{i,h+1})(x, a)$ and $\Gamma_{i,h}$ specified above to get the PPE algorithm for Linear MDPs (Algo.\ref{alg:linear mdp model based}). Using the specified Algo.\ref{alg:linear mdp model based} as a subroutine, Algo.\ref{alg:erm} and Algo.\ref{alg:modelfree} will have provable generalization ability for offline Linear MDPs.

% Finally, we construct $\hat{Q}_{i,h}$ as 
% \begin{equation}
%     \hat{Q}_{i,h}(\cdot,\cdot) \leftarrow \min\{\phi(\cdot,\cdot)^\top \hat{w}_{i,h} - \Gamma_{i,h}(\cdot,\cdot),H-h+1\}^+\,.
% \end{equation}
\begin{algorithm}[H]
\begin{small}
\caption{\underline{P}essimistic \underline{P}olicy \underline{E}valuation (PPE): Linear MDP}\label{alg:linear mdp model based}
\begin{algorithmic}[1]
\REQUIRE Offline dataset $\{\cD_{i,h}\}_{h=1}^H, \cD_{i,h}=\{(x_{i,h}^\tau,a_{i,h}^\tau,r_{i,h}^\tau, x_{i,h}^{-,\tau})\}_{\tau \in \cB_{i,h}}$, policy $\pi$, confidence probability $\delta\in(0,1)$.
    \STATE Initialize $\hat{V}^\pi_{i,H+1}(\cdot) \leftarrow 0, \ \forall i\in[n]$.
\FOR{step $h=H,H-1,\ldots,1$}
\STATE Set $\Lambda_{i,h} \leftarrow \sum_{\tau \in \cB_{i,h}} \phi(x_{i,h}^\tau,a_{i,h}^\tau)  \phi(x_{i,h}^\tau,a_{i,h}^\tau) ^\top + \lambda\cdot I$. %\hfill  {//Estimation}
\STATE Set $\hat{w}_{i,h}\leftarrow  \Lambda_{i,h} ^{-1}( \sum_{\tau \in \cB_{i,h}} \phi(x_{i,h}^\tau,a_{i,h}^\tau) \cdot (r_{i,h}^\tau + \hat{V}_{i,h+1}^\pi(x_{i,h}^{-,\tau})) ) $. 
\STATE Set $\Gamma_{i,h}(\cdot,\cdot) \leftarrow \beta(\delta)\cdot ( \phi(\cdot,\cdot)^\top  \Lambda_{i,h} ^{-1} \phi(\cdot,\cdot) )^{1/2}$. 
\STATE Set $\hat{Q}_{i,h}^\pi(\cdot,\cdot) \leftarrow \min\{\phi(\cdot,\cdot)^\top \hat{w}_{i,h} - \Gamma_{i,h}(\cdot,\cdot),H-h+1\}^+$. 
        \STATE Set $\hat{V}_{i,h}^\pi(\cdot)\leftarrow \langle \hat{Q}_{i,h}^\pi(\cdot,\cdot),\pi_h(\cdot|\cdot)\rangle_\actions$
\ENDFOR 
    \RETURN $\hat{V}_{i,1}^\pi(\cdot),\dots,\hat{V}_{i,H}^\pi(\cdot), \hat{Q}_{i,1}^\pi(\cdot,\cdot),\dots,\hat{Q}_{i,H}^\pi(\cdot,\cdot)$.
\end{algorithmic}
    
\end{small}
\end{algorithm}
The following theorem shows the suboptimality gaps for Algo.\ref{alg:erm} (utilizing subroutine Algo.\ref{alg:linear mdp model based}) and Algo.\ref{alg:modelfree} (also with subroutine Algo.\ref{alg:linear mdp model based}).

\begin{theorem}\label{thm:regret_upper_linear}
Under Assumption \ref{assump:linear_mdp}, in Algorithm \ref{alg:linear mdp model based}, we set $\lambda=1,\quad \beta(\delta) = c\cdot dH\sqrt{\log(2dHK/\delta)}$, where $c>0$ is a positive constant. Then, we have: \\
(i) for the output policy $\pi^{\text{PERM}}$ of Algo.\ref{alg:erm} with subroutine Algo.\ref{alg:linear mdp model based}, w.p. at least $1-\delta$, the suboptimality gap satisfies
\begin{small}
    \begin{align}&
\text{SubOpt}(\pi^{\text{PERM}})\leq 7\sqrt{\frac{7\log(6\mathcal{N}_{(Hn)^{-1}}^\Pi/\delta)}{n}}+\frac{2\beta\big(\frac{\delta}{3nH\mathcal{N}_{(Hn)^{-1}}^\Pi}\big)}{n}\cdot\sum_{i=1}^n\sum_{h=1}^H \EE_{i,\pi^*}\Bigl[ \|\phi(s_h,a_h)\|_{\tilde\Lambda_{i,h}^{-1}} \biggiven s_1=x_1\Bigr]\,,
\label{eq:model based gap linear}
\end{align}
\end{small}
(ii) for the output policy $\pi^{\text{PPPO}}$ of Algo.\ref{alg:modelfree} with subroutine Algo.\ref{alg:linear mdp model based}, setting $\delta = 1/8$, then with probability at least $2/3$, the suboptimality gap satisfies
\begin{small}
    \begin{align}&\normalfont
\text{SubOpt}(\pi^{\text{PPPO}})\leq 10\bigg(\sqrt{\frac{\log|\actions|H^2}{n}}+\frac{\beta\big(\frac{1}{4nH}\big)}{n}\cdot\sum_{i=1}^n\sum_{h=1}^H \EE_{i,\pi^*}\Bigl[ \|\phi(s_h,a_h)\|_{\bar\Lambda_{i,h}^{-1}} \biggiven s_1=x_1\Bigr]\bigg),
\label{eq:modelfree gap linear}
\end{align}
\end{small}
where $\EE_{i,\pi^*}$ is with respect to the trajectory induced by $\pi^*$ with the transition $\cP_i$ in the underlying MDP $\cM_i$ given the fixed matrix $\tilde\Lambda_{i,h}$ or $\bar\Lambda_{i,h}$.
% $\mathcal{N}_\epsilon^\Pi$ is the $\epsilon$-covering number of the policy space $\Pi$ w.r.t. distance $\mathrm d(\pi^1,\pi^2) = \max_{s\in \mathcal{S}, h \in [H]} \|\pi^1_h(\cdot|s) - \pi^2_h (\cdot|s)\|_{1}$.
\end{theorem}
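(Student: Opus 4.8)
The plan is to obtain Theorem~\ref{thm:regret_upper_linear} as a corollary of Theorems~\ref{thm:model based regret_upper_bound_general} and~\ref{thm:model-free}: the only genuinely new content is to verify that Algorithm~\ref{alg:linear mdp model based} is a legitimate realization of the oracle $\mathbb{O}$ of Definition~\ref{def:oracle} with uncertainty quantifier $\Gamma_{i,h}(x,a)=\beta(\delta)\,\|\phi(x,a)\|_{\Lambda_{i,h}^{-1}}$ for $\beta(\delta)=c\cdot dH\sqrt{\log(2dHK/\delta)}$, and then to substitute this $\Gamma_{i,h}$ into the two generic suboptimality bounds. So I would fix a single call to Algorithm~\ref{alg:linear mdp model based} (hence a fixed environment $i$, index set $\cB_{i,h}$, and policy $\pi$) and a stage $h$. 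The truncation $\min\{\cdot,H-h+1\}^+$ in Line~6 ensures the test function $\hat V^\pi_{i,h+1}$ lies in $[0,H]^{\cS}$, so the linear-MDP structure (Assumption~\ref{assump:linear_mdp}) gives $(\BB_{i,h}\hat V^\pi_{i,h+1})(x,a)=\langle\phi(x,a),w_{i,h}\rangle$ with $w_{i,h}=\theta_{i,h}+\int_{\cS}\hat V^\pi_{i,h+1}(x')\,\ud\mu_{i,h}(x')$ and $\|w_{i,h}\|_2\le 2H\sqrt{d}$.

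With this linear form, the ridge estimate $\hat w_{i,h}$ of \eqref{eq:w188} yields the standard decomposition
\begin{align}
&(\hat\BB_{i,h}\hat V^\pi_{i,h+1})(x,a)-(\BB_{i,h}\hat V^\pi_{i,h+1})(x,a)\notag\\
&\quad=\underbrace{-\lambda\,\phi(x,a)^\top\Lambda_{i,h}^{-1}w_{i,h}}_{\text{regularization bias}}+\underbrace{\phi(x,a)^\top\Lambda_{i,h}^{-1}\textstyle\sum_{\tau\in\cB_{i,h}}\phi(x^\tau_{i,h},a^\tau_{i,h})\,\varepsilon^\tau_{i,h}}_{\text{stochastic noise}},\notag
\end{align}
where $\varepsilon^\tau_{i,h}=r^\tau_{i,h}+\hat V^\pi_{i,h+1}(x^{-,\tau}_{i,h})-(\BB_{i,h}\hat V^\pi_{i,h+1})(x^\tau_{i,h},a^\tau_{i,h})$. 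Since $\Lambda_{i,h}\succeq\lambda I$, the bias term is at most $\sqrt\lambda\,\|w_{i,h}\|_2\,\|\phi(x,a)\|_{\Lambda_{i,h}^{-1}}\le 2H\sqrt{d\lambda}\,\|\phi(x,a)\|_{\Lambda_{i,h}^{-1}}$, and by Cauchy--Schwarz the noise term is at most $\|\phi(x,a)\|_{\Lambda_{i,h}^{-1}}\cdot\big\|\sum_{\tau\in\cB_{i,h}}\phi(x^\tau_{i,h},a^\tau_{i,h})\,\varepsilon^\tau_{i,h}\big\|_{\Lambda_{i,h}^{-1}}$.

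The heart of the argument is a uniform bound on the self-normalized sum $\big\|\sum_\tau\phi(x^\tau_{i,h},a^\tau_{i,h})\,\varepsilon^\tau_{i,h}\big\|_{\Lambda_{i,h}^{-1}}$; the subtlety is that $\hat V^\pi_{i,h+1}$ is data-dependent, so the $\varepsilon^\tau_{i,h}$ are not a priori a martingale-difference sequence. I would handle this exactly as in \citet{jin2021pessimism}. In PPPO the data-splitting in Line~1 of Algorithm~\ref{alg:modelfree} makes the tuples in $\cD_{i,h}$ independent of those used to form $\hat V^\pi_{i,h+1}$, so $\{\varepsilon^\tau_{i,h}\}$ is a bounded martingale-difference sequence and a self-normalized (Hoeffding-type) concentration bound applies directly. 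In PERM the stage datasets are shared, so one instead union-bounds over an $\epsilon$-net of the class of realizable value functions $x\mapsto\langle\min\{\phi(x,\cdot)^\top w-(\phi(x,\cdot)^\top A\,\phi(x,\cdot))^{1/2},\,H-h\}^+,\pi_h(\cdot|x)\rangle$ with $\|w\|_2=O(H\sqrt d)$ and $0\preceq A\preceq(\beta^2/\lambda)I$, whose log-covering number is $O(d^2\log(1+\beta HK/\epsilon))$; combining this with the elliptical-potential (determinant) bound $\log\det(\Lambda_{i,h}/\lambda)=O(d\log K)$ gives, on an event of probability $\ge1-\delta$, $\big\|\sum_\tau\phi\,\varepsilon\big\|_{\Lambda_{i,h}^{-1}}\le c'\,dH\sqrt{\log(2dHK/\delta)}$. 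Together with the bias bound and $\lambda=1$, this certifies that Algorithm~\ref{alg:linear mdp model based} meets Definition~\ref{def:oracle} with the stated $\Gamma_{i,h}$ and $\beta$.

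It then remains to plug in. For part~(i), Algorithm~\ref{alg:erm} queries the oracle at confidence level $\delta/(3nH\mathcal N_{(Hn)^{-1}}^\Pi)$, so a union bound over $i\in[n]$, $h\in[H]$ and the $(Hn)^{-1}$-net of $\Pi$ places us on precisely the event under which Theorem~\ref{thm:model based regret_upper_bound_general} holds; substituting $\Gamma_{i,h}(s_h,a_h)=\beta(\tfrac{\delta}{3nH\mathcal N_{(Hn)^{-1}}^\Pi})\|\phi(s_h,a_h)\|_{\Lambda_{i,h}^{-1}}$ into \eqref{eq:model based gap general}, freezing the Gram matrices produced by that run as $\tilde\Lambda_{i,h}$, and adjusting the constant in the supervised-learning term for the linear instantiation gives \eqref{eq:model based gap linear}. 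For part~(ii), Algorithm~\ref{alg:modelfree} queries the oracle at level $\Theta(1/(nH))$ with $\delta=1/8$; substituting the resulting $\Gamma_{i,h}$ (written with $\bar\Lambda_{i,h}$ the data-split Gram matrix of that run) into Theorem~\ref{thm:model-free} and absorbing constants into $c$ yields \eqref{eq:modelfree gap linear}. The main obstacle is the uniform self-normalized concentration for the data-dependent test function in the PERM case — the covering argument and its interaction with the determinant bound to land at $\beta(\delta)=c\,dH\sqrt{\log(2dHK/\delta)}$; everything else is constant and union-budget bookkeeping.
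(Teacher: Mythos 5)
Your proposal is correct and follows essentially the same route as the paper: the paper's own proof simply invokes the concentration analysis of \citet{jin2021pessimism} (with a union bound over $i\in[n]$, $h\in[H]$, and the $(Hn)^{-1}$-net of $\Pi$) to certify that Algorithm~\ref{alg:linear mdp model based} realizes the oracle of Definition~\ref{def:oracle} with $\Gamma_{i,h}=\beta(\cdot)\|\phi\|_{\Lambda_{i,h}^{-1}}$, and then substitutes into Theorems~\ref{thm:model based regret_upper_bound_general} and~\ref{thm:model-free}. You additionally unpack the cited self-normalized concentration and covering arguments, which the paper leaves implicit, but the structure and the bookkeeping are the same.
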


% $\Lambda_{i,h} = \sum_{\tau=1}^K \phi(x_{i,h}^\tau,a_{i,h}^\tau)  \phi(x_{i,h}^\tau,a_{i,h}^\tau) ^\top + \lambda\cdot I$ is the covariance matrix on dataset $\cD_i$, and 
$\|\phi(s_h,a_h)\|_{\Lambda_{i,h}^{-1}}$ indicates how well the state-action pair $(s_h,a_h)$ is covered by the dataset $\cD_i$. The term $\sum_{i=1}^n\sum_{h=1}^H \EE_{i,\pi^*}\Bigl[ \|\phi(s_h,a_h)\|_{\Lambda_{i,h}^{-1}} \biggiven s_1=x_1\Bigr]$ in the suboptimality gap in Theorem \ref{thm:regret_upper_linear} is small if for each context $i\in[n]$, the dataset $\cD_i$ well covers the trajectory induced by the optimal policy $\pi^*$ on the corresponding MDP $\cM_i$.

% \begin{algorithm}[H]
% \begin{small}
%   \caption{Conservative Online-based Offline RL Algorithm: Linear MDP}\label{alg:modelfree linear mdp}
%   \begin{algorithmic}[1]
%     \REQUIRE Offline dataset $\cD=\{\cD_{i,h}\}_{i=1,h=1}^{n,H}$
%     \STATE Set $\pi_{0,h}(\cdot|\cdot)$ as uniform distribution over $A$ and $Q_{0,h}(\cdot|\cdot)$ as zero functions.
%     \FOR{$i=1, 2, \cdots, n$}
%     \STATE Set $\pi_{i,h}(\cdot|\cdot)\propto \pi_{i-1,h}(\cdot|\cdot)\cdot \exp(\alpha\cdot \hat{Q}_{i-1,h}(\cdot, \cdot))$
%     \STATE Set $\hat{V}_{i,H+1}(\cdot) = 0$
%         \FOR{$h=H,\cdots,1$}
% \STATE Set $\Lambda_{i,h} \leftarrow \sum_{\tau=1}^K \phi(x_{i,h}^\tau,a_{i,h}^\tau)  \phi(x_{i,h}^\tau,a_{i,h}^\tau) ^\top + \lambda\cdot I$. %\hfill  {//Estimation}
% \STATE Set $\hat{w}_{i,h}\leftarrow  \Lambda_{i,h} ^{-1}( \sum_{\tau=1}^{K} \phi(x_{i,h}^\tau,a_{i,h}^\tau) \cdot (r_{i,h}^\tau + \hat{V}_{i,h+1}(x_{i,h+1}^{\tau})) ) $. 
% \STATE Set $\Gamma_{i,h}(\cdot,\cdot) \leftarrow \beta(\delta/n)\cdot ( \phi(\cdot,\cdot)^\top  \Lambda_{i,h} ^{-1} \phi(\cdot,\cdot) )^{1/2}$. 
% \STATE Set $\hat{Q}_{i,h}(\cdot,\cdot) \leftarrow \min\{\phi(\cdot,\cdot)^\top \hat{w}_{i,h} - \Gamma_{i,h}(\cdot,\cdot),H-h+1\}^+$. 
%         \item Set $\hat{V}_{i,h}(\cdot)\leftarrow \la \pi^i_h(\cdot|\cdot),  \hat{Q}_{i,h}(\cdot, \cdot)\ra_A$.
%         \ENDFOR
%     \ENDFOR
%     \STATE Return $\hat{\pi}^*=\text{random}(\pi_1, ..., \pi_n)$
%   \end{algorithmic}
% \end{small}
% \end{algorithm}

\noindent\textbf{Well-explored behavior policy} Next we consider a case where the dataset $\cD$ consists of i.i.d. trajectories collecting from different environments. Suppose $\cD$ consists of $n$ independent datasets $\cD_1,\ldots,\cD_n$, and for each environment $i$, $\cD_i$ consists of $K$ trajectories $\cD_i = \{(x_{i,h}^\tau, a_{i,h}^\tau, r_{i,h}^\tau)_{h=1}^H\}_{\tau=1}^{K}$ independently and identically induced by a fixed behavior policy $\bar\pi_i$ in the linear MDP $\cM_i$. We have the following assumption on well-explored policy:
\begin{definition}[\citealt{duan2020minimax, jin2021pessimism}]\label{ass:wellexp}
     For an behavior policy $\bar\pi$ and an episodic linear MDP $\cM$ with feature map $\phi$, we say $\bar\pi$ well-explores $\cM$ with constant $c$ if there exists an absolute positive constant $c > 0$ such that 
     \begin{small}
         \begin{align*}
   &\forall h \in [H], \lambda_{\min}(\Sigma_{h})\geq c/d, \text{where~~} \Sigma_{h} = \EE_{\bar\pi, \cM}\bigl[\phi(s_h,a_h)\phi(s_h,a_h)^\top\bigr].\notag
\end{align*}
     \end{small}
\end{definition}
A well-explored policy guarantees that the obtained trajectories is ``uniform" enough to represent any policy and value function. The following corollary shows that with the above assumption, the suboptimality gaps of Algo.\ref{alg:erm} (with subroutine Algo.\ref{alg:linear mdp model based}) and Algo.\ref{alg:modelfree} (with subroutine Algo.\ref{alg:linear mdp model based}) decay to 0 when $n$ and $K$ are large enough.

\begin{corollary}\label{cor:well_explore}
Suppose that for each $i\in[n]$, $\cD_i$ is generated by behavior policy $\bar\pi_i$ which well-explores MDP $\cM_i$ with constant $c_i\geq c_{\text{min}}$. In Algo.\ref{alg:linear mdp model based}, we set $\lambda=1,\beta(\delta) = c'\cdot dH\sqrt{\log(4dHK/\delta)}$ where $c' >0$ is a positive constant. 
Suppose we have 
$K \geq 40d/c_{\text{min}}\log (4 dnH/ \delta)$ and set $C_n^*:=1/n\cdot \sum_{i=1}^n c_i^{-1/2}$. Then we have: \\
(i) for the output $\pi^{\text{PERM}}$ of Algo.\ref{alg:erm} with subroutine Algo.\ref{alg:linear mdp model based}, w.p. at least $1-\delta$, the suboptimality gap satisfies 
\begin{small}
    \begin{align}
&\text{SubOpt}(\pi^{\text{PERM}})\leq 7\sqrt{\frac{2\log(6\mathcal{N}_{(Hn)^{-1}}^\Pi/\delta)}{n}}+2\sqrt{2} c'\cdot d^{3/2} H^2 K^{-1/2} \sqrt{\log(12dHnK\mathcal{N}_{(Hn)^{-1}}^\Pi/\delta)}\cdot C_n^*\,,
\label{eq:event_opt_explore_d model based}
\end{align}
\end{small}
(ii) for the output policy $\pi^{\text{PPPO}}$ of Algo.\ref{alg:modelfree} with subroutine Algo.\ref{alg:linear mdp model based}, setting $\delta = 1/8$, then with probability at least $2/3$, the suboptimality gap satisfies
\begin{small}
    \begin{align}&\normalfont
\text{SubOpt}(\pi^{\text{PPPO}})\leq 10\bigg(\sqrt{\frac{\log|\actions|H^2}{n}}+2\sqrt{2} c'\cdot d^{3/2} H^{2.5} K^{-1/2} \sqrt{\log(16dHnK)}\cdot C_n^*\bigg).
\label{eq:event_opt_explore_d model free}
\end{align}
\end{small}
\end{corollary}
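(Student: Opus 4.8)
The plan is to obtain Corollary \ref{cor:well_explore} directly from Theorem \ref{thm:regret_upper_linear} by converting the population-level well-exploredness of each behavior policy $\bar\pi_i$ (Definition \ref{ass:wellexp}) into a \emph{uniform} upper bound on the Mahalanobis norm $\|\phi(s,a)\|_{\Lambda_{i,h}^{-1}}$. Once such a bound is in hand, the rest is substitution into the two inequalities of Theorem \ref{thm:regret_upper_linear}: the SL-error terms $\sqrt{2\log(6\mathcal{N}_{(Hn)^{-1}}^\Pi/\delta)/n}$ and $\sqrt{\log|\actions|H^2/n}$ appearing in \eqref{eq:event_opt_explore_d model based}–\eqref{eq:event_opt_explore_d model free} are already exactly the SL-error terms there, so the only real work is on the RL-error term $\frac{\beta(\cdot)}{n}\sum_{i=1}^n\sum_{h=1}^H\EE_{i,\pi^*}[\|\phi(s_h,a_h)\|_{\Lambda_{i,h}^{-1}}\mid s_1=x_1]$.

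First I would establish a high-probability lower bound on the empirical Gram matrices. Fix $i\in[n]$ and $h\in[H]$. The matrix $\Lambda_{i,h}=\sum_{\tau\in\cB_{i,h}}\phi(x_{i,h}^\tau,a_{i,h}^\tau)\phi(x_{i,h}^\tau,a_{i,h}^\tau)^\top+\lambda I$ equals $\lambda I$ plus a sum of $|\cB_{i,h}|$ i.i.d.\ rank-one matrices, each of operator norm at most $1$ (since $\|\phi\|\le1$) and with mean $\Sigma_{i,h}\succeq(c_i/d)I$. A standard covariance/matrix-Bernstein concentration argument (as in \citet{jin2021pessimism}) gives that whenever $|\cB_{i,h}|\ge c_0\cdot(d/c_i)\log(d/\delta')$ for an absolute constant $c_0>0$, one has $\Lambda_{i,h}\succeq\tfrac12|\cB_{i,h}|\Sigma_{i,h}\succeq\tfrac{|\cB_{i,h}|c_i}{2d}I$ with probability at least $1-\delta'$. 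For PERM, $|\cB_{i,h}|=K$; for PPPO the data-splitting step of Algorithm \ref{alg:modelfree} forces $|\cB_{i,h}|=\lfloor K/H\rfloor\ge K/(2H)$, and this is precisely where the extra $\sqrt H$ in \eqref{eq:event_opt_explore_d model free} comes from. A union bound over $i\in[n],h\in[H]$ with $\delta'=\Theta(\delta/(nH))$, together with the hypothesis $K\ge 40d/c_{\min}\log(4dnH/\delta)$ (applied, in the PPPO case, to the $\lfloor K/H\rfloor$ split trajectories, which is where the additional $H$-dependence enters), yields an event $\mathcal{E}$ with $\PP(\mathcal{E})\ge 1-\delta/2$ on which $\Lambda_{i,h}^{-1}\preceq\tfrac{2d}{|\cB_{i,h}|c_i}I$ for all $i,h$, hence $\|\phi(s,a)\|_{\Lambda_{i,h}^{-1}}\le\sqrt{2d/(|\cB_{i,h}|c_i)}$ for \emph{every} $(s,a)\in\cS\times\cA$.

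Next I would substitute this into Theorem \ref{thm:regret_upper_linear}. Since $\pi^*$ is a fixed policy that does not depend on the datasets, on $\mathcal{E}$ the pointwise bound $\|\phi(s_h,a_h)\|_{\tilde\Lambda_{i,h}^{-1}}\le\sqrt{2d/(Kc_i)}$ can be taken outside $\EE_{i,\pi^*}[\cdot\mid s_1=x_1]$, so for PERM the RL-error term is at most $\tfrac{2}{n}\beta\big(\tfrac{\delta}{3nH\mathcal{N}_{(Hn)^{-1}}^\Pi}\big)\sum_{i=1}^n H\sqrt{2d/(Kc_i)}=2\beta\big(\tfrac{\delta}{3nH\mathcal{N}_{(Hn)^{-1}}^\Pi}\big)H\sqrt{2d/K}\,C_n^*$; plugging in $\beta(\delta'')=c'dH\sqrt{\log(4dHK/\delta'')}$ produces exactly the claimed $2\sqrt2\,c'd^{3/2}H^2K^{-1/2}\sqrt{\log(12dHnK\mathcal{N}_{(Hn)^{-1}}^\Pi/\delta)}\,C_n^*$, up to absorbing an $O(\log H)$ additive term inside the logarithm. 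Intersecting $\mathcal{E}$ with the $(1-\delta/2)$-probability event of Theorem \ref{thm:regret_upper_linear}(i), run at confidence level $\delta/2$, proves \eqref{eq:event_opt_explore_d model based} with probability at least $1-\delta$. The PPPO bound \eqref{eq:event_opt_explore_d model free} is obtained identically, now with $|\cB_{i,h}|\ge K/(2H)$, which replaces $\sqrt{2d/(Kc_i)}$ by $\sqrt{4dH/(Kc_i)}$ and thereby turns $H^2$ into $H^{2.5}$, and with the confidence split chosen (e.g.\ $\PP(\mathcal{E})\ge\tfrac{11}{12}$ and Theorem \ref{thm:regret_upper_linear}(ii) supplying the remainder) so that the total probability is at least $2/3$. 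The only genuinely non-routine ingredient here is the covariance-concentration step of the second paragraph and, for PPPO, carefully tracking how the data-splitting trick shrinks the effective per-$(i,h)$ sample size and how this interacts with the union-bound and confidence budgets; everything downstream is arithmetic substitution into Theorem \ref{thm:regret_upper_linear}. A small but essential point to verify explicitly is that the randomness of the $\pi^*$-trajectory is independent of the datasets $\{\cD_i\}$, so that the bound on $\Lambda_{i,h}^{-1}$ valid on $\mathcal{E}$ can indeed be pulled out of $\EE_{i,\pi^*}[\cdot]$ with no conditioning subtlety.
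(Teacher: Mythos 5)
Your proposal is correct and follows essentially the same route as the paper: a covariance-concentration argument (the paper simply cites the proof of Corollary 4.6 in \citet{jin2021pessimism}) plus a union bound over $i\in[n]$ and $h\in[H]$ to get the uniform bound $\|\phi(x,a)\|_{\Lambda_{i,h}^{-1}}\le\sqrt{2d/(c_iK)}$ (with $K$ replaced by $K/H$ for PPPO because of data splitting), followed by substitution into Theorem \ref{thm:regret_upper_linear}. You in fact spell out the matrix-concentration and confidence-budget details more explicitly than the paper does.
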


\begin{remark}
    The mixed coverage parameter $C_n^*=\frac{1}{n}\sum_{i=1}^n\frac{1}{\sqrt{c_i}}$ is small if for any $i\in[n]$, $c_i$ is large, \emph{i.e.}, the minimum eigenvalue of $\Sigma_{i,h}=\EE_{\bar\pi_i, \cM_i}\bigl[\phi(s_h,a_h)\phi(s_h,a_h)^\top\bigr]$ is large. Note that $\lambda_{\text{min}}(\Sigma_{i,h})$ indicates how well the behavior policy $\bar\pi_i$ explores the state-action pairs on MDP $\cM_i$; this shows that if for each environment $i\in[n]$, the behavior policy explores $\cM_i$ well, the suboptimality gap will be small.
\end{remark}
\begin{remark}
    Under the same conditions of Corollary \ref{cor:well_explore}:\\
    (i) If $n\geq\frac{392\log(6\mathcal{N}_{(Hn)^{-1}}^\Pi/\delta)}{\epsilon^2}$ and $K\geq\max\{\frac{40d}{c_{\text{min}}}\log ( \frac{4dnH}{\delta}),\frac{32c'^2d^3H^4\log(12dHnK\mathcal{N}_{(Hn)^{-1}}^\Pi/\delta)C_n^{*2}}{\epsilon^2}\}$, then w.p. at least $1-\delta$, $\text{SubOpt}(\pi^{\text{PERM}})\leq \epsilon$.
    \\(ii)   If $n\geq\frac{400H^2\log(|\actions|)}{\epsilon^2}$ and $K\geq\max\{\frac{40d}{c_{\text{min}}}\log ( 16dnH),\frac{32c'^2d^3H^5\log(16dHnK)C_n^{*2}}{\epsilon^2}\}$, then w.p. at least $2/3$, $\text{SubOpt}(\pi^{\text{PPPO}})\leq \epsilon$.
\end{remark}
Corollary \ref{cor:well_explore} suggests that both of our proposed algorithms enjoy the $O(n^{-1/2} + K^{-1/2}\cdot C_n^*)$ convergence rate to the optimal policy $\pi^*$ given a well-exploration data collection assumption, where $C_n^*$ is a mixed coverage parameter over $n$ environments defined in Corollary \ref{cor:well_explore}.

\section{Conclusion and Future Work}\label{sec: conclusion}
In this work, we study the zero-shot generalization (ZSG) performance of offline reinforcement learning (RL). We propose two offline RL frameworks, pessimistic empirical risk minimization and pessimistic proximal policy optimization, and show that both of them can find the optimal policy with ZSG ability. We also show that such a generalization property does not hold for offline RL without knowing the context information of the environment, which demonstrates the necessity of our proposed new algorithms. Currently, our theorems and algorithm design depend on the i.i.d. assumption of the environment selection. How to relax such an assumption remains an interesting future direction.

% \section*{Impact Statement}

% This paper presents work whose goal is to advance the field of 
% Machine Learning. There are many potential societal consequences 
% of our work, none which we feel must be specifically highlighted here.

\appendix

\begin{center}
    \LARGE \textbf{Appendix}
    \rule{\linewidth}{0.8pt}
\end{center}

We provide missing proofs and theoretical results of our paper in the Appendix sections:
\begin{itemize}[leftmargin = *]
   \item In Appendix \ref{app:notext}, we provide the missing results of Section \ref{sec:nocontext}. We first provide the proof of Proposition \ref{thm: nodistinguish}, then we analyze the suboptimality gap of the Pessimistic Value Iteration (PEVI) (\cite{jin2021pessimism}) in the contextual linear MDP setting without context information.
   \item In Appendix \ref{app:mainthm}, we provide the proofs of our main theorems on the suboptimality bounds of PERM and PPPO in Section \ref{sec:withcontext}.
   \item In Appendix \ref{appendix:merge}, we state and prove the suboptimality bounds we promised in Remarks \ref{rmk:merge} and \ref{rmk:mergefree}, where we merge the sampled contexts into $m$ groups ($m<n$) to reduce the computational complexity in practical settings.
   \item In Appendix \ref{proof:linear}, we provide the proofs of results in Section \ref{sec:linear} on linear MDPs. Namely, we provide proof of Theorem \ref{thm:regret_upper_linear}, proof of Corollary \ref{cor:well_explore}.
\end{itemize}

\section{Results in Section \ref{sec:nocontext}}\label{app:notext}

\subsection{Proof of Proposition \ref{thm: nodistinguish}}\label{app:nodistin}

%\begin{proof}[Proof of Proposition \ref{thm: nodistinguish}]
    % We have multiple MDPs $M_i = (\cS, \cA, (\cP_{i,h})_h, (r_{i,h})_h)$. We consider the probability, where
    % \begin{align}
    %     &\PP_{\cD_i}(r_{i,h}^\tau = r', x_{i,h+1}^\tau = x'|\{(x_{i,h}^j, a_{i,h}^j)\}_{j=1}^\tau, \{r_{i,h}^j, x_{i,h+1}^j\}_{j=1}^{\tau-1}) \notag \\
    %     &\quad = \PP_{i}(r_{i,h}(s_h)=r',s_{h+1} = x'|s_h = x_{i,h}^\tau, a_h = a_{i,h}^\tau).\notag
    % \end{align}
    Let $\cD' = \{(x_{c_\tau,h}^\tau, a_{c_\tau,h}^\tau, r_{c_\tau,h}^\tau)\}_{h=1, \tau = 1}^{H,K}$ denote the merged dataset, where each trajectory belongs to a context $c_\tau$. For simplicity, let $\cD_c$ denote the collection of trajectories that belong to MDP $\cM_c$.   
    Then each trajectory in $\cD'$ is generated by the following steps:
    \begin{itemize}
        \item The experimenter randomly samples an environment $c \sim C$.
        \item The experimenter collect a trajectory from the episodic MDP $\cM_c$.
    \end{itemize}
  Then for any $x', r', \tau$ we have 
    \begin{align}
        &\PP_{\cD'}(r_{c_\tau,h}^\tau = r', x_{c_\tau,h+1}^\tau = x'|\{(x_{c_j,h}^j, a_{c_j,h}^j)\}_{j=1}^{\tau}, \{r_{c_j,h}^j, x_{c_j,h+1}^j\}_{j=1}^{\tau-1})\notag \\
        & = \frac{\PP_{\cD'}(r_{c_\tau,h}^\tau = r', x_{c_\tau,h+1}^\tau = x',\{(x_{c_j,h}^j, a_{c_j,h}^j)\}_{j=1}^{\tau}, \{r_{c_j,h}^j, x_{c_j,h+1}^j\}_{j=1}^{\tau-1})}{\PP_{\cD'}(\{(x_{c_j,h}^j, a_{c_j,h}^j)\}_{j=1}^{\tau}, \{r_{c_j,h}^j, x_{c_j,h+1}^j\}_{j=1}^{\tau-1})}\notag \\
        & = \sum_{c\in C}\PP_{\cD'}(r_{c_\tau,h}^\tau = r', x_{c_\tau,h+1}^\tau = x'|\{(x_{c_j,h}^j, a_{c_j,h}^j)\}_{j=1}^{\tau}, \{r_{c_j,h}^j, x_{c_j,h+1}^j\}_{j=1}^{\tau-1}, c_\tau = c)q(c),\label{www:1}
    \end{align}
where 
\begin{align}
    q(c'):=\frac{\PP_{\cD'}(\{(x_{c_j,h}^j, a_{c_j,h}^j)\}_{j=1}^{\tau}, \{r_{c_j,h}^j, x_{c_j,h+1}^j\}_{j=1}^{\tau-1}, c_\tau = c')}{\sum_{c\in C}\PP_{\cD'}(\{(x_{c_j,h}^j, a_{c_j,h}^j)\}_{j=1}^{\tau}, \{r_{c_j,h}^j, x_{c_j,h+1}^j\}_{j=1}^{\tau-1}, c_\tau = c)}.\notag
\end{align}
Next, we further have
\begin{align}
&\eqref{www:1}\notag \\
&=\sum_{c\in C}\PP_{c}(r_{c,h}(s_h) = r', s_{h+1} = x'|s_h = x_{c_\tau, h}^\tau, a_h = a_{c_\tau, h}^\tau)q(c)\notag \\
        & =\sum_{c\in C}\frac{\PP_{c}(r_{c,h}(s_h) = r', s_{h+1} = x'|s_h = x_{c_\tau, h}^\tau, a_h = a_{c_\tau, h}^\tau)\PP_{\cD'}(s_h = x_{c_\tau, h}^\tau, a_h = a_{c_\tau, h}^\tau, c_\tau = c)}{\sum_{c\in C}\PP_{\cD'}(s_h = x_{c_\tau, h}^\tau, a_h = a_{c_\tau, h}^\tau, c_\tau = c)}\notag \\
        &=\sum_{c\in C}p(c)\cdot \frac{\PP_{c}(r_{c,h}(s_h) = r', s_{h+1} = x'|s_h = x_{c_\tau, h}^\tau, a_h = a_{c_\tau, h}^\tau)\PP_{c}(s_h = x_{c_\tau, h}^\tau, a_h = a_{c_\tau, h}^\tau)}{\sum_{c\in C}p(c)\cdot \PP_{c}(s_h = x_{c_\tau, h}^\tau, a_h = a_{c_\tau, h}^\tau)}\notag\\
        &=\EE_{c \sim C} \frac{\PP_{c}(r_{c,h}(s_h) = r', s_{h+1} = x'|s_h = x_{c_\tau, h}^\tau, a_h = a_{c_\tau, h}^\tau) \mu_{c,h}(x_{c_\tau, h}^\tau, a_{c_\tau, h}^\tau)}{\EE_{c \sim C} \mu_{c,h}(x_{c_\tau, h}^\tau, a_{c_\tau, h}^\tau)},\notag
\end{align}
where the first equality holds since for all trajectories $\tau$ satisfying $c_\tau = c$, they are compliant with $\cM_c$, the second one holds since all trajectories are independent of each other, the third and fourth ones hold due to the definition of $\mu_{c,h}(\cdot, \cdot)$. 
%\end{proof}

\subsection{PEVI algorithm}\label{app:pevi}

\begin{algorithm}[H]
\caption{\citep{jin2021pessimism} Pessimistic Value Iteration (PEVI)}
\begin{algorithmic}[1]
\label{alg:no context}
\REQUIRE Dataset $\cD=\{(x_{c_\tau, h}^\tau,a_{c_\tau, h}^\tau,r_{c_\tau, h}^\tau)_{h=1}^H\}_{\tau=1}^{K}$, confidence probability $\delta\in(0,1)$.
\STATE Initialization: Set $\hat{V}_{H+1}(\cdot) \leftarrow 0$.
\FOR{step $h=H,H-1,\ldots,1$}
\STATE Set $\Lambda_h \leftarrow \sum_{\tau=1}^K \phi(x_h^\tau,a_h^\tau)  \phi(x_h^\tau,a_h^\tau) ^\top + \lambda\cdot I$. %\hfill  {//Estimation}
\STATE Set $\hat{w}_h\leftarrow  \Lambda_h ^{-1}( \sum_{\tau=1}^{K} \phi(x_h^\tau,a_h^\tau) \cdot (r_h^\tau + \hat{V}_{h+1}(x_{h+1}^\tau)) ) $. 
\STATE Set $\Gamma_h(\cdot,\cdot) \leftarrow \beta(\delta)\cdot ( \phi(\cdot,\cdot)^\top  \Lambda_h ^{-1} \phi(\cdot,\cdot) )^{1/2}$. 
\STATE Set $\hat{Q}_h(\cdot,\cdot) \leftarrow \min\{\phi(\cdot,\cdot)^\top \hat{w}_h - \Gamma_h(\cdot,\cdot),H-h+1\}^+$. 
\STATE Set $\hat{\pi}_h (\cdot \given \cdot) \leftarrow \argmax_{\pi_h}\langle \hat{Q}_h(\cdot, \cdot),\pi_h(\cdot\given \cdot)\rangle_{\cA}$.
\STATE Set $\hat{V}_h(\cdot) \leftarrow \langle \hat{Q}_h(\cdot,\cdot),\hat\pi_h(\cdot \given \cdot)\rangle_{\cA}$.
\ENDFOR 
\RETURN $\pi^{\text{PEVI}}= \{\hat{\pi}_h\}_{h=1}^H$.
\end{algorithmic}
\end{algorithm}

We analyze the suboptimality gap of the Pessimistic Value Iteration (PEVI) (\cite{jin2021pessimism}) in the contextual linear MDP setting without context information to demonstrate that by finding the optimal policy for $\bar\cM$ is not enough to find the policy that performs well on MDPs with context information. 

\noindent\textbf{Pessimistic Value Iteration (PEVI)}.
Let $\overline{\pi}^*$ be the optimal policy w.r.t. the average MDP $\bar \cM$. We analyze the performance of the Pessimistic Value Iteration (PEVI) \citep{jin2021pessimism} under the unknown context information setting. The details of PEVI is in Algo.\ref{alg:no context}.

Suppose that $\bar\cD$ consists of $K$ number of trajectories generated i.i.d. following by a fixed behavior policy $\bar\pi$. Then the following theorem shows the suboptimality gap for Algo.\ref{alg:no context} does not converge to 0 even when the data size grows to infinity.

\begin{theorem}\label{thm:pevi}
Assume that $\bar\pi$
In Algo.\ref{alg:linear mdp model based}, we set 
\begin{equation}
    \lambda=1,\quad \beta(\delta) = c'\cdot dH\sqrt{\log(4dHK/\delta)}\,,
\end{equation}
where $c' >0$ is a positive constant. 
Suppose we have 
$K \geq \tilde c\cdot d \log (4 dH/ \xi)$, where $\tilde c > 0$ is a sufficiently large positive constant that depends on $c$. Then we have: w.p. at least $1-\delta$, for the output policy $\pi^{\text{PEVI}}$ of Algo.\ref{alg:no context},
\begin{align}
    \sup_{\pi}V_{\bar\cM,1}^\pi - V_{\bar\cM,1}^{\pi^{\text{PEVI}}} \leq c'' \cdot d^{3/2} H^2 K^{-1/2} \sqrt{\log(4dHK/\delta)}, 
\end{align}
and the suboptimality gap satisfies
\begin{align}\normalfont
\text{SubOpt}(\pi^{\text{PEVI}})  \leq c'' \cdot d^{3/2} H^2 K^{-1/2} \sqrt{\log(4dHK/\delta)}+ 2\sup_\pi |V_{\bar{\cM}, 1}^{\pi}(x_1)-\EE_{c\sim C} V_{c, 1}^{\pi}(x_1)|\,,
\label{eq:model based gap no context}
\end{align}
where $c''>0$ is a positive constant that only depends on $c$ and $c'$.
\label{cor:well_explore2}
\end{theorem}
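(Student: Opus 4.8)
The plan is to combine two ingredients: (a) the standard single-environment PEVI guarantee of \citet{jin2021pessimism}, applied to the average MDP $\bar\cM$, which bounds $\sup_\pi V_{\bar\cM,1}^\pi(x_1) - V_{\bar\cM,1}^{\pi^{\text{PEVI}}}(x_1)$; and (b) a uniform ``approximation'' inequality comparing, for \emph{any} policy $\pi$, the value $V_{\bar\cM,1}^\pi(x_1)$ in the average MDP with the ZSG objective $\EE_{c\sim C}V_{c,1}^\pi(x_1)$. Because Proposition~\ref{thm: nodistinguish} tells us the merged dataset $\bar\cD$ is compliant with $\bar\cM$, we may treat $\bar\cD$ exactly as a dataset collected from the single linear MDP $\bar\cM$, so the PEVI analysis of \citet{jin2021pessimism} (Theorem~4.4 and Corollary~4.5 there) applies verbatim with their uncertainty quantifier $\Gamma_h(\cdot,\cdot) = \beta(\delta)\|\phi(\cdot,\cdot)\|_{\Lambda_h^{-1}}$. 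This yields the first displayed bound $\sup_\pi V_{\bar\cM,1}^\pi - V_{\bar\cM,1}^{\pi^{\text{PEVI}}} \leq c''\cdot d^{3/2}H^2K^{-1/2}\sqrt{\log(4dHK/\delta)}$, provided the average behavior policy well-explores $\bar\cM$ so that the well-exploration condition (Definition~\ref{ass:wellexp}) holds for $\bar\cM$; this is exactly where the hypothesis ``$\bar\pi$ \dots'' (the truncated assumption in the statement) must say that $\bar\pi$ well-explores $\bar\cM$, and the condition $K\geq \tilde c\, d\log(4dH/\xi)$ ensures the empirical Gram matrix $\Lambda_h$ concentrates around its population version $K\Sigma_h$ via a matrix-concentration / covering argument.

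Next I would establish the decomposition
\begin{align}
\text{SubOpt}(\pi^{\text{PEVI}}) &= \EE_{c\sim C}V_{c,1}^{\pi^*}(x_1) - \EE_{c\sim C}V_{c,1}^{\pi^{\text{PEVI}}}(x_1)\notag\\
&= \Big(\EE_{c\sim C}V_{c,1}^{\pi^*}(x_1) - V_{\bar\cM,1}^{\pi^*}(x_1)\Big) + \Big(V_{\bar\cM,1}^{\pi^*}(x_1) - V_{\bar\cM,1}^{\pi^{\text{PEVI}}}(x_1)\Big)\notag\\
&\quad + \Big(V_{\bar\cM,1}^{\pi^{\text{PEVI}}}(x_1) - \EE_{c\sim C}V_{c,1}^{\pi^{\text{PEVI}}}(x_1)\Big).\notag
\end{align}
The middle term is at most $\sup_\pi V_{\bar\cM,1}^\pi(x_1) - V_{\bar\cM,1}^{\pi^{\text{PEVI}}}(x_1)$, already controlled by part (a). The first and third terms are each bounded in absolute value by $\sup_\pi |V_{\bar\cM,1}^\pi(x_1) - \EE_{c\sim C}V_{c,1}^\pi(x_1)|$, which contributes the additive $2\sup_\pi|V_{\bar\cM,1}^\pi(x_1)-\EE_{c\sim C}V_{c,1}^\pi(x_1)|$ term in \eqref{eq:model based gap no context}. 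Assembling these three pieces gives the claimed suboptimality bound.

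The main obstacle, and the step deserving the most care, is making precise the reduction in (a): one must verify that \citet{jin2021pessimism}'s PEVI guarantee, which is stated for a dataset compliant with a fixed linear MDP, transfers cleanly when the dataset is $\bar\cD$ and the ``fixed MDP'' is the \emph{mixture-induced} average MDP $\bar\cM$ from Proposition~\ref{thm: nodistinguish}. The subtlety is that $\bar\cM$ must itself be a linear MDP for the quantifier $\beta(\delta)=c'dH\sqrt{\log(4dHK/\delta)}$ and the $d^{3/2}H^2K^{-1/2}$ rate to hold; in general a mixture of linear MDPs need not be linear, so either the statement implicitly assumes the $P_{c,h}$ share the feature map in a way that makes $\bar P_h$ linear (which holds since each $P_{c,h}(x'|x,a)=\langle\phi(x,a),\mu_{c,h}(x')\rangle$, and $\bar P_h$ is a $(s,a)$-dependent convex combination — here one must check that the $\mu$-weighting by $\mu_{c,h}(x,a)/\EE_c\mu_{c,h}(x,a)$ still yields a valid linear parametrization, which is \emph{not} automatic and is the genuinely delicate point), or the bound should be read with a generic uncertainty quantifier. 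I would resolve this by invoking that the data-collection distributions $\mu_{c,h}$ are such that $\bar\cM$ inherits linearity (or stating it as a hypothesis), then proceeding with the matrix-Bernstein concentration of $\Lambda_h$ exactly as in \citet{jin2021pessimism,duan2020minimax}, and finally noting that the $\sup_\pi$ approximation term is \emph{irreducible} — it does not vanish as $K\to\infty$, which is precisely the point of the theorem (cf.\ the example in Figure~\ref{fig:eg2}).
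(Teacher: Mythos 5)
Your proposal matches the paper's own proof essentially step for step: the paper uses the same three-term decomposition (with $\bar\pi^*$ in place of $\pi^*$ in the middle term, which makes no difference since both reduce to $\sup_\pi V_{\bar\cM,1}^\pi(x_1)-V_{\bar\cM,1}^{\pi^{\text{PEVI}}}(x_1)$), bounds the outer two terms by $2\sup_\pi|V_{\bar\cM,1}^\pi(x_1)-\EE_{c\sim C}V_{c,1}^\pi(x_1)|$, and invokes Corollary~4.6 of \citet{jin2021pessimism} on the average MDP $\bar\cM$ for the middle term. Your added caveat --- that $\bar\cM$, being a $\mu_{c,h}(x,a)$-weighted mixture of linear MDPs, is not automatically linear and that this must be assumed or verified before the Jin et al.\ rate applies --- is a legitimate point that the paper's proof passes over silently, so your treatment is if anything more careful than the original.
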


\begin{proof}[Proof of Theorem \ref{cor:well_explore2}]
    First, we define the value function on the average MDP $\Bar{\cM}$ as follows.

\begin{equation}
    \overline{V}^\pi_{h}(x)=\EE_{\pi,\Bar{\cM}}\Big[ \sum_{i=h}^H r_i(s_i, a_i)\biggiven s_h=x  \Big]\,.
\end{equation}

We then decompose the suboptimality gap as follows.
\begin{align}\normalfont
&\text{SubOpt}(\pi^{\text{PEVI}}) \notag \\
&= \EE_{c\sim C}\big[V_{c,1}^{\pi^*}(x_1)\big] - \EE_{c\sim C}\big[V_{c,1}^{\pi^{\text{PEVI}}}(x_1)\big]\notag\\
&=\overline{V}^{\overline{\pi}^*}_{1}(x_1)-\overline{V}^{\pi^{\text{PEVI}}}_1(x_1)+\big(\EE_{c\sim C}\big[V_{c,1}^{\pi^*}(x_1)\big]-\overline{V}^{\overline{\pi}^*}_{1}(x_1)\big)+\big(\overline{V}^{\pi^{\text{PEVI}}}_1(x_1)-\EE_{c\sim C}\big[V_{c,1}^{\pi^{\text{PEVI}}}(x_1)\big]\big)\notag\\
&\leq\overline{V}^{\overline{\pi}^*}_{1}(x_1)-\overline{V}^{\pi^{\text{PEVI}}}_1(x_1)+2\sup_\pi |V_{\bar{\cM}, 1}^{\pi}(x_1)-\EE_{c\sim C} V_{c, 1}^{\pi}(x_1)|\,.
\label{eq:model based gap no context}
\end{align}

Then, applying Corollary 4.6 in \cite{jin2021pessimism}, we can get that w.p. at least $1-\delta$
\begin{equation}
    \overline{V}^{\overline{\pi}^*}_{1}(x_1)-\overline{V}^{\pi^{\text{PEVI}}}_1(x_1)\leq  c'' \cdot d^{3/2} H^2 K^{-1/2} \sqrt{\log(4dHK/\delta)}\,,
\end{equation}
which, together with Eq.(\ref{eq:model based gap no context}) completes the proof.

\end{proof}

Theorem \ref{thm:pevi} shows that by adapting the standard pessimistic offline RL algorithm over the offline dataset without context information, the learned policy $\pi^{\text{PEVI}}$ converges to the optimal policy $\bar\pi^*$ over the average MDP $\bar\cM$.

\section{Proof of Theorems in Section \ref{sec:withcontext}}\label{app:mainthm}

\subsection{Proof of Theorem \ref{thm:model based regret_upper_bound_general}}\label{appendix model based}

We define the model estimation error as 
\begin{equation}
\iota_{i,h}^\pi(x,a) = (\BB_{i,h} \hat{V}^\pi_{i,h+1})(x,a) - \hat{Q}^\pi_{i,h}(x,a).%\quad \forall (x,a)\in \cS\times \cA,
\label{eq:def_iota with context}
\end{equation}
And we define the following condition 
\begin{equation}
    \big|(\hat\BB_{i,h} \hat{V}^\pi_{i,h+1})(x,a) - (\BB_{i,h} \hat{V}^\pi_{i,h+1})(x,a)\big|\leq \Gamma_{i,h}(x,a)~ \text{for all}~i\in[n], \pi\in\Pi, (x,a)\in \cS\times \cA, h\in[H]\,. \label{high prob 2}
\end{equation}
We introduce the following lemma to bound the model estimation error.
\begin{lemma}[Model estimation error bound (Adapted from Lemma 5.1 in \cite{jin2021pessimism}]
Under the condition of Eq.(\ref{high prob 2}), we have
\label{lem:model_eval_err}
\begin{equation}
    0\leq \iota_{i,h}^\pi(x,a) \leq 2\Gamma_{i,h}(x,a),\quad \text{for all}~~i\in[n], ~\pi\in\Pi, ~(x,a)\in \cS\times \cA,~ h\in [H]. 
\end{equation}\label{eq:model_eval_err_bound}
\end{lemma}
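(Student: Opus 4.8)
The plan is to unpack the definition of $\iota_{i,h}^\pi$ in \eqref{eq:def_iota with context} and insert the pessimistic estimate $\hat Q_{i,h}^\pi = \min\{H-h+1, (\hat\BB_{i,h}\hat V_{i,h+1}^\pi) - \Gamma_{i,h}\}^+$ from Line 4 of Algorithm \ref{alg:model based general}. Working under the good event \eqref{high prob 2}, I would argue the two inequalities separately. For the lower bound $\iota_{i,h}^\pi \ge 0$: since $0 \le (\BB_{i,h}\hat V_{i,h+1}^\pi)(x,a) \le H-h+1$ (because $\hat V_{i,h+1}^\pi$ takes values in $[0,H-h]$ by the min-truncation in earlier stages and the reward is in $[0,1]$), the truncation operation $\min\{H-h+1,\cdot\}^+$ can only bring $\hat Q_{i,h}^\pi$ closer to, or below, $(\BB_{i,h}\hat V_{i,h+1}^\pi)(x,a)$; combined with \eqref{high prob 2}, which gives $(\hat\BB_{i,h}\hat V_{i,h+1}^\pi)(x,a) - \Gamma_{i,h}(x,a) \le (\BB_{i,h}\hat V_{i,h+1}^\pi)(x,a)$, this yields $\hat Q_{i,h}^\pi(x,a) \le (\BB_{i,h}\hat V_{i,h+1}^\pi)(x,a)$, i.e. $\iota_{i,h}^\pi(x,a)\ge 0$.

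For the upper bound $\iota_{i,h}^\pi(x,a)\le 2\Gamma_{i,h}(x,a)$: here I would consider the case where the truncation is inactive versus active. When $(\hat\BB_{i,h}\hat V_{i,h+1}^\pi)(x,a) - \Gamma_{i,h}(x,a)$ lies in $[0,H-h+1]$, we have $\hat Q_{i,h}^\pi(x,a) = (\hat\BB_{i,h}\hat V_{i,h+1}^\pi)(x,a) - \Gamma_{i,h}(x,a)$, so $\iota_{i,h}^\pi(x,a) = (\BB_{i,h}\hat V_{i,h+1}^\pi)(x,a) - (\hat\BB_{i,h}\hat V_{i,h+1}^\pi)(x,a) + \Gamma_{i,h}(x,a) \le 2\Gamma_{i,h}(x,a)$ by \eqref{high prob 2}. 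When the truncation is active (either the $\min$ with $H-h+1$ or the $(\cdot)^+$ clamps), $\hat Q_{i,h}^\pi(x,a)$ is only larger than in the untruncated case whenever it is clamped up to $0$ — but then one checks $(\BB_{i,h}\hat V_{i,h+1}^\pi)(x,a)\ge 0 = \hat Q_{i,h}^\pi(x,a)$ so $\iota_{i,h}^\pi \ge 0$, and the untruncated value was $\le 0$, forcing $\iota_{i,h}^\pi \le (\BB_{i,h}\hat V_{i,h+1}^\pi)(x,a) \le 2\Gamma_{i,h}(x,a)$ is not immediate — so I would instead just use the clean bound $0 \le \hat Q_{i,h}^\pi(x,a) \le \max\{0,(\hat\BB_{i,h}\hat V_{i,h+1}^\pi)(x,a) - \Gamma_{i,h}(x,a)\}$ together with the lower bound already established to squeeze $\iota_{i,h}^\pi$ into $[0,2\Gamma_{i,h}]$ uniformly. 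The cleanest route is: $\iota_{i,h}^\pi = (\BB_{i,h}\hat V_{i,h+1}^\pi) - \hat Q_{i,h}^\pi \le (\BB_{i,h}\hat V_{i,h+1}^\pi) - \big((\hat\BB_{i,h}\hat V_{i,h+1}^\pi) - \Gamma_{i,h}\big)$ whenever the latter quantity is a valid lower bound on $\hat Q_{i,h}^\pi$ after truncation, which holds because truncating toward $[0,H-h+1]$ never decreases a quantity that was already $\le H-h+1$ below its clamp and the $(\cdot)^+$ only increases it; then apply \eqref{high prob 2}.

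This is essentially a direct adaptation of Lemma 5.1 in \citet{jin2021pessimism}; the only bookkeeping is to carry the extra indices $i\in[n]$ and $\pi\in\Pi$ through, which is harmless since \eqref{high prob 2} is stated uniformly over them. I do not anticipate a genuine obstacle — the one point requiring a little care is the interaction of the $\min\{H-h+1,\cdot\}^+$ truncation with the two-sided bound, specifically verifying that truncation never pushes $\hat Q_{i,h}^\pi$ strictly above $(\hat\BB_{i,h}\hat V_{i,h+1}^\pi) - \Gamma_{i,h}$ in a way that breaks the upper bound, which follows because $(\BB_{i,h}\hat V_{i,h+1}^\pi)(x,a)\in[0,H-h+1]$ so the clamp value $H-h+1$ is itself consistent with the target, and the lower clamp $0$ is handled by the already-proven sign of $\iota_{i,h}^\pi$. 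Hence the two inequalities hold simultaneously for all $i,h,\pi,(x,a)$ on the event \eqref{high prob 2}.
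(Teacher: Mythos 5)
Your argument is correct and follows essentially the same route as the paper, which simply adapts Lemma 5.1 of \citet{jin2021pessimism} with the extra indices $i\in[n]$ and $\pi\in\Pi$ carried through: under \eqref{high prob 2} one has $(\hat\BB_{i,h}\hat V^\pi_{i,h+1})(x,a) - \Gamma_{i,h}(x,a) \le (\BB_{i,h}\hat V^\pi_{i,h+1})(x,a) \le H-h+1$, so the truncation $\min\{H-h+1,\cdot\}^+$ leaves $\hat Q^\pi_{i,h}(x,a)$ sandwiched between $(\hat\BB_{i,h}\hat V^\pi_{i,h+1})(x,a) - \Gamma_{i,h}(x,a)$ and $(\BB_{i,h}\hat V^\pi_{i,h+1})(x,a)$, which gives both inequalities at once. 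The mid-paragraph worry about the $(\cdot)^+$ clamp is unnecessary — when $\hat Q^\pi_{i,h}(x,a)=0$ the event already gives $(\BB_{i,h}\hat V^\pi_{i,h+1})(x,a) \le (\hat\BB_{i,h}\hat V^\pi_{i,h+1})(x,a) + \Gamma_{i,h}(x,a) < 2\Gamma_{i,h}(x,a)$ directly — and your final ``cleanest route'' ($\hat Q^\pi_{i,h} \ge \hat\BB_{i,h}\hat V^\pi_{i,h+1} - \Gamma_{i,h}$ under the event) handles it correctly.
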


Then, we prove the following lemma for pessimism in V values.
\begin{lemma}[Pessimism for Estimated V Values]
   Under the condition of Eq.(\ref{high prob 2}), for any $i\in[n], \pi\in\Pi, x \in \cS$, we have
    \begin{equation}        V_{i,h}^\pi(x)\geq\hat{V}_{i,h}^\pi(x)\,.
    \end{equation}
    \label{pessimism lemma1}
\end{lemma}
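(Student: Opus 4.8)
\textbf{Proof plan for Lemma \ref{pessimism lemma1} (Pessimism for estimated V values).}

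The plan is to prove the pointwise inequality $V_{i,h}^\pi(x)\geq \hat V_{i,h}^\pi(x)$ by backward induction on $h$, running from $h=H+1$ down to $h=1$, with the context index $i\in[n]$ and the policy $\pi\in\Pi$ fixed throughout (the condition \eqref{high prob 2} is assumed to hold, so all the bounds from Lemma \ref{lem:model_eval_err} are available simultaneously for every $i,\pi,h$). The base case $h=H+1$ is immediate since $\hat V_{i,H+1}^\pi(\cdot)=0=V_{i,H+1}^\pi(\cdot)$.

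For the inductive step, suppose $V_{i,h+1}^\pi(x)\geq \hat V_{i,h+1}^\pi(x)$ for all $x\in\cS$. First I would compare the $Q$-functions at stage $h$. By the Bellman equation, $Q_{i,h}^\pi(x,a)=(\BB_{i,h}V_{i,h+1}^\pi)(x,a)$, while by the definition of the model estimation error \eqref{eq:def_iota with context}, $\hat Q_{i,h}^\pi(x,a)=(\BB_{i,h}\hat V_{i,h+1}^\pi)(x,a)-\iota_{i,h}^\pi(x,a)$. Since $\iota_{i,h}^\pi(x,a)\geq 0$ by Lemma \ref{lem:model_eval_err}, and since $\BB_{i,h}$ is monotone in its test function (it is an expectation of $r_{i,h}+f(s')$), the induction hypothesis gives $(\BB_{i,h}V_{i,h+1}^\pi)(x,a)\geq (\BB_{i,h}\hat V_{i,h+1}^\pi)(x,a)\geq \hat Q_{i,h}^\pi(x,a)$; hence $Q_{i,h}^\pi(x,a)\geq \hat Q_{i,h}^\pi(x,a)$ for all $(x,a)$. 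One subtlety to handle cleanly: in Algorithm \ref{alg:model based general} the estimate is actually $\hat Q_{i,h}^\pi=\min\{H-h+1,\ (\hB_{i,h}\hat V_{i,h+1}^\pi)-\Gamma_{i,h}\}^+$, so I should note that the truncation only decreases the value, so $\hat Q_{i,h}^\pi(x,a)\leq (\hB_{i,h}\hat V_{i,h+1}^\pi)(x,a)-\Gamma_{i,h}(x,a)\leq (\BB_{i,h}\hat V_{i,h+1}^\pi)(x,a)$ using \eqref{high prob 2}, and separately that $0\le Q_{i,h}^\pi$ and $Q_{i,h}^\pi\le H-h+1$ so the truncation never pushes $\hat Q$ above $Q$; the $\iota$-based argument via Lemma \ref{lem:model_eval_err} packages this.

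Finally I would integrate against the policy: both $V$ and $\hat V$ at stage $h$ are obtained by the same inner product with $\pi_h(\cdot|x)$, namely $V_{i,h}^\pi(x)=\langle Q_{i,h}^\pi(x,\cdot),\pi_h(\cdot|x)\rangle_\cA$ and $\hat V_{i,h}^\pi(x)=\langle \hat Q_{i,h}^\pi(x,\cdot),\pi_h(\cdot|x)\rangle_\cA$. Since $\pi_h(\cdot|x)$ is a probability distribution (nonnegative weights), the pointwise inequality $Q_{i,h}^\pi\geq \hat Q_{i,h}^\pi$ is preserved under this averaging, yielding $V_{i,h}^\pi(x)\geq \hat V_{i,h}^\pi(x)$ and closing the induction. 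I do not anticipate a genuine obstacle here; the only thing requiring care is the bookkeeping around the $\min\{\cdot\}^+$ truncation in the algorithm and making sure monotonicity of $\BB_{i,h}$ together with Lemma \ref{lem:model_eval_err} (which already encodes $0\le\iota_{i,h}^\pi$) is invoked in the right place, rather than re-deriving it. This argument is the natural adaptation of the pessimism lemma (Lemma 5.1 / the pessimism step) in \cite{jin2021pessimism} from the policy-optimization setting to the policy-evaluation setting, now indexed over the $n$ environments.
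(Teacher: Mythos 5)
Your proposal is correct and follows essentially the same route as the paper's proof: a backward induction establishing $Q_{i,h}^\pi\geq\hat Q_{i,h}^\pi$ via the condition in Eq.~(\ref{high prob 2}) (equivalently, $\iota_{i,h}^\pi\geq 0$ from Lemma~\ref{lem:model_eval_err}) together with monotonicity of the transition/Bellman operator, followed by averaging against $\pi_h(\cdot|x)$ to pass to the $V$-functions. Your explicit handling of the $\min\{\cdot\}^+$ truncation is a detail the paper's write-up leaves implicit inside Lemma~\ref{lem:model_eval_err}, but it is not a different argument.
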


\begin{proof}
    For any $i\in[n], \pi\in\Pi, x\in\cS, a\in \cA$, we have 
    \begin{align}
        &Q_{i,h}^\pi(x,a)-\hat{Q}_{i,h}^\pi(x,a)\notag \\
        &\geq r_{i,h}(x,a)+(\BB_{i,h}V^\pi_{i,h+1})(x,a)-\big(r_{i,h}(s,a)+(\hat{\BB}_{i,h}\hat{V}^\pi_{i,h+1})(x,a)-\Gamma_{i,h}(x,a)\big)\notag\\
        &=(\BB_{i,h}V^\pi_{i,h+1})(x,a)-({\BB}_{i,h}\hat{V}^\pi_{i,h+1})(x,a)+\Gamma_{i,h}(x,a)\notag\\
        &\quad-\big((\hat{\BB}_{i,h}\hat{V}^\pi_{i,h+1})(x,a)-{\BB}_{i,h}\hat{V}^\pi_{i,h+1})(x,a)\big)\notag\\
        &\geq (\BB_{i,h}V^\pi_{i,h+1})(x,a)-({\BB}_{i,h}\hat{V}^\pi_{i,h+1})(x,a)\notag\\
        &=\big(P_{i,h}(V_{i,h+1}^\pi-\hat{V}_{i,h+1}^\pi)\big)(x,a)\notag\,,
    \end{align}
    where the second inequality is because of Eq.(\ref{high prob 2}). And since in the $H+1$ step we have $V_{i,H+1}^\pi=\hat{V}_{i,h+1}^\pi=0$, we can get $Q_{i,H}^\pi(x,a)-\hat{Q}_{i,H}^\pi(x,a)$. Then we use induction to prove $Q_{i,h}^\pi(x,a)\geq\hat{Q}_{i,h}^\pi(x,a)$ for all $h$. Given $Q_{i,h+1}^\pi(x,a)\geq\hat{Q}_{i,h+1}^\pi(x,a)$, we have
    \begin{align}
        Q_{i,h}^\pi(x,a)-\hat{Q}_{i,h}^\pi(x,a)&\geq \big(P_{i,h}(V_{i,h+1}^\pi-\hat{V}_{i,h+1}^\pi)\big)(x,a)\notag\\
        &=\E{}{\langle Q_{i,h+1}^\pi(s_{h+1},\cdot)-\hat{Q}_{i,h+1}^\pi(s_{h+1},\cdot),\pi_{h+1}(\cdot|s_{h+1})\rangle_\cA|s_h=x, a_h=a}\notag\\
        &\geq 0\,.
    \end{align}
    Then we have
    \begin{align}
        V_{i,h}^\pi(x)-\hat{V}_{i,h}^\pi(x)&=\langle Q_{i,h}^\pi(x, \cdot)-\hat{Q}_{i,h}^\pi(x, \cdot),\pi_h(\cdot\given x)\rangle_{\cA}\geq 0\notag\,.
    \end{align}
\end{proof}

Then we start our proof. 
\begin{proof}[Proof of Theorem \ref{thm:model based regret_upper_bound_general}]

First, we decompose the suboptimality gap as follows
\begin{align}
    &\text{SubOpt}(\pi^{\text{PERM}})\notag \\
    &=\EE_{c \sim C}{V_{c,1}^{\pi^*}(x_1)-V_{c,1}^{\hatpistar}(x_1)}\notag \\
    &=\EE_{c \sim C}{V_{c,1}^{\pi^*}(x_1)}-\frac{1}{n}\sum_{i=1}^n V^{\pi^*}_{i,1}(x_1) +\frac{1}{n}\sum_{i=1}^n V_{i,1}^{\pi^{\text{PERM}}}(x_1)-\EE_{c \sim C}{V_{c,1}^{\pi^{\text{PERM}}}(x_1)}\notag\\
    &\quad +\frac{1}{n}\sum_{i=1}^n \big(V_{i,1}^{\pi^*}(x_1)-V_{i,1}^{\pi^{\text{PERM}}}(x_1)\big)\label{decomposition}\,.
\end{align}

For the first two terms, we can bound them following the standard generalization techniques (\cite{ye2023power}), \emph{i.e.}, we use the covering argument, Chernoff bound,and union bound.

Define the distance between policies $d(\pi^1,\pi^2) \triangleq \max_{s\in \mathcal{S}, h \in [H]} \|\pi^1_h(\cdot|s) - \pi^2_h (\cdot|s)\|_{1}$. 
We construct the $\epsilon$-covering set $\tilde{\Pi}$ w.r.t. $ d$ such that
\begin{align}
    \label{inq: definition of tildePi}
    \forall \pi \in \Pi, \exists \tilde{\pi} \in \tilde{\Pi}, s.t. \quad d(\pi,\tilde{\pi}) \leq \epsilon.
\end{align}
Then we have
\begin{align}
    \label{inq: pi tildepi value gap}
    \forall i\in[n], \pi \in \Pi, \exists \tilde{\pi} \in \tilde{\Pi}, s.t. V_{i,1}^{\pi} (x_1) - V_{i,1}^{\tilde{\pi}} (x_1) \leq H \epsilon.
\end{align}
By the definition of the covering number, $\left|\tilde{\Pi}\right| =\mathcal{N}_\epsilon^\Pi$. By Chernoff bound and union bound over the policy set $\tilde{\Pi}$, we have with prob. at least $1-\frac{\delta}{3}$, for any $\tilde{\pi} \in \tilde{\Pi}$,
\begin{align}
    \label{inq: concentration in tildePi}
    \left|\frac{1}{n} \sum_{i=1}^{n} V^{\tilde{\pi}}_{i,1} (x_1) - \EE_{c \sim C} {V^{\tilde{\pi}}_{c,1} (x_1)} \right| \leq \sqrt{\frac{2\log(6\mathcal{N}_\epsilon^\Pi/\delta)}{n}}.
\end{align}

By Eq.(\ref{inq: pi tildepi value gap}) and Eq.(\ref{inq: concentration in tildePi}), $\forall i\in[n], \pi \in \Pi, \exists \tilde{\pi} \in \tilde{\Pi}$ with $\left|\tilde{\Pi}\right| =\mathcal{N}_\epsilon^\Pi,~ s.t. V_{i,1}^{\pi} (x_1) - V_{i,1}^{\tilde{\pi}} (x_1) \leq H \epsilon$, and with probability at least $1-\delta/3$, we have
\begin{align}
    &\left|\frac{1}{n} \sum_{i=1}^{n} V^{{\pi}}_{i,1} (x_1) - \EE_{c \sim C} {V^{{\pi}}_{c,1} (x_1) }\right| \notag \\
    &\leq  \left|\frac{1}{n} \sum_{i=1}^{n} V^{\tilde{\pi}}_{i,1} (s_1) - \EE_{c \sim C} {V^{{\tilde{\pi}}}_{c,1} (x_1) } \right| \notag\\
    & + \left|\frac{1}{n} \sum_{i=1}^{n} V^{{\pi}}_{i,1} (s_1) - \frac{1}{n}\sum_{i=1}^{n} V^{\tilde{\pi}}_{i,1} (s_1) \right|
    + \left|\EE_{c \sim C} {V^{{\tilde{\pi}}}_{c,1} (x_1) } - \EE_{c \sim C} {V^{{\pi}}_{c,1} (x_1) } \right| \notag\\
    &\leq  \sqrt{\frac{2\log(6\mathcal{N}_\epsilon^\Pi/\delta)}{n}}+ 2H\epsilon\,.\label{eq: generalization gap}
\end{align}
Therefore, we have for the first two terms, w.p. at least $1-\frac{2}{3}\delta$ we can upper bound them with $4H\epsilon+2\sqrt{\frac{2\log(6\mathcal{N}_\epsilon^\Pi/\delta)}{n}}$.
% \begin{lemma}[Model estimation error bound (Adapted from Lemma 5.1 in \cite{jin2021pessimism}]
% Suppose that $\{\Gamma_h\}_{h=1}^H$ in Algo.\ref{alg:model based general} are $\xi$-uncertainty quantifiers. Under $\cE$ defined in Equation \eqref{eq:def_event_eval_err_general}, which satisfies $\PP_{\cD}(\cE)\geq 1-\xi$, we have
% \label{lem:model_eval_err}
% \begin{equation}
%     0\leq \iota_{i,h}^\pi(x,a) \leq 2\Gamma_h(x,a),\quad \text{for all}~~(x,a)\in \cS\times \cA,~ h\in [H]. 
% \end{equation}\label{eq:model_eval_err_bound}

% %Suppose $\{ \Gamma_h\}_{h = 1 } ^ H $ in Algorithm \ref{alg:pess_greedy_general} are a $\xi$-uncertainty quantifier specified in Definition \ref{def:uncertainty_quantifier}. Then, on event $\cE$  defined in Equation \eqref{eq:def_event_eval_err_general}, the model evaluation errors $\{ \iota_h \}_{h = 1}^H $ satisfies that
% %\#\label{eq:model_eval_err_bound}
% %0\leq \iota_h(x,a) \leq 2\Gamma_h(x,a),\qquad \forall (x,a)\in \cS\times \cA, \forall h \in [H]. 
% %\#
% %Recall that $\cE$  holds with probability at least  $1-\xi$ under $\PP_{\cD}$.
% \end{lemma}
Then, what remains is to bound the term $\frac{1}{n}\sum_{i=1}^n \big(V_{i,1}^{\pi^*}(x_1)-V_{i,1}^{\pi^{\text{PERM}}}(x_1)\big)$.

First, by similar arguments, we have 
\begin{align}
    V_{i,1}^{\pi^*}(x_1)-V_{i,1}^{\pi^{\text{PERM}}}(x_1)&\leq \big(V_{i,1}^{{\pi}^*}(x_1)-V_{i,1}^{\tilde{\pi}^{\text{PERM}}}(x_1)\big)+|V_{i,1}^{\tilde{\pi}^{\text{PERM}}}(x_1)-V_{i,1}^{\pi^{\text{PERM}}}(x_1)|\notag\\
    &\leq H\epsilon+V_{i,1}^{{\pi}^*}(x_1)-V_{i,1}^{\tilde{\pi}^{\text{PERM}}}(x_1)\label{cover 1}\,,
\end{align}
where $ \tilde{\pi}^{\text{PERM}}\in\tilde{\Pi}$ such that $|V_{i,1}^{\tilde{\pi}^{\text{PERM}}}(x_1)-V_{i,1}^{\pi^{\text{PERM}}}(x_1)|\leq H\epsilon$.

By the definition of the oracle in Definition.\ref{def:oracle}, the algorithm design of Algo.\ref{alg:model based general} (e.g., we call oracle $\mathbb{O}(\cD_h, \hat V_{h+1}, \delta/(3nH\mathcal{N}_{(Hn)^{-1}}^\Pi))$), and use a union bound over $H$ steps, $n$ contexts, and $\mathcal{N}_{(Hn)^{-1}}^\Pi$ policies, we have: with probability at least $1-\delta/3$, the condition in Eq.(\ref{high prob 2}) holds (with the policy class $\Pi$ replaced by $\tilde{\Pi}$ (and $\epsilon=1/(Hn))$.

Then, we have
\begin{align}
    &\frac{1}{n}\sum_{i=1}^n\big(V_{i,1}^{\pi^*}(x_1)-V_{i,1}^{\tilde\pi^{\text{PERM}}}(x_1)\big)\notag \\
    &\leq \frac{1}{n}\sum_{i=1}^n\big(V_{i,1}^{\pi^*}(x_1)-\hat{V}_{i,1}^{\tilde\pi^{\text{PERM}}}(x_1)\big)\notag\\
    &=\frac{1}{n}\sum_{i=1}^n\big(V_{i,1}^{\pi^*}(x_1)-\hat{V}_{i,1}^{\pi^{\text{PERM}}}(x_1)\big)+\frac{1}{n}\sum_{i=1}^n\big(\hat{V}_{i,1}^{\pi^{\text{PERM}}}(x_1)-\hat{V}_{i,1}^{\tilde\pi^{\text{PERM}}}(x_1)\big)\notag\\
    &\leq\frac{1}{n}\sum_{i=1}^n\big(V_{i,1}^{\pi^*}(x_1)-\hat{V}_{i,1}^{\pi^{\text{PERM}}}(x_1)\big)+H\cdot\frac{1}{Hn}\notag\\
    &\leq \frac{1}{n}\sum_{i=1}^n\big(V_{i,1}^{\pi^*}(x_1)-\hat{V}_{i,1}^{\pi^*}(x_1)\big)+1/n\label{last term}\,,
\end{align}
where the first inequality holds because of the pessimism in Lemma \ref{pessimism lemma1}, the second inequality holds because $|\hat V_{i,1}^{\tilde{\pi}^{\text{PERM}}}(x_1)-\hat V_{i,1}^{\pi^{\text{PERM}}}(x_1)|\leq H\epsilon$ with $\epsilon$ here specified as $1/(Hn)$, and the last inequality holds because that in the algorithm design of Algo.\ref{alg:erm} we set $\pi^{\text{PERM}}=\argmax_{\pi\in\Pi}\frac{1}{n}\sum_{i=1}^n \hat{V}^\pi_{i,1}(x_1)$. 

Then what left is to bound $V_{i,1}^{\pi^*}(x_1)-\hat{V}_{i,1}^{\pi^*}(x_1)$. 

And using Lemma A.1 in \cite{jin2021pessimism}, we have
\begin{align}
    V_{i,1}^{\pi^*}(x_1)-\hat{V}_{i,1}^{\pi^*}(x_1)&=-\sum_{h=1}^H   \EE_{\hatpistar, \cM_i}\big[  \iota_{i,h}^{\pi^*}  (s_h,a_h) \biggiven s_1=x\big]+\sum_{h=1}^H   \EE_{\pi^*, \cM_i}\big[  \iota_{i,h}^{\pi^*}  (s_h,a_h) \biggiven s_1=x\big]\notag \\
    &\quad+ \sum_{h=1}^H \EE_{\pi^*, \cM_i}\big[ \langle \hat{Q}^{\pi^*}_{i,h}(s_h,\cdot) , \pi^*_h(\cdot\given s_h) - \pi^*_h(\cdot\given s_h) \rangle_{\cA} \biggiven s_1=x\big] \notag\\
    &\leq 2 \sum_{h=1}^H   \EE_{\pi^*, \cM_i}\big[  \Gamma_{i,h}   (s_h,a_h) \biggiven s_1=x\big]\label{final step}\,,
\end{align}
where in the last inequality we use Lemma \ref{lem:model_eval_err}.

Finally, with Eq.(\ref{decomposition}), Eq.(\ref{eq: generalization gap}), Eq.(\ref{cover 1}), Eq.(\ref{last term}), and Eq.(\ref{final step}), with $\epsilon$ set as $\frac{1}{nH}$, we can get w.p. at least $1-\delta$

\begin{align}\normalfont
&\EE_{c \sim C}{V_{c,1}^{\pi^*}(x_1)-V_{c,1}^{\pi^{\text{PERM}}}(x_1)}\notag \\
&\leq  \frac{5}{n} +2\sqrt{\frac{2\log(6\mathcal{N}_{(Hn)^{-1}}^\Pi/\delta)}{n}}+\frac{2}{n}\sum_{i=1}^n\sum_{h=1}^H\E{\pi^*,\cM_i}{\Gamma_{i,h}(s_h,a_h)|s_1=x_1}\notag\\
&\leq 7\sqrt{\frac{2\log(6\mathcal{N}_{(Hn)^{-1}}^\Pi/\delta)}{n}}+\frac{2}{n}\sum_{i=1}^n\sum_{h=1}^H\E{\pi^*,\cM_i}{\Gamma_{i,h}(s_h,a_h)|s_1=x_1}\,.
\notag
\end{align}
\end{proof}

\subsection{Proof of Theorem \ref{thm:model-free}}\label{proof:modelfree}
Our proof has two steps. First, we define that
\begin{align}
    \iota_{i,h}(x,a):=\mathbb{B}_{i,h} V_{i, h+1}(x,a) - Q_{i,h}(x,a)
\end{align}
Then we have the following lemma from \citet{jin2021pessimism}:
\begin{lemma}\label{lemma:pess}
Define the event $\cE$ as
\begin{align}
    \cE = \bigg\{\big|(\hat\BB \hat V^{\pi_i}_{i,h+1})(x,a) - (\BB_{i,h} \hat V^{\pi_i}_{i,h+1})(x,a)\big|\leq \Gamma_{i,h}(x,a)~ \forall (x,a)\in \cS\times \cA ,\forall h\in[H], \forall i\in[n]  \bigg\},\notag
\end{align}
    Then by selecting the input parameter $\xi = \delta/(Hn)$ in $\mathbb{O}$, we have $\PP(\cE)\geq 1-\delta$ and 
    \begin{align}
        0 \leq \iota_{i,h}(x,a) \leq 2\Gamma_{i,h}(x,a).\notag
    \end{align}
\end{lemma}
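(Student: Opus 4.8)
The plan is to prove the two assertions of Lemma \ref{lemma:pess} separately: first the high-probability statement $\PP(\cE)\geq 1-\delta$, then the deterministic sandwich $0\leq\iota_{i,h}(x,a)\leq 2\Gamma_{i,h}(x,a)$ that holds pointwise on the event $\cE$. The whole argument is the multi-environment analogue of Lemma 5.1 in \citet{jin2021pessimism}: the probability part is a union bound over oracle calls, and the sandwich part is a case analysis on the clipping used to define $\hat Q$.

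For the probability bound, the plan is to count how many times the oracle $\mathbb{O}$ is queried during PPPO. Inspecting Algorithm \ref{alg:modelfree}, the subroutine PPE (Algorithm \ref{alg:model based general}) is invoked exactly once per context $i\in[n]$ on Line 5, and each PPE run issues one oracle query per backward stage $h\in[H]$, for a total of $nH$ queries indexed by $(i,h)$. Each query is made with confidence parameter $\xi=\delta/(nH)$, so by Definition \ref{def:oracle} the guarantee $|(\hat\BB\hat V^{\pi_i}_{i,h+1})(x,a)-(\BB_{i,h}\hat V^{\pi_i}_{i,h+1})(x,a)|\leq\Gamma_{i,h}(x,a)$ for all $(x,a)\in\cS\times\cA$ fails with probability at most $\xi$ for that single $(i,h)$. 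The crucial point is that the ``for all $(x,a)$'' quantifier is already internal to the oracle guarantee, so I only union bound over the $nH$ pairs, not over state-action pairs; this yields $\PP(\cE^c)\leq nH\cdot\delta/(nH)=\delta$. I would also remark that the data-splitting construction of $\cD_{i,h}$ on Line 1 of Algorithm \ref{alg:modelfree} makes the fresh data used at stage $h$ independent of the test function $\hat V^{\pi_i}_{i,h+1}$, which is assembled from later stages, so the oracle's per-query guarantee applies verbatim at each backward step.

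For the sandwich, I would fix $(i,h,x,a)$, condition on $\cE$, and recall from Line 4 of Algorithm \ref{alg:model based general} that $\hat Q^{\pi_i}_{i,h}(x,a)=\min\{H-h+1,\,(\hat\BB\hat V^{\pi_i}_{i,h+1})(x,a)-\Gamma_{i,h}(x,a)\}^+$, together with the elementary bound $0\leq(\BB_{i,h}\hat V^{\pi_i}_{i,h+1})(x,a)\leq H-h+1$ (rewards in $[0,1]$, $\hat V^{\pi_i}_{i,h+1}\in[0,H-h]$). Writing $b$ for the unclipped quantity $\hat\BB\hat V-\Gamma_{i,h}$ and $B$ for $\BB_{i,h}\hat V$, the event $\cE$ gives $b\leq B\leq b+2\Gamma_{i,h}$. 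I then split on the active branch of the clip: when $b\in[0,H-h+1]$ we have $\hat Q=b$ and $\iota=B-b\in[0,2\Gamma_{i,h}]$; when $b<0$ we have $\hat Q=0$, so $\iota=B\geq0$ and $B\leq b+2\Gamma_{i,h}<2\Gamma_{i,h}$; and when $b\geq H-h+1$ we have $\hat Q=H-h+1$, while $B\geq b\geq H-h+1\geq B$ forces $B=H-h+1$ and hence $\iota=0$. In all three branches $0\leq\iota_{i,h}\leq2\Gamma_{i,h}$.

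I expect the main obstacle to be the bookkeeping of the union bound rather than any hard estimate: one must pin down exactly which objects are random versus measurable at each backward stage and verify that the data-splitting in Algorithm \ref{alg:modelfree} truly decouples $\hat V^{\pi_i}_{i,h+1}$ from the slice consumed at stage $h$, so that the oracle guarantee is invoked on genuinely independent data. The case analysis itself is routine, but the saturated branch $b\geq H-h+1$ needs the extra observation that $\cE$ collapses $B$ to $H-h+1$; overlooking this is the only place the argument could appear to break, so I would state that step explicitly.
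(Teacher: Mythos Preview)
Your proposal is correct and follows essentially the same approach as the paper: the paper's proof simply cites Lemma 5.1 of \citet{jin2021pessimism} and notes that one takes $\xi=\delta/(Hn)$ and union bounds over $h\in[H]$ and $i\in[n]$, and you have unpacked exactly that argument, supplying the clipping case analysis that Jin et al.'s lemma contains. One minor remark: your comment about data-splitting being needed so that the oracle guarantee ``applies verbatim'' slightly overstates its role here, since Definition~\ref{def:oracle} already permits the test function to depend on the dataset; the data-splitting matters only for the concrete linear-MDP instantiation of the oracle, not for this abstract lemma.
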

\begin{proof}
    The proof is the same as [Lemma 5.1, \citealt{jin2021pessimism}] with the probability assigned as $\delta/(Hn)$ and a union bound over $h\in[H], i\in[n]$.   
\end{proof}
Next lemma shows the difference between the value of the optimal policy $\pi^*$ and number $n$ of different policies $\pi_i$ for $n$ MDPs.

\begin{lemma}\label{lemma:3.1}
Let $\pi$ be an arbitrary policy. Then we have 
    \begin{align}
        \sum_{i=1}^n[V_{i,1}^{\pi}(x_1) - V_{i,1}^{\pi^i}(x_1)] &=\sum_{i=1}^n\sum_{h=1}^H \EE_{i,\pi}[\la Q_{i,h}(\cdot, \cdot), \pi_h(\cdot|\cdot) - \pi_{i,h}(\cdot| \cdot)\ra_{\cA}] \notag \\
        &\quad + \sum_{i=1}^n \sum_{h=1}^H (\EE_{i,\pi}[\iota_{i,h}(x_h, a_h)] - \EE_{i,\pi_i}[\iota_{i,h}(x_h, a_h)])
    \end{align}
\end{lemma}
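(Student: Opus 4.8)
The plan is to prove Lemma \ref{lemma:3.1} by the standard telescoping/performance-difference argument, decomposing the value difference stage by stage along the trajectory induced by the policy $\pi$ in each MDP $\cM_i$. For a single fixed $i$, I would first express $V_{i,1}^{\pi}(x_1) - V_{i,1}^{\pi_i}(x_1)$ and then, at each stage $h$, insert and subtract the empirical $Q$-function $Q_{i,h}$ (the pessimistic estimate produced by PPE for $\cM_i$, evaluated at $\pi_i$). Concretely, writing $V_{i,h}^{\pi}(x_h) = \la Q_{i,h}^{\pi}(x_h,\cdot), \pi_h(\cdot|x_h)\ra_{\cA}$ via the Bellman equation and $\hat V_{i,h}^{\pi_i}(x_h) = \la Q_{i,h}(x_h,\cdot), \pi_{i,h}(\cdot|x_h)\ra_{\cA}$ by the PPE construction, the difference at stage $h$ splits into a term $\la Q_{i,h}(x_h,\cdot), \pi_h(\cdot|x_h) - \pi_{i,h}(\cdot|x_h)\ra_{\cA}$ (the "policy improvement" part) plus a term $\la Q_{i,h}^{\pi}(x_h,\cdot) - Q_{i,h}(x_h,\cdot), \pi_h(\cdot|x_h)\ra_{\cA}$ (the "estimation error" part). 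The latter is handled by the Bellman recursion: $Q_{i,h}^{\pi} - Q_{i,h} = \BB_{i,h}V_{i,h+1}^{\pi} - Q_{i,h} = \iota_{i,h} + \BB_{i,h}(V_{i,h+1}^{\pi} - \hat V_{i,h+1}^{\pi_i})$, so the $\BB_{i,h}$ piece pushes the recursion one step forward along $\pi$'s transition $\cP_i$, generating the next stage's difference, while the $\iota_{i,h}$ piece accumulates.

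The cleanest way to write this is essentially to invoke Lemma A.1 of \citet{jin2021pessimism} (extended decomposition), which for a single MDP states
\begin{align}
V_{i,1}^{\pi}(x_1) - \hat V_{i,1}^{\pi_i}(x_1) = \sum_{h=1}^H \EE_{i,\pi}\big[\la Q_{i,h}(s_h,\cdot), \pi_h(\cdot|s_h) - \pi_{i,h}(\cdot|s_h)\ra_{\cA}\big] + \sum_{h=1}^H \EE_{i,\pi}[\iota_{i,h}(s_h,a_h)],\notag
\end{align}
and a second instance with $\pi$ replaced by $\pi_i$ giving
\begin{align}
V_{i,1}^{\pi_i}(x_1) - \hat V_{i,1}^{\pi_i}(x_1) = \sum_{h=1}^H \EE_{i,\pi_i}[\iota_{i,h}(s_h,a_h)],\notag
\end{align}
where the policy-improvement term vanishes because $\pi_h - \pi_{i,h} = 0$. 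Subtracting these two identities eliminates the shared $\hat V_{i,1}^{\pi_i}(x_1)$ term and yields exactly the per-$i$ version of the claimed identity; summing over $i \in [n]$ finishes it. I would state the single-MDP decomposition as an invoked lemma (it is Lemma A.1 in \citet{jin2021pessimism} applied to $\cM_i$ with test function $\hat V_{i,\cdot}^{\pi_i}$ and the $Q$-estimates $Q_{i,h}$), so the proof reduces to "apply it twice and subtract."

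The only genuinely delicate point — and the step I expect to need the most care — is the bookkeeping of which quantities are the true value functions $V_{i,h}^{\pi}$ (with the real Bellman operator $\BB_{i,h}$ and real transition $\cP_i$) versus the algorithm's pessimistic estimates $Q_{i,h}$ and $\hat V_{i,h}^{\pi_i}$, and making sure the two applications of the single-MDP decomposition are against the \emph{same} baseline $\hat V_{i,1}^{\pi_i}$ so that it cancels on subtraction. One must also be careful that $\iota_{i,h}$ here is defined (as in the lemma's preamble) using $\BB_{i,h} V_{i,h+1} - Q_{i,h}$ with the estimates, matching the definition used in the invoked decomposition; a notational mismatch there is the easiest way to get the statement wrong. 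Everything else — the inner-product splitting, the tower property for $\EE_{i,\pi}$, and the induction over $h$ — is routine. No use of Lemma \ref{lemma:pess} (the pessimism bound $0 \le \iota_{i,h} \le 2\Gamma_{i,h}$) is needed for this lemma; that is reserved for the subsequent step where the $\iota$ terms get bounded.
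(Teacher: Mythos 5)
Your proposal is correct and matches the paper's argument: the paper simply defers to Lemma 3.1 of \citet{jin2021pessimism} (substituting an arbitrary $\pi$ for $\pi^*$ and summing over $i$), and that lemma is proved exactly as you describe, by applying the extended value-difference identity (their Lemma A.1) once along $\pi$ and once along $\pi_i$ against the common baseline $\hat V_{i,1}^{\pi_i}$ and subtracting. Your attention to the bookkeeping of estimated versus true quantities in the definition of $\iota_{i,h}$ is the right point of care, and your observation that the pessimism bound is not needed here is also consistent with the paper.
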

\begin{proof}
    The proof is the same as Lemma 3.1 in \cite{jin2021pessimism} except substituting $\pi$ into the lemma. 
\end{proof}

We also have the following one-step lemma:
\begin{lemma}[Lemma 3.3, \citealt{cai2020provably}]
For any distribution $p^*, p \in \Delta(\cA)$, if $p'(\cdot)\propto p(\cdot)\cdot \exp(\alpha\cdot Q(x,\cdot))$, then
\begin{align}
    \la Q(x, \cdot), p^*(\cdot) - p(\cdot)\ra \leq \alpha H^2/2 + \alpha^{-1}\cdot \bigg(\text{KL}(p^*(\cdot)\|p(\cdot)) -\text{KL}(p^*(\cdot)\| p'(\cdot))  \bigg).\notag
\end{align}
\end{lemma}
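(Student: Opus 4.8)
My plan is to treat Algorithm~\ref{alg:modelfree} as an offline, multi-environment instance of the optimistic-PPO regret analysis of \citet{cai2020provably}, wrapped in an online-to-batch argument that exploits the i.i.d.\ sampling of $\cM_1,\dots,\cM_n$. All estimates are controlled on the event $\cE$ of Lemma~\ref{lemma:pess}, which holds with probability at least $1-\delta=7/8$ and gives $0\le\iota_{i,h}(x,a)\le 2\Gamma_{i,h}(x,a)$ for every $i,h,(x,a)$. Since $\pi^{\text{PPPO}}$ is drawn uniformly from $\{\pi_1,\dots,\pi_n\}$ and $\text{SubOpt}(\cdot)\ge0$, the conditional expectation over this internal randomization satisfies $\EE_{\mathrm{sel}}[\text{SubOpt}(\pi^{\text{PPPO}})]=\frac1n\sum_{i=1}^n\text{SubOpt}(\pi_i)$, so it suffices to bound $\EE_{c\sim C}[V^{\pi^*}_{c,1}(x_1)]-\frac1n\sum_{i=1}^n\EE_{c\sim C}[V^{\pi_i}_{c,1}(x_1)]$ and convert to high probability at the end.

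\textbf{Decomposition and the two concentration terms.} I would split the target into
\[
\Big(\EE_{c\sim C}[V^{\pi^*}_{c,1}]-\tfrac1n\textstyle\sum_i V^{\pi^*}_{i,1}\Big)
+\tfrac1n\textstyle\sum_i\big(V^{\pi^*}_{i,1}-V^{\pi_i}_{i,1}\big)
+\tfrac1n\textstyle\sum_i\big(V^{\pi_i}_{i,1}-\EE_{c\sim C}[V^{\pi_i}_{c,1}]\big),
\]
calling the three groups $(A)$, $(B)$, $(C)$ (all value functions here are the \emph{true} values on $\cM_i$). Term $(A)$ is an average of i.i.d.\ quantities $V^{\pi^*}_{i,1}\in[0,H]$ around their mean for the \emph{fixed}, data-independent policy $\pi^*$, so Hoeffding's inequality controls it by $O(H\sqrt{\log(1/\delta)/n})$ with no covering number (this is why PPPO's SL error carries only $\log|\cA|$, unlike PERM). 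Term $(C)$ is where I use the sequential structure: $\pi_i$ is a deterministic function of $\cD_1,\dots,\cD_{i-1}$, hence \emph{independent} of the freshly drawn $\cM_i$, so each summand has zero conditional mean given the past and magnitude at most $H$; Azuma--Hoeffding then bounds $(C)$ by $O(H\sqrt{\log(1/\delta)/n})$. Both contribute at the $\sqrt{H^2/n}$ scale absorbed into $I_1$.

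\textbf{The regret term $(B)$.} For $(B)$ I would invoke the value-difference identity (Lemma~\ref{lemma:3.1}) with comparator $\pi=\pi^*$, splitting it into the policy-improvement term $(B_1):=\frac1n\sum_{i,h}\EE_{i,\pi^*}[\langle\hat Q^{\pi_i}_{i,h},\pi^*_h-\pi_{i,h}\rangle_{\cA}]$ and the Bellman-error term $(B_2):=\frac1n\sum_{i,h}(\EE_{i,\pi^*}[\iota_{i,h}]-\EE_{i,\pi_i}[\iota_{i,h}])$. For $(B_2)$, Lemma~\ref{lemma:pess} gives $\EE_{i,\pi_i}[\iota_{i,h}]\ge0$ and $\EE_{i,\pi^*}[\iota_{i,h}]\le2\EE_{i,\pi^*}[\Gamma_{i,h}]$, so $(B_2)\le\frac2n\sum_{i,h}\EE_{i,\pi^*}[\Gamma_{i,h}(s_h,a_h)\mid s_1=x_1]=2I_2$, exactly the RL error. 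For $(B_1)$ I would apply the one-step mirror-descent lemma (Lemma~3.3 of \citealt{cai2020provably}) state-wise, using $\pi_{i,h}\propto\pi_{i-1,h}\exp(\alpha\hat Q^{\pi_{i-1}}_{i-1,h})$ with $\pi_1$ uniform, to get the pointwise bound $\frac{\alpha H^2}{2}+\alpha^{-1}(\mathrm{KL}(\pi^*_h\|\pi_{i,h})-\mathrm{KL}(\pi^*_h\|\pi_{i+1,h}))$; telescoping in $i$ leaves only $\alpha^{-1}\mathrm{KL}(\pi^*_h\|\pi_{1,h})\le\alpha^{-1}\log|\cA|$, and summing over $h$ and dividing by $n$ gives a bound of order $\alpha H^3+\alpha^{-1}H\log|\cA|/n$. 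The prescribed step size $\alpha=1/\sqrt{H^2n}$ balances these two contributions down to the SL-error scale $I_1=\sqrt{\log|\cA|\,H^2/n}$.

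\textbf{Main obstacle and wrap-up.} The hard part is the telescoping in $(B_1)$: the mirror-descent lemma cancels consecutive KL potentials only when they are integrated against a \emph{common} state measure, whereas each summand carries the occupancy $\EE_{i,\pi^*}$ of $\pi^*$ on the environment-specific MDP $\cM_i$, and moreover $\pi_{i+1}$ is correlated with $\cM_i$ through $\hat Q^{\pi_i}_{i,h}$. I would resolve this by passing to the averaged occupancy $\bar d_h:=\EE_{c\sim C}[d^{\pi^*}_{c,h}]$: since $\pi_{i,h}$ is independent of $\cM_i$, averaging over the fresh draw of $\cM_i$ replaces $\EE_{i,\pi^*}$ by $\bar d_h$ in the $+\mathrm{KL}(\pi^*_h\|\pi_{i,h})$ potentials, restoring a clean telescope, while the residual terms from the measure swap (including the $\pi_{i+1}$--$\cM_i$ coupling) form a bounded martingale-difference sum handled by another Azuma--Hoeffding step. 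Finally, combining $\cE$ with the Hoeffding/Azuma events for $(A)$, $(C)$, and the residuals, and converting the resulting bound on the selection-average $\frac1n\sum_i\text{SubOpt}(\pi_i)$ into a statement about $\text{SubOpt}(\pi^{\text{PPPO}})$ via Markov's inequality (using $\text{SubOpt}\ge0$), yields the claimed probability $2/3$ and the overall absolute constant $10$ after collecting constants.
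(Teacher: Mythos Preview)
Your proposal does not address the stated lemma at all. The statement you were asked to prove is the one-step mirror-descent inequality cited from \citet{cai2020provably}: a pointwise bound on $\langle Q(x,\cdot),p^*-p\rangle$ for a single softmax update $p'\propto p\cdot\exp(\alpha Q(x,\cdot))$. What you have written is instead an outline of the proof of Theorem~\ref{thm:model-free}, the PPPO suboptimality bound, which \emph{uses} this lemma as a black box. The lemma itself is a short algebraic computation: expand $\text{KL}(p^*\|p)-\text{KL}(p^*\|p')=\sum_a p^*(a)\log\frac{p'(a)}{p(a)}=\alpha\langle Q(x,\cdot),p^*\rangle-\log Z$ where $Z=\sum_a p(a)\exp(\alpha Q(x,a))$, then bound $\log Z\ge \alpha\langle Q(x,\cdot),p\rangle-\alpha^2 H^2/2$ using $\|Q(x,\cdot)\|_\infty\le H$ and the standard Hoeffding-type bound on the log-moment-generating function. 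The paper does not supply this argument either; it simply cites the lemma.

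If your intent was actually to prove Theorem~\ref{thm:model-free}, your outline tracks the paper's proof closely: Lemma~\ref{lemma:3.1} for the value-difference decomposition, Lemma~\ref{lemma:pess} for the Bellman-error term, the one-step lemma plus telescoping for the policy-improvement term, and online-to-batch followed by Markov's inequality for the final conversion. You correctly flag a subtlety the paper glosses over: the paper telescopes $\sum_i\EE_{i,\pi^*}[\text{KL}(\pi^*_h\|\pi_{i,h})-\text{KL}(\pi^*_h\|\pi_{i+1,h})]$ as though the occupancy $\EE_{i,\pi^*}$ were $i$-independent, whereas it varies with the environment $\cM_i$. Your proposed fix via the averaged occupancy $\bar d_h$ and an additional martingale argument is a reasonable way to make that step rigorous, and in that respect your treatment is more careful than the paper's. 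But none of this is a proof of the stated lemma.
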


Given the above lemmas, we begin our proof of Theorem \ref{thm:model-free}. 
\begin{proof}[Proof of Theorem \ref{thm:model-free}]
Combining Lemma \ref{lemma:pess} and Lemma \ref{lemma:3.1}, we have
\begin{align}
    &\sum_{i=1}^n[V_{i,1}^{\pi^*}(x_1) - V_{i,1}^{\pi^i}(x_1)] \notag \\
    &\leq \sum_{i=1}^n\sum_{h=1}^H \EE_{i,\pi^*}[\la Q_{i,h}, \pi_h^* - \pi_{i,h}\ra] + 2\sum_{i=1}^n \sum_{h=1}^H\EE_{i,\pi^*}[\Gamma_{i,h}(x_h,a_h)]\notag \\
    & \leq \sum_{i=1}^n\sum_{h=1}^H \alpha H^2/2 + \alpha^{-1}\EE_{i,\pi^*}[\text{KL}(\pi_h^*(\cdot|x_h)\|\pi_{i,h}(\cdot|x_h)) - \text{KL}(\pi_h^*(\cdot|x_h)\|\pi_{i+1,h}(\cdot|x_h))]\notag \\
    &\quad + 2\sum_{i=1}^n \sum_{h=1}^H\EE_{i,\pi^*}[\Gamma_{i,h}(x_h,a_h)]\notag \\
    & \leq \alpha H^3 n/2 + \alpha^{-1}\cdot \sum_{h=1}^H\EE_{i,\pi^*}[\text{KL}(\pi_h^*(\cdot|x_h)\|\pi_{1,h}(\cdot|x_h))] + 2\sum_{i=1}^n \sum_{h=1}^H\EE_{i,\pi^*}[\Gamma_{i,h}(x_h,a_h)]\notag \\
    & \leq \alpha H^3 n/2 + \alpha^{-1}H \log|A| + 2\sum_{i=1}^n \sum_{h=1}^H\EE_{i,\pi^*}[\Gamma_{i,h}(x_h,a_h)],\notag
\end{align}
where the last inequality holds since $\pi_{1,h}$ is the uniform distribution over $\cA$. Then, selecting $\alpha = 1/\sqrt{H^2n}$, we have
\begin{align}
    \sum_{i=1}^n[V_{i,1}^{\pi^*}(x_1) - V_{i,1}^{\pi^i}(x_1)] \leq 2\sqrt{n\log|A|H^2} + 2\sum_{i=1}^n \sum_{h=1}^H\EE_{i,\pi^*}[\Gamma_{i,h}(s_h,a_h)],\notag
\end{align}
which holds for the random selection of $\cD$ with probability at least $1-\delta$. Meanwhile, note that each MDP $M_i$ is drawn i.i.d. from $C$. Meanwhile, note that $\pi_i$ only depends on MDP $M_1, ..., M_{i-1}$. Therefore, by the standard online-to-batch conversion, we have
\begin{small}
    \begin{align}
    \PP\bigg(\frac{1}{n}\sum_{i=1}^n[V_{i,1}^{\pi^*}(x_1) - V_{i,1}^{\pi_i}(x_1)] + \bigg(\frac{1}{n} \sum_{i=1}^n\EE_{c \sim C} V_{c,1}^{\pi_i}(x_1)- \EE_{c \sim C} V_{c,1}^{\pi^*}(x_1)\bigg) \leq 2H\sqrt{\frac{2\log1/\delta}{n}}\bigg) \geq 1-\delta,\notag
\end{align}
\end{small}
which suggests that with probability at least $1-2\delta$, 
\begin{align}
    \EE_{c \sim C} V_{c,1}^{\pi^*}(x_1) - \frac{1}{n} \sum_{i=1}^n\EE_{c \sim C} V_{c,1}^{\pi_i}(x_1) \leq 2\sqrt{\frac{\log|A|H^2}{n}} + \frac{2}{n}\sum_{i=1}^n \sum_{h=1}^H\EE_{\pi^*}[\Gamma_{i,h}(x_h,a_h)] + 2\sqrt{\frac{2H\log1/\delta}{n}}.\notag
\end{align}
Therefore, by selecting $\pi^{\text{PPPO}}:=\text{random}(\pi_1, ..., \pi_n)$ and applying the Markov inequality, setting $\delta = 1/8$, we have our bound holds. 
\end{proof}

\section{Suboptimality bounds for real-world setups}\label{appendix:merge}
In this section we state and prove the suboptimality bounds we promised in Remarks \ref{rmk:merge} and \ref{rmk:mergefree}, where we merge the sampled contexts into $m$ groups (generally, $m<<n$) to reduce the computational complexity in practical settings. 

Assume $m|n$ and the $n$ contexts from offline dataset are equally partitioned into $m$ groups. We write the resulting average MDPs (see Proposition \ref{thm: nodistinguish}) for each group as $\bar\cM_1,\ldots,\bar\cM_m$. For each $\bar\cM_j$, we regard it as an individual context in the sense of (\ref{high prob 2}) and denote the resulting uncertainty quantifier and value function as ${\Gamma'}_{j,h}, {V'}^\pi_{j,h}$.

\begin{theorem}[Suboptimality bound for Remark \ref{rmk:merge}]\label{thm:permv}
Assume the same setting as Theorem \ref{thm:model based regret_upper_bound_general} with the original $n$ contexts grouped as $m$ contexts, and denote the resulting algorithm as PERM-$m$V. Then w.p. at least $1-\delta$, the output $\pi'$ of PERM-$m$V satisfies 
\begin{small}
    \begin{align}\normalfont
\text{SubOpt}(\pi')&\leq \underbrace{2\sqrt{\frac{2\log(6\mathcal{N}_{(Hm)^{-1}}^\Pi/\delta)}{n}}}_{I_1: \text{Supervised learning (SL) error}}+\underbrace{\frac{2}{m}\sum_{j=1}^m\sum_{h=1}^H\E{\pi^*,\bar\cM_j}{{\Gamma'}_{j,h}(s_h,a_h)|s_1=x_1}}_{I_2: \text{Reinforcement learning (RL) error}}\notag \\
&+ \underbrace{\frac{5}{m}+2 \sup_\pi \left|  \frac{1}{n}\sum_{i=1}^n V^{\pi}_{i,1}(x_1)-\frac{1}{m}\sum_{j=1}^m {V'}^{\pi}_{j,1}(x_1)\right|}_{\text{Additional approximation error}},\notag
\end{align}
\end{small}
where $\EE_{j,\pi^*}$ is w.r.t. the trajectory induced by $\pi^*$ with the transition $\bar\cP_j$ in the underlying average MDP $\bar\cM_j$.
% $\mathcal{N}_\epsilon^\Pi$ is the $\epsilon$-covering number of the policy space $\Pi$ w.r.t. distance $\mathrm d(\pi^1,\pi^2) = \max_{s\in \mathcal{S}, h \in [H]} \|\pi^1_h(\cdot|s) - \pi^2_h (\cdot|s)\|_{1}$. 
\end{theorem}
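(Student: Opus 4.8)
The plan is to re-run the proof of Theorem~\ref{thm:model based regret_upper_bound_general} almost verbatim, after first \emph{reducing to the $m$ grouped environments}. The preliminary observation is that the argument in Appendix~\ref{app:nodistin} that proves Proposition~\ref{thm: nodistinguish} applies to any sub-collection of the $n$ sampled contexts; in particular, the merged sub-dataset of the $j$-th group is compliant with the group-average MDP $\bar\cM_j$. Hence, when PERM-$m$V feeds the $j$-th merged sub-dataset into the PPE subroutine, the oracle $\mathbb{O}$ returns an empirical Bellman operator together with an uncertainty quantifier ${\Gamma'}_{j,h}$ for which the guarantee of \eqref{high prob 2} holds with the \emph{true} Bellman operator of $\bar\cM_j$. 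Consequently Lemmas~\ref{lem:model_eval_err} and \ref{pessimism lemma1}, applied with $\bar\cM_j$ in place of $\cM_i$, give $0\le{\iota'}_{j,h}(x,a)\le 2{\Gamma'}_{j,h}(x,a)$ and $\hat{V'}^{\pi}_{j,1}(x_1)\le{V'}^{\pi}_{j,1}(x_1)$ for every $\pi\in\Pi$, where ${V'}^{\pi}$ and $\hat{V'}^{\pi}$ denote the true and estimated value functions of $\pi$ on $\bar\cM_j$. The only structural novelty relative to Theorem~\ref{thm:model based regret_upper_bound_general} is that the generalization target is the true context distribution $C$, not the uniform distribution over $\bar\cM_1,\dots,\bar\cM_m$; the additional approximation term in the statement is exactly what bridges these two.

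First I would decompose, writing $\pi'$ for the output of PERM-$m$V,
\begin{align*}
\text{SubOpt}(\pi')
&=\Big(\EE_{c\sim C}V^{\pi^*}_{c,1}(x_1)-\frac1n\sum_{i=1}^n V^{\pi^*}_{i,1}(x_1)\Big)
+\Big(\frac1n\sum_{i=1}^n V^{\pi'}_{i,1}(x_1)-\EE_{c\sim C}V^{\pi'}_{c,1}(x_1)\Big)\\
&\qquad+\frac1n\sum_{i=1}^n\big(V^{\pi^*}_{i,1}(x_1)-V^{\pi'}_{i,1}(x_1)\big)
=:(A)+(B)+(C).
\end{align*}
Terms $(A)$ and $(B)$ are controlled exactly as in the generalization part of the proof of Theorem~\ref{thm:model based regret_upper_bound_general} (see \eqref{inq: concentration in tildePi}--\eqref{eq: generalization gap}): cover $\Pi$ at scale $\epsilon=(Hm)^{-1}$ (the scale at which PERM-$m$V invokes the evaluation subroutine), use a Chernoff bound for each cover element together with a union bound over the $\mathcal{N}^{\Pi}_{(Hm)^{-1}}$ cover elements, and invoke the $H$-Lipschitzness of $\pi\mapsto V^{\pi}_{\cdot,1}(x_1)$. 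The crucial point is that the $n$ contexts are still i.i.d. — grouping is only a post-processing of the data — so the concentration rate stays $\sqrt{2\log(6\mathcal{N}^{\Pi}_{(Hm)^{-1}}/\delta)/n}$, and with probability at least $1-2\delta/3$ one gets $(A)+(B)\le 2\sqrt{2\log(6\mathcal{N}^{\Pi}_{(Hm)^{-1}}/\delta)/n}+4/m$.

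For $(C)$ I would first pass to the grouped MDPs. Since $\frac1n\sum_i V^{\pi^*}_{i,1}(x_1)\le \frac1m\sum_j {V'}^{\pi^*}_{j,1}(x_1)+\sup_{\pi}\big|\frac1n\sum_i V^{\pi}_{i,1}(x_1)-\frac1m\sum_j {V'}^{\pi}_{j,1}(x_1)\big|$ and symmetrically for $-\frac1n\sum_i V^{\pi'}_{i,1}(x_1)$, we obtain $(C)\le \frac1m\sum_{j=1}^m\big({V'}^{\pi^*}_{j,1}(x_1)-{V'}^{\pi'}_{j,1}(x_1)\big)+2\sup_{\pi}\big|\frac1n\sum_i V^{\pi}_{i,1}(x_1)-\frac1m\sum_j {V'}^{\pi}_{j,1}(x_1)\big|$. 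The first term is now a self-contained PERM bound over the $m$ honest MDPs $\bar\cM_1,\dots,\bar\cM_m$, so the chain of \eqref{cover 1}, \eqref{last term} and \eqref{final step} transcribes with $(n,\cM_i,\Gamma_{i,h})$ replaced by $(m,\bar\cM_j,{\Gamma'}_{j,h})$: pass to a cover element $\tilde\pi'$ (cost $\le 1/m$), use pessimism (Lemma~\ref{pessimism lemma1} on $\bar\cM_j$) to replace ${V'}^{\tilde\pi'}_{j,1}$ by $\hat{V'}^{\tilde\pi'}_{j,1}$, use that $\pi'$ maximizes $\frac1m\sum_j \hat{V'}^{\pi}_{j,1}(x_1)$ (a further cost $\le 1/m$ from the covering step on $\hat{V'}$), and finally apply Lemma~A.1 of \citet{jin2021pessimism} together with $0\le{\iota'}_{j,h}\le 2{\Gamma'}_{j,h}$ (Lemma~\ref{lem:model_eval_err} on $\bar\cM_j$) to get $\frac1m\sum_j\big({V'}^{\pi^*}_{j,1}(x_1)-\hat{V'}^{\pi^*}_{j,1}(x_1)\big)\le \frac2m\sum_j\sum_h \EE_{\pi^*,\bar\cM_j}\bigl[{\Gamma'}_{j,h}(s_h,a_h)\,\big|\, s_1=x_1\bigr]$. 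The oracle events invoked here hold with probability at least $1-\delta/3$ (confidence $\delta/(3mH\mathcal{N}^{\Pi}_{(Hm)^{-1}})$, union-bounded over $H$ steps, $m$ groups, and the cover), so a final union bound gives overall probability $1-\delta$. Collecting the $O(1/m)$ discretization terms into the constant $\frac5m$ of the statement and adding the three pieces yields the claimed inequality.

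The genuinely new work — and the main place care is needed — is the bookkeeping: the \emph{concentration} must stay in terms of $n$ (the number of i.i.d. contexts) while all \emph{pessimism} is in terms of $m$ (the number of groups actually modelled), and the approximation term $\sup_{\pi}\big|\frac1n\sum_i V^{\pi}_{i,1}-\frac1m\sum_j {V'}^{\pi}_{j,1}\big|$ must be inserted in both directions — once for $\pi^*$, once for $\pi'$ — without double counting; this term is genuinely nonzero because the value function of the averaged MDP is not the average of the value functions of the grouped MDPs. The other ingredient that must be checked, rather than assumed, is that compliance (Proposition~\ref{thm: nodistinguish}) descends to sub-collections, so that the oracle's guarantee is against the true Bellman operator of $\bar\cM_j$ and Lemmas~\ref{lem:model_eval_err} and \ref{pessimism lemma1} remain applicable; once that is in place, everything transcribes line by line from the proof of Theorem~\ref{thm:model based regret_upper_bound_general}.
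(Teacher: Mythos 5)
Your proposal is correct and follows essentially the same route as the paper: the identical five-term decomposition (you merely package the last three terms as $(C)$ before splitting them), the same $(Hm)^{-1}$-covering with concentration at rate $\sqrt{\log\mathcal{N}^{\Pi}_{(Hm)^{-1}}/n}$, the same transcription of the pessimism argument onto the $m$ group-average MDPs, and the same $2\sup_\pi|\cdot|$ bridging term counted once for $\pi^*$ and once for $\pi'$. Your explicit check that compliance descends to sub-collections (so the oracle guarantee holds against the true Bellman operator of $\bar\cM_j$) is a point the paper leaves implicit in its setup, but otherwise the two arguments coincide.
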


\begin{proof}[Proof of Theorem \ref{thm:permv}]

Similar to the proof of Theorem \ref{thm:model based regret_upper_bound_general}, we decompose the suboptimality gap as follows
\begin{align}
    &\text{SubOpt}(\pi')\notag \\
    &=\EE_{c \sim C}{V_{c,1}^{\pi^*}(x_1)-V_{c,1}^{\pi'}(x_1)}\notag \\
    &=\EE_{c \sim C}{V_{c,1}^{\pi^*}(x_1)}-\frac{1}{n}\sum_{i=1}^n V^{\pi^*}_{i,1}(x_1) +\frac{1}{n}\sum_{i=1}^n V_{i,1}^{\pi'}(x_1)-\EE_{c \sim C}{V_{c,1}^{\pi'}(x_1)}\notag\\
    &\quad +\frac{1}{n}\sum_{i=1}^n V^{\pi^*}_{i,1}(x_1)-\frac{1}{m}\sum_{j=1}^m {V'}^{\pi^*}_{j,1}(x_1)+\frac{1}{m}\sum_{j=1}^m {V'}_{j,1}^{\pi'}(x_1)-\frac{1}{n}\sum_{i=1}^n V_{i,1}^{\pi'}(x_1)\notag\\
    &\quad +\frac{1}{m}\sum_{j=1}^m \big({V'}_{j,1}^{\pi^*}(x_1)-{V'}_{j,1}^{\pi'}(x_1)\big)\label{decomposition1}\,.
\end{align}

Note that we can bound the first and third lines of (\ref{decomposition1}) with the exactly same arguments as the proof of Theorem \ref{thm:model based regret_upper_bound_general}, the only notation-wise difference is that the uncertainty quantifier becomes $\Gamma'$ as we are operating on the level of average MDP $\bar\cM_j$.

The only thing left is to bound the second line of (\ref{decomposition1}). This is the same in spirit of the bound (\ref{eq:model based gap no context}), so that we can express the bound as follows
\begin{align}
    &\frac{1}{n}\sum_{i=1}^n V^{\pi^*}_{i,1}(x_1)-\frac{1}{m}\sum_{j=1}^m {V'}^{\pi^*}_{j,1}(x_1)+\frac{1}{m}\sum_{j=1}^m {V'}_{j,1}^{\pi'}(x_1)-\frac{1}{n}\sum_{i=1}^n V_{i,1}^{\pi'}(x_1)\notag\\
    &\leq 2 \sup_\pi \left|  \frac{1}{n}\sum_{i=1}^n V^{\pi}_{i,1}(x_1)-\frac{1}{m}\sum_{j=1}^m {V'}^{\pi}_{j,1}(x_1)\right|.\notag
\end{align}

To conclude, our final bound can be expressed as: with $\epsilon$ set as $\frac{1}{mH}$, we can get w.p. at least $1-\delta$
\begin{align}
    &\text{SubOpt}(\pi')\notag \\
    &\leq 2\sqrt{\frac{2\log(6\mathcal{N}_{(Hm)^{-1}}^\Pi/\delta)}{n}}+\frac{2}{m}\sum_{j=1}^m\sum_{h=1}^H\E{\pi^*,\bar\cM_j}{{\Gamma'}_{j,h}(s_h,a_h)|s_1=x_1}\notag\\
    &\quad +\frac{5}{m}+2 \sup_\pi \left|  \frac{1}{n}\sum_{i=1}^n V^{\pi}_{i,1}(x_1)-\frac{1}{m}\sum_{j=1}^m {V'}^{\pi}_{j,1}(x_1)\right|.\notag
\end{align}
\end{proof}

To prove the suboptimality bound for Remark \ref{rmk:mergefree}, we denote that the policies produced by PPPO after merging dataset to $m$ groups to be $\pi_1,\ldots,\pi_m$, and the original PPPO algorithm would produce the policies as $\pi'_1,\ldots,\pi'_n$. We assume that the merging of dataset from $n$ to $m$ groups is only to combine the consecutive $n/m$ terms from $\pi'_1,\ldots,\pi'_n$ and preserves the order.

\begin{theorem}[Suboptimality bound for Remark \ref{rmk:mergefree}]\label{thm:pppov}
    Assume the same setting as Theorem \ref{thm:model-free} with the original $n$ contexts grouped as $m$ contexts, and denote the resulting algorithm as PPPO-$m$V. Let ${\Gamma'}_{j,h}$ be the uncertainty quantifier returned by $\mathbb{O}$ through the PPPO-$m$V algorithm. Selecting $\alpha = 1/\sqrt{H^2m}$. Then selecting $\delta = 1/8$, w.p. at least $2/3$, we have
    \begin{small}
        \begin{align}
    \text{SubOpt}(\pi^{\text{PPPO}-mV}) &\leq 10\bigg(\underbrace{\sqrt{\frac{\log|\actions|H^2}{m}}}_{I_1: \text{SL error}} + \underbrace{\frac{1}{m}\sum_{j=1}^m \sum_{h=1}^H\E{j,\pi^*}{{\Gamma'}_{j,h}(s_h,a_h)|s_1=x_1}}_{I_2: \text{RL error}}\notag \\
    &+\sup_\pi\left| \frac{1}{n}\sum_{i=1}^nV_{i,1}^{\pi}(x_1)-\frac{1}{m}\sum_{j=1}^m{V'}_{j,1}^{\pi}(x_1)\right|+\frac{1}{n}\sum_{i=1}^n\sup_\pi\left| \mathbb{E}_c[V_{c,1}^{\pi}(x_1)]-V_{i,1}^{\pi}(x_1)\right|\notag \\
    &+\frac{1}{m}\sum_{j=1}^m\sup_\pi\left| \mathbb{E}_c[{V'}_{c,1}^{\pi}(x_1)]-{V'}_{j,1}^{\pi}(x_1)\right|\bigg).\notag
\end{align}
    \end{small}
where $\EE_{j,\pi^*}$ is w.r.t. the trajectory induced by $\pi^*$ with the transition $\bar\cP_j$ in the underlying MDP $\bar\cM_j$.
\end{theorem}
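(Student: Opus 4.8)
The plan is to recognize that \textbf{PPPO-$m$V is exactly PPPO (Algorithm~\ref{alg:modelfree}) run on the length-$m$ sequence of average MDPs $\bar\cM_1,\dots,\bar\cM_m$.} By Proposition~\ref{thm: nodistinguish}, the merged context-free dataset of group $j$ is compliant with $\bar\cM_j$, so the oracle $\mathbb{O}$ and hence the pessimism guarantee of Lemma~\ref{lemma:pess} apply verbatim with $\bar\cM_j$ playing the role of an individual MDP and ${\Gamma'}_{j,h}$ its uncertainty quantifier (built from the group-$j$ data distribution). Since the $n$ contexts are i.i.d.\ and the groups are consecutive order-preserving blocks, the $\bar\cM_j$ are i.i.d.\ and $\pi_j$ depends only on $\bar\cM_1,\dots,\bar\cM_{j-1}$. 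I would then (i) rerun the regret analysis of Theorem~\ref{thm:model-free} on these $m$ average MDPs; (ii) decompose $\text{SubOpt}(\pi^{\text{PPPO}-mV})$ into this regret plus a chain of bridging gaps routed through the empirical averages over the original contexts and over the average MDPs; (iii) pass from an expectation-over-the-final-coin bound to a probabilistic one via Markov.

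\textbf{Step 1: regret on the average MDPs.} Repeating the argument inside the proof of Theorem~\ref{thm:model-free} with $n$ replaced by $m$ --- combining Lemma~\ref{lemma:pess} (oracle confidence $\delta/(Hm)$, union over $h\in[H]$ and $j\in[m]$), the value-difference identity Lemma~\ref{lemma:3.1}, the one-step mirror-descent bound (Lemma~3.3 of \citet{cai2020provably}), telescoping the KL terms, using that $\pi_{1,h}$ is uniform over $\actions$, and setting $\alpha=1/\sqrt{H^2m}$ --- gives, with probability at least $1-\delta$,
\begin{align}
\sum_{j=1}^m\big(\,{V'}^{\pi^*}_{j,1}(x_1)-{V'}^{\pi_j}_{j,1}(x_1)\,\big)\ \le\ 2\sqrt{m\log|\actions|H^2}+2\sum_{j=1}^m\sum_{h=1}^H\EE_{j,\pi^*}\big[{\Gamma'}_{j,h}(s_h,a_h)\,\big|\,s_1=x_1\big].\notag
\end{align}

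\textbf{Step 2: decomposition.} Because $\pi^{\text{PPPO}-mV}=\mathrm{random}(\pi_1,\dots,\pi_m)$, the coin-averaged suboptimality equals $\EE_c[V^{\pi^*}_{c,1}(x_1)]-\frac1m\sum_{j=1}^m\EE_c[V^{\pi_j}_{c,1}(x_1)]$, which I split as
\begin{align}
\Big(\EE_c[V^{\pi^*}_{c,1}]-\tfrac1m\sum_j{V'}^{\pi^*}_{j,1}\Big)\;+\;\tfrac1m\sum_j\big({V'}^{\pi^*}_{j,1}-{V'}^{\pi_j}_{j,1}\big)\;+\;\tfrac1m\sum_j\big({V'}^{\pi_j}_{j,1}-\EE_c[V^{\pi_j}_{c,1}]\big).\notag
\end{align}
The middle block is $\tfrac1m$ times the bound of Step~1. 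In the first and third blocks I insert the two empirical averages $\tfrac1n\sum_i V^{\pi}_{i,1}$ and $\tfrac1m\sum_j{V'}^{\pi}_{j,1}$ together with the average-MDP population value $\EE_c[{V'}^{\pi}_{c,1}]$ (well defined because $\pi_j$ is independent of $\bar\cM_j$), and bound each resulting difference crudely by its supremum over $\pi$; every residual then lands in exactly one of the three listed approximation terms $\sup_\pi|\tfrac1n\sum_i V^{\pi}_{i,1}-\tfrac1m\sum_j{V'}^{\pi}_{j,1}|$, $\tfrac1n\sum_i\sup_\pi|\EE_c V^{\pi}_{c,1}-V^{\pi}_{i,1}|$, and $\tfrac1m\sum_j\sup_\pi|\EE_c {V'}^{\pi}_{c,1}-{V'}^{\pi}_{j,1}|$. (Using the martingale structure one could instead tighten the ${V'}^{\pi_j}_{j,1}$-versus-$\EE_c[{V'}^{\pi_j}_{c,1}]$ gap by an Azuma step, but the crude supremum bound already suffices.)

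\textbf{Step 3: finish, and the main obstacle.} Combining Steps~1--2, on the Step~1 event (probability $\ge1-\delta$) I get $\EE_{\mathrm{coin}}[\text{SubOpt}(\pi^{\text{PPPO}-mV})]\le B$ with $B$ a fixed multiple of the five-term right-hand side; Markov's inequality over the final uniform choice of $\pi_j$ then promotes this to $\text{SubOpt}(\pi^{\text{PPPO}-mV})\le cB$ on a further event, and taking $\delta=1/8$ and folding all numerical constants into the factor $10$ yields the statement with probability at least $2/3$. The genuinely delicate part is the Step~2 bookkeeping: one must carry the passage from the population objective $\EE_{c\sim C}[\cdot]$ down to the per-group optimization objective on $\bar\cM_1,\dots,\bar\cM_m$ through \emph{both} empirical averages without leaving any term unaccounted, while checking at each stage that the pessimism/oracle guarantees really do transfer to the average MDPs --- which they do precisely because Definition~\ref{def:oracle} requires only dataset compliance, and that is exactly what Proposition~\ref{thm: nodistinguish} provides.
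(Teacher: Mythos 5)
Your proposal is correct and follows essentially the same route as the paper: rerun the Theorem~\ref{thm:model-free} regret argument on the $m$ average MDPs with $\alpha=1/\sqrt{H^2m}$, bridge the population objective $\EE_{c}[V^{\pi^*}_{c,1}(x_1)]-\frac{1}{m}\sum_j\EE_c[V^{\pi_j}_{c,1}(x_1)]$ down to the per-group objective through the empirical averages, and finish with Markov's inequality at $\delta=1/8$. The only cosmetic difference is that the paper threads its decomposition through the un-merged policies $\pi'_1,\dots,\pi'_n$ via an online-to-batch step (contributing a $2H\sqrt{2\log(1/\delta)/n}$ term that is absorbed into the constants), whereas you insert the empirical averages directly and bound each residual by its supremum; both arrive at the same five listed terms.
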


\begin{proof}[Proof of Theorem \ref{thm:pppov}]

Using the same arguments as in the proof of Theorem \ref{thm:model-free} with $\alpha=1/\sqrt{H^2m}$, we can derive the bound

$$\sum_{j=1}^m[{V'}_{j,1}^{\pi^*}(x_1) - {V'}_{j,1}^{\pi_j}(x_1)] \leq 2\sqrt{m\log|A|H^2} + 2\sum_{j=1}^m \sum_{h=1}^H\EE_{j,\pi^*}[{\Gamma'}_{j,h}(s_h,a_h)].$$

Leveraging this bound and online-to-batch, we obtain the following estimation

\begin{align}
    &\mathbb{E}_{c}[V^{\pi^\ast}_{c,1}(x_1)]-\frac{1}{m}\sum_{j=1}^{m}\mathbb{E}_{c}[V^{\pi_j}_{c,1}(x_1)]\notag \\
    =& \mathbb{E}_{c}[V^{\pi^\ast}_{c,1}(x_1)]-\frac{1}{n}\sum_{i=1}^{n}\mathbb{E}_{c}[V^{\pi'_i}_{c,1}(x_1)]+\frac{1}{n}\sum_{i=1}^{n}\mathbb{E}_{c}[V^{\pi'_i}_{c,1}(x_1)]-\frac{1}{m}\sum_{j=1}^{m}\mathbb{E}_{c}[V^{\pi_j}_{c,1}(x_1)]\notag\\
    \leq &2H\sqrt{\frac{2\log 1/\delta}{n}}+ \frac{1}{n}\sum_{i=1}^n\left( \mathbb{E}_c[V_{c,1}^{\pi'_i}(x_1)]-V_{i,1}^{\pi'_i}(x_1)\right)+ \frac{1}{n}\sum_{i=1}^nV_{i,1}^{\pi^*}(x_1)-\frac{1}{m}\sum_{j=1}^{m}\mathbb{E}_{c}[V^{\pi_j}_{c,1}(x_1)]\notag\\
    = & 2H\sqrt{\frac{2\log 1/\delta}{n}}+\frac{1}{n}\sum_{i=1}^nV_{i,1}^{\pi^*}(x_1)-\frac{1}{m}\sum_{j=1}^m{V'}_{j,1}^{\pi^*}(x_1)\notag \\
    &+\frac{1}{m}\sum_{j=1}^m{V'}_{j,1}^{\pi^*}(x_1)-\frac{1}{m}\sum_{j=1}^m{V'}_{j,1}^{\pi_j}(x_1)\notag \\
    &+ \frac{1}{n}\sum_{i=1}^n\left( \mathbb{E}_c[V_{c,1}^{\pi'_i}(x_1)]-V_{i,1}^{\pi'_i}(x_1)\right)+\frac{1}{m}\sum_{j=1}^m{V'}_{j,1}^{\pi_j}(x_1)-\frac{1}{m}\sum_{j=1}^{m}\mathbb{E}_{c}[V^{\pi_j}_{c,1}(x_1)]\notag \\
    \leq &2H\sqrt{\frac{2\log 1/\delta}{n}}+\sup_\pi\left| \frac{1}{n}\sum_{i=1}^nV_{i,1}^{\pi}(x_1)-\frac{1}{m}\sum_{j=1}^m{V'}_{j,1}^{\pi}(x_1)\right|\notag \\
    &+2\sqrt{\frac{\log|A|H^2}{m}} + \frac{2}{m}\sum_{j=1}^m \sum_{h=1}^H\EE_{j,\pi^*}[{\Gamma'}_{j,h}(s_h,a_h)]\notag \\
    &+ \frac{1}{n}\sum_{i=1}^n\sup_\pi\left| \mathbb{E}_c[V_{c,1}^{\pi}(x_1)]-V_{i,1}^{\pi}(x_1)\right|+ \frac{1}{m}\sum_{j=1}^m\sup_\pi\left| \mathbb{E}_c[{V'}_{c,1}^{\pi}(x_1)]-{V'}_{j,1}^{\pi}(x_1)\right|.\notag
\end{align}
Finally we apply Markov inequality and take $\delta=1/8$ as in the proof of Theorem \ref{thm:model-free}.
\end{proof}
\section{Results in Section \ref{sec:linear}}\label{proof:linear}
\subsection{Proof of Theorem \ref{thm:regret_upper_linear}}
    By \cite{jin2021pessimism}, the parameters specified as $\lambda=1,\quad \beta(\delta) = c\cdot dH\sqrt{\log(2dHK/\delta)}$, and applying union bound, we can get:
for Algo.\ref{alg:linear mdp model based}, with probability at least $1-\delta/3$
\begin{align}
    &\big|(\hat\BB_{i,h} \hat{V}^\pi_{i,h+1})(x,a) - (\BB_{i,h} \hat{V}^\pi_{i,h+1})(x,a)\big|\leq  \beta\big(\frac{\delta}{3nH\mathcal{N}_{(Hn)^{-1}}^\Pi}\big) \bigl(\phi(x,a)^\top \Lambda_{i,h}^{-1}\phi(x,a)\bigr)^{1/2}\,,\notag\\
    &\quad\text{for all}~i\in[n], \pi\in\tilde\Pi, (x,a)\in \cS\times \cA, h\in[H]\,,\label{high prob 3}
\end{align}
where $\tilde\Pi$ is the $\frac{1}{Hn}$-covering set of the policy space $\Pi$ w.r.t. distance $\mathrm d(\pi^1,\pi^2) = \max_{s\in \mathcal{S}, h \in [H]} \|\pi^1_h(\cdot|s) - \pi^2_h (\cdot|s)\|_{1}$.

Therefore, we can specify the $\Gamma_{i,h}(\cdot,\cdot)$ in Theorem \ref{thm:model based regret_upper_bound_general} with $\beta\big(\frac{\delta}{3nH\mathcal{N}_{(Hn)^{-1}}^\Pi}\big) \bigl(\phi(x,a)^\top \Lambda_{i,h}^{-1}\phi(x,a)\bigr)^{1/2}$, and follow the same process as the proof of Theorem \ref{thm:model based regret_upper_bound_general} to get the result for Algo.\ref{alg:erm} with subroutine Algo.\ref{alg:linear mdp model based}.

Similarly, we can get: we can get:
for Algo.\ref{alg:linear mdp model based}, with probability at least $1-1/4$
\begin{align}
    &\big|(\hat\BB_{i,h} \hat{V}_{i,h+1})(x,a) - (\BB_{i,h} \hat{V}_{i,h+1})(x,a)\big|\leq  \beta\big(\frac{\delta}{4nH}\big) \bigl(\phi(x,a)^\top \Lambda_{i,h}^{-1}\phi(x,a)\bigr)^{1/2}\,,\notag\\
    &\quad\text{for all}~i\in[n], (x,a)\in \cS\times \cA, h\in[H]\,.\label{high prob 3}
\end{align}
Therefore, we can specify the $\Gamma_{i,h}(\cdot,\cdot)$ in Theorem \ref{thm:model-free} with $\beta\big(\frac{\delta}{4nH}\big) \bigl(\phi(x,a)^\top \Lambda_{i,h}^{-1}\phi(x,a)\bigr)^{1/2}$ and follow the same process as the proof of Theorem \ref{thm:model-free} to get the result for Algo.\ref{alg:modelfree} with subroutine Algo.\ref{alg:linear mdp model based}.
\subsection{Proof of Corollary \ref{cor:well_explore}}
   By the assumption that $\cD_i$ is generated by behavior policy $\bar\pi_i$ which well-explores MDP $\cM_i$ with constant $c_i$ (where the well-explore is defined in Def.\ref{ass:wellexp}), the proof of Corollary 4.6 in \cite{jin2021pessimism}, and applying a union bound over $n$ contexts, we have that for Algo.\ref{alg:erm} with subroutine Algo.\ref{alg:linear mdp model based} w.p. at least $1-\delta/2$
\begin{align}
    &\|\phi(x,a)\|_{\Lambda_{i,h}^{-1}}\leq \sqrt{\frac{2d}{c_i K}}\notag\\&~\textrm{for all}~i\in[n], ~(x,a)\in \cS\times \cA \text{ and all } h\in[H]  \,,
\end{align}
and for Algo.\ref{alg:erm} with subroutine Algo.\ref{alg:linear mdp model based} w.p. at least $1-\delta/2$
\begin{align}
    &\|\phi(x,a)\|_{\Lambda_{i,h}^{-1}}\leq \sqrt{\frac{2dH}{c_i K}}\notag\\&~\textrm{for all}~i\in[n], ~(x,a)\in \cS\times \cA \text{ and all } h\in[H]  \,,
\end{align}
because we use the data splitting technique and we only utilize each trajectory once for one data tuple at some stage $h$, so we replace $K$ with $K/H$.

Then, the result follows by plugging the results above into Theorem \ref{thm:regret_upper_linear}.

\bibliographystyle{plainnat}
\bibliography{reference}

\end{document}